%% file: neurips/neurips_2026.tex
\documentclass{article}

\usepackage[preprint,nonatbib]{neurips/neurips_2026}
\usepackage[numbers]{natbib}

\usepackage[utf8]{inputenc} 
\usepackage[T1]{fontenc}    
\usepackage{hyperref}       
\usepackage{url}            
\usepackage{booktabs}       
\usepackage{amsfonts}       
\usepackage{nicefrac}       
\usepackage{microtype}      
\usepackage{xcolor}         

\usepackage{amsmath,amsthm,amssymb,dsfont}
\usepackage{cleveref}
\usepackage{tikz}
\usetikzlibrary{positioning,fit,arrows.meta}
\usepackage{enumitem}
\usepackage{csquotes}
\usepackage{mathtools}

\input{neurips/macro}

\title{Fixed Universal Transformers}

\author{
Jingwen Liu \\
Columbia University \\
New York, NY \\
\texttt{jingwenliu@cs.columbia.edu} \\
\And
Alexandr Andoni \\
Columbia University \\
New York, NY \\
\texttt{andoni@cs.columbia.edu} \\
\And
Daniel Hsu \\
Columbia University \\
New York, NY \\
\texttt{djhsu@cs.columbia.edu}
}

\usepackage{color-edits}
\addauthor{dh}{orange}

\let\citet\citep

\usepackage[export]{adjustbox}

\begin{document}

\maketitle

\begin{abstract}
%
%
%
We introduce \emph{universal transformers}: fixed transformers that can simulate any transformer in a given class via a suitable input embedding.
Analogous to a universal Turing machine, the input embedding encodes a description of the target model while all internal parameters remain fixed. 
We provide explicit sparse constructions achieving universality when the embedding dimension is sufficiently large, and further show that universality is generic: randomly initialized transformers are universal almost surely, which aligns with recent empirical results of Zhong and Andreas (2024).
We empirically validate our theory on the algorithmic tasks of parenthesis balancing and multi-hop reasoning.
Our results suggest that much of a transformer’s expressive power may reside in its input representation rather than its learned weights.
\end{abstract}

\section{Introduction}

We introduce the notion of a \emph{universal transformer}: a fixed transformer that can simulate any transformer in a prescribed class by changing only the input embedding.
In a universal transformer, the embedding plays the role of a program: it encodes the parameters of the target transformer, while the attention and value parameters of the universal transformer remain fixed.
This notion can be regarded as a deep learning
analogue of the concept of universality from the theory of computation, of which universal Turing machines~\citep{turing1936computable} and
universal circuits~\citep{valiant1976universal} are the prime examples.
A universal Turing machine is a single Turing machine that is capable of simulating any other Turing machine (provided in a suitable encoding) as input, much like how we regard general purpose computers today~\citep{turing1936computable}.
A universal circuit is a single circuit that can simulate any target circuit of comparable size by simply fixing additional special ``control'' inputs to an appropriate values specific to the target circuit~\citep{valiant1976universal}.
Note that universality here is not the the same as universal approximation, which refers to the ability to choose the model parameters to approximate an arbitrary function. 
Instead, the architecture and non-embedding parameters are fixed, and the question is, for any target transformer of comparable form, whether there is  a suitable representation of the input that makes this fixed computation emulate the target transformer.

Our notion of universality is partly motivated by recent experimental findings of \citet{zhong2024algorithmic} on \emph{random transformers}, where they found that many basic capabilities of language models are already present in randomly-initialized transformers, with only trained embedding and unembedding layers. These random transformers turn out to be universal (almost surely). Moreover, randomness is not essential: universal transformers can also be \emph{explicitly} constructed.
Our constructions give more insight into how the emulation can be carried out, and they do not resemble (and are much sparser than) any commonly-used initialization scheme for transformers.
Universal transformers are also related to model reprogramming, a transfer-learning paradigm that adapts a pre-trained model to new domains by changing only the embedding layers~\citep{chen2024model}; the existence of universal transformers vastly expands the potential scope of that approach.

\begin{figure}[t]
  \centering
  \begin{tabular}{ccc}
    \includegraphics[scale=0.41,valign=m]{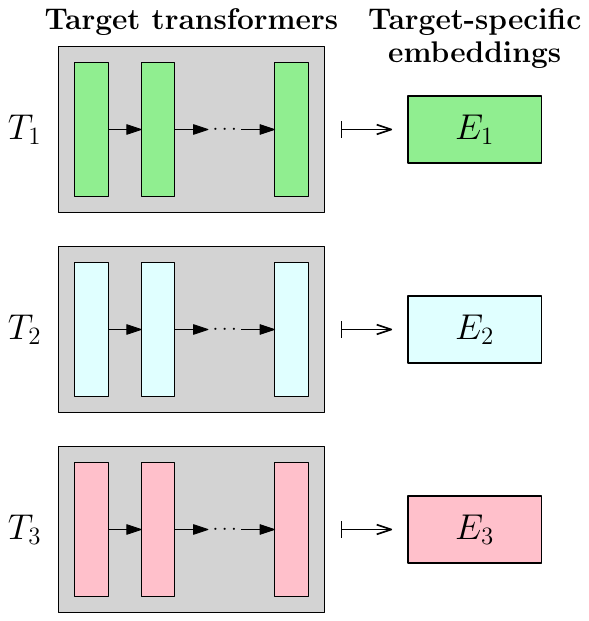}
    & \includegraphics[scale=0.41,valign=m]{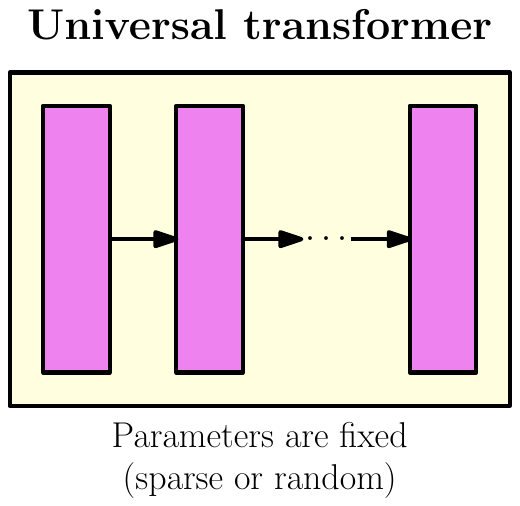}
    & \includegraphics[scale=0.41,valign=m]{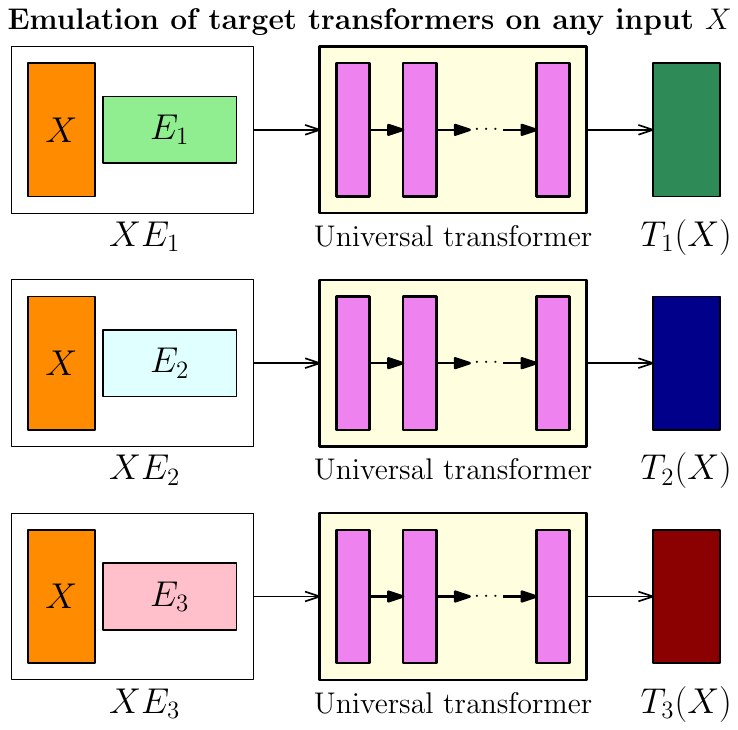}
    \\
    (a) & (b) & (c)
  \end{tabular}
  \caption{%
    Schematic overview of universal transformers.
    (a) Each target transformer corresponds to a target-specific embedding (based on the target's model parameters).
    (b) The parameters of the universal transformer are fixed, either using the sparse deterministic construction from \Cref{thm:sparse_universal_transformer}, or chosen randomly as in \Cref{thm:random_universal_transformer}.
    (c) The composition of the universal transformer with the target-specific embedding emulates the target transformer on all inputs.%
    \label{fig:overview}
  }
\end{figure}

\paragraph{Our contributions.}
We formalize the notion of a universal transformer, which can simulate any transformer within a given class via a suitable input embedding. We provide explicit constructions of such models and further show that randomly initialized transformers are universal.

\begin{itemize}[leftmargin=*]
    \item \textbf{Explicit sparse universal transformers.}
    We show that there exists a fixed $H$-head, $L$-layer universal transformer with embedding dimension $m$ and $\{0,1\}$-valued parameters that can simulate any $H$-head, $L$-layer transformer with embedding dimension $d$, provided that $m=O(Ld)$ when $H=1$ and $m = O(H^L d)$ when $H\ge 2$. Moreover, this result extends to the looped (weight-tied) setting, where the same parameters are reused across layers in the universal transformer.
    \begin{theorem}[Informal version of \Cref{thm:sparse_universal_transformer,thm:looped_sparse_universal_transformer}]
        There exists fixed $\{0,1\}$-valued parameters defining a $H$-head $L$-layer transformer (or looped transformer) $\UT$ with embedding dimension $m=O(Ld)$ when $H=1$ and $m=O(H^Ld)$ when $H\ge 2$,  such that there are at most $m$ nonzero entries in each parameter matrix, and
        for any $H$-head $L$-layer transformer with embedding dimension $d$, there exists an embedding matrix $E\in \R^{d \times m}$, we have $\UT(XE)=T(X)$ for any input X. 
    \end{theorem}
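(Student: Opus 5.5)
The plan is to make the universal transformer $\UT$ carry, inside its residual stream, both the current hidden state of the target $T$ and a verbatim copy of the target's parameters. Each fixed, sparse $\{0,1\}$ layer of $\UT$ then plays the role of a ``multiplexer'' that routes coordinates so that the target's query/key/value products emerge. I take the model to be attention-only with a residual stream. A layer maps $X \mapsto X + \sum_h \mathrm{softmax}(X Q_h K_h^\top X^\top) X V_h$, and the target uses per-head $W^{(\ell,h)}_{QK}\in\R^{d\times d}$ and $W^{(\ell,h)}_{OV}\in\R^{d\times d}$.

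The key observation is that the embedding $E$ is the only place where target weights may enter. Since the input is $XE$, every row $x_i$ becomes $x_iE$, which is linear in $x_i$. So I cannot store parameters as constants; I must store \emph{pre-multiplied} copies of the hidden state. For one layer and one head, the attention score is $x_i W_{QK} x_j^\top$. I place a block $x_i$ in one slot of $x_iE$ and a block $x_j W_{QK}^\top$ in another slot, both through columns of $E$. The fixed $\UT$ query matrix then selects the first block, its key matrix selects the second, and the resulting score is $\langle x_i, x_j W_{QK}^\top\rangle = x_iW_{QK}x_j^\top$. Both matrices are $0/1$ coordinate selectors. Likewise, a slot containing $x_j W_{OV}$ lets the fixed value matrix copy it into the output slot, so row $i$ receives the target's first-layer update.

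The difficulty appears in later layers. After layer 1, the hidden state is $y_i = x_i + \sum_j a_{ij} x_j W_{OV}$. Layer 2 then needs $y_j W^{(2)}_{QK}{}^\top$ and $y_jW^{(2)}_{OV}$, but these are not linear in $x$ with weights known in advance. However, they \emph{are} linear in the quantities the first layer mixes. Specifically, $y_j W^{(2)} = x_jW^{(2)} + \sum_k a_{jk} x_k W^{(1)}_{OV}W^{(2)}$. So the embedding precomputes, in separate slots, $x W^{(2)}$ and $xW^{(1)}_{OV}W^{(2)}$. The layer-1 value map of $\UT$ copies the slot $xW^{(1)}_{OV}W^{(2)}$, already mixed by the layer-1 attention, into the slot holding $xW^{(2)}$. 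By linearity, that slot now contains $y W^{(2)}$.

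In general, for each future layer $\ell'$, I keep a ``track'' of width $O(d)$ holding the current hidden state pre-multiplied by $W^{(\ell')}$. Every layer $\ell$ performs the same attention-weighted update on all tracks at once. The embedding initializes track $\ell'$ with the appropriate products of target matrices. With one head, an update on track $\ell'$ contributes $x \cdot(\text{products of } W_{OV}^{(\ell)}\text{'s})\cdot W^{(\ell')}$, and all products along a single-head path collapse into one matrix per track. This gives $O(L)$ tracks of width $O(d)$, hence $m=O(Ld)$. The fixed value matrices are block-identity routings with at most $m$ ones. With $H$ heads, a hidden state after $\ell$ layers is a sum over head-sequences, and different heads' attention weights multiply different pre-transformed copies. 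Each layer's value routing must therefore send the head-$h$ copy of a track into a distinct sub-slot, since head $h$ must move the copy pre-multiplied by $W^{(\ell,h)}_{OV}$. This branching over heads produces the $H^L d$ count, via a tree of slots indexed by head-paths, with block sizes summing geometrically to $O(H^Ld)$. The final output is read from the track that carries the plain hidden state, by identity-type rows. Adjusting the statement's "$\UT(XE)=T(X)$" to mean exact equality on that readout block, or with a fixed unembedding, seems necessary.

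The main obstacle is the bookkeeping for multiple heads. I must verify that the query/key selectors for head $h$ at layer $\ell$ pick exactly the track containing $y_j W^{(\ell,h)}_{QK}{}^\top$, while the value routing never mixes head-contributions that should be kept apart. For the looped version, I would make the slot layout shift-invariant, so that one fixed block-shift matrix advances every track by one layer. Layer-dependence then lives entirely in the embedding, at the cost of a constant factor in $m$.
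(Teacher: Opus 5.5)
Your core mechanism is the same as the paper's: precompute products of target matrices in $E$, read them with $0/1$ selectors in $R_Q,R_K$, move them with $0/1$ routing in $R_V$, and branch over a tree of head paths. The gap is the residual stream you adopted. First, it is not the paper's model: there, both $T$ and $\UT$ compute $X^l=\sum_h A^{l,h}X^{l-1}W_V^{l,h}W_O^{l,h}$ with no skip. More seriously, it breaks your dimension count. Your layer-2 step works because the skip keeps $xW^{(2)}$ in place. But the same skip makes every layer contribute ``skip or one of the $H$ heads,'' so the computation unrolls over $(H+1)^L$ paths, and every track must start with one precomputed product per sub-path. Already for $H=1$, $L=2$, the readout track needs $x$, $xW^{(1)}_{OV}$, $xW^{(2)}_{OV}$ and $xW^{(1)}_{OV}W^{(2)}_{OV}$. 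For $H=1$ the layer-$\ell'$ track needs the $2^{\ell'-1}$ products $x\prod_{l\in S}W^{(l)}_{OV}\,W^{(\ell')}$, $S\subseteq[\ell'-1]$. So ``all products along a single-head path collapse into one matrix per track'' is false in your model. The construction as described needs $m=\Theta(2^Ld)$ for $H=1$ and $\Theta((H+1)^Ld)$ in general. Better bookkeeping will not rescue this. The paper's shared-attention lower bound carries over to skip-or-head paths: take each $A^{\ell,h}$ to be $I_2$ or $B=\bigl(\begin{smallmatrix}0&1\\0&1\end{smallmatrix}\bigr)$, and the vectors $((A_{\mathbf h})_{11})_{\mathbf h}$ still span $\R^{(H+1)^L}$. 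This gives $m\ge(H+1)^{L/2}$ for attention-matching schemes, so $O(Ld)$ is unattainable in the residual model by this route.

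The repair is to work in the paper's residual-free model. There $y=\sum_h A^h xW^{(h)}_{OV}$, so there is no preserved slot to add onto. Instead, $R_V$ must \emph{move} the attention-mixed block (e.g.\ $xW^{(1,h)}_{OV}W^{(2,h')}_{Q}$) to the position the next layer's selectors read. The paths are then exactly the $H^L$ head sequences, a single path when $H=1$. This is what gives the paper's bounds $(L+1)\max\{2d,\din\}$ and $\frac{2H(H^L-1)}{H-1}d+H^L\din$. Internal tree blocks hold $M^{1:t,\mathbf h_{1:t}}W_Q^{t+1,h}$ and $M^{1:t,\mathbf h_{1:t}}W_K^{t+1,h}$, and leaves hold $M^{1:L,\mathbf h}$. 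Correctness follows from \eqref{eq:multi-head-QK}--\eqref{eq:multi-head-OV} via the unrolling lemma and induction over layers. Two smaller points. Store $MW_Q$ and $MW_K$ in separate width-$d$ slots rather than putting the raw hidden state in the query slot, since an $m\times d$ selector cannot pass a $\din$-wide state when $\din>d$. For the looped case with $H\ge2$, a single block shift cannot encode head branching; you need tied, head-indexed routing. One option is the paper's prefix-tree map $R_V^hB_p=B_{(h,p)}$ with readouts at the root and $R_Q^h=R_V^hU_Q$, $R_K^h=R_V^hU_K$. Then $R_V^{h_1}\cdots R_V^{h_t}$ carries a root selector to the distinct block $(h_1,\dots,h_t)$. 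Keep the final readout's coordinates disjoint from the query/key coordinates, since depth-$L$ blocks serve both.
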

    \item \textbf{Random universal transformers.} We prove that universality is in abundance: almost surely, a randomly initialized $H$-head, $L$-layer transformer can simulate any $H$-head, $L$-layer transformer with embedding dimension $d$, when the embedding dimension satisfies $m = O(H^L d)$.
    \begin{theorem}[Informal version of \Cref{thm:random_universal_transformer,thm:random_looped_universal_transformer}]
        Draw random parameters from an absolutely continuous distribution. With probability $1$, the resulting $H$-head $L$-layer transformer (or looped transformer) $\UT$ with embedding dimension $m=O(H^Ld)$ such that for any $H$-head $L$-layer transformer with embedding dimension $d$, there exists an embedding matrix $E\in \R^{d \times m}$,  we have $\UT(XE)=T(X)$ for any input $X$. 
    \end{theorem}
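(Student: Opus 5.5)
The plan is to reduce the random case to a generic-rank statement about polynomial systems. I will build the explicit embedding in three steps: identify a ``good'' subspace, show that one nonvanishing polynomial certifies it, and use the sparse construction of \Cref{thm:sparse_universal_transformer} to show that polynomial is not identically zero. An absolutely continuous distribution puts zero mass on the zero set of a nonzero polynomial, and a finite intersection of probability-one events still has probability one. So universality will hold almost surely once the ``good'' property is written as finitely many polynomial inequalities in the universal parameters.

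\textbf{Step 1: the layer-by-layer requirement.} Write the target as $T$, with per-layer, per-head query-key matrices $A^{(\ell,h)}\in\R^{d\times d}$ and value/output matrices $V^{(\ell,h)}\in\R^{d\times d}$. The universal model $\UT$ has $B^{(\ell,h)},W^{(\ell,h)}\in\R^{m\times m}$. I look for $E$ together with an ``invariant'' chain of linear maps $E=E_0, E_1,\dots,E_L$ with $E_\ell\in\R^{d\times m}$, satisfying $\UT$'s residual stream after layer $\ell$ equals $T$'s stream times $E_\ell$. I will also need a readout, meaning the convention by which the paper compares $\UT(XE)$ with $T(X)$, presumably through a fixed projection or the last-layer coordinates. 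Since the residual and the attention mixing act identically on the sequence dimension, it suffices to have, for each $\ell,h$:
\begin{itemize}
\item $E_{\ell-1} B^{(\ell,h)} E_{\ell-1}^\top = A^{(\ell,h)}$, so that the attention scores match;
\item $E_{\ell-1}+\sum_h E_{\ell-1}W^{(\ell,h)}$ decomposes correctly as $E_\ell$ plus the value contributions.
\end{itemize}
More concretely, I mirror the sparse proof: reserve disjoint coordinate blocks per layer and head path. This is where the $H^L$ blow-up enters. After each layer, every head's output must remain separately addressable in $\UT$, because the next layer's heads need independent query/key access to it.

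\textbf{Step 2: linear solvability.} Fix the block structure by choosing $E$ to range over a suitable generic linear family. The constraints then become a linear system in the free entries of $E$ (and of the intermediate $E_\ell$) whose coefficients are polynomial in the universal parameters. The system must be solvable for every target right-hand side. This holds whenever a certain coefficient matrix $M(\theta)$ has full row rank, and full row rank is equivalent to nonvanishing of some maximal minor, a polynomial $p(\theta)$.

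\textbf{Step 3: nonvanishing of the minor.} To show $p\not\equiv 0$, I evaluate at the $\{0,1\}$ parameters from \Cref{thm:sparse_universal_transformer}, which achieve universality with the same $m=O(H^Ld)$. The main obstacle is exactly here. Universality at the sparse point must be shown to come from a linear-algebraic surjectivity, i.e.\ full rank of $M(\theta_{\text{sparse}})$, rather than merely from the existence of some solution. The scores $E B E^\top$ are bilinear in $E$, which breaks linearity. I would handle this by treating the two sides asymmetrically: write $E_{\ell-1} = [\,I\ \ 0\,]\,G_\ell(\theta)$ for the query side and a separate free factor on the key side, in the manner the sparse construction places queries and keys in distinct coordinates. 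This linearizes the requirement to $Q\,B\,K^\top=A$ with independent $Q,K$. If that fails, the fallback is to argue dimension-counting via the constant-rank theorem on the polynomial map $E\mapsto(\text{all } A,V)$: its Jacobian rank at the sparse witness is maximal, so it is maximal generically, and surjectivity then follows by homogeneity/scaling. The looped case is the same argument with tied parameters, using \Cref{thm:looped_sparse_universal_transformer} as the witness.
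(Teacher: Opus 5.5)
Your outer skeleton matches the paper's. Full rank is a nonvanishing-minor condition, the sparse construction witnesses that this minor is not identically zero, and zero sets of nonzero polynomials are null. But the reduction that makes it work is missing, and your Step~1 formulation cannot supply it.

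What is needed is one linear system in $E$ whose coefficient matrix depends \emph{only} on the universal parameters. Then a single probability-one event serves all (uncountably many) targets at once. Your invariant chain $\tilde X^\ell = X^\ell E_\ell$ with intermediate unknowns $E_\ell$ fails on both counts:
\begin{itemize}
\item It requires $E_{\ell-1}R_V^{\ell,h}=W_V^{\ell,h}W_O^{\ell,h}E_\ell$ for every head $h$. So the target's value maps multiply unknowns and enter the coefficients, contrary to your claim in Step~2 that targets appear only on the right-hand side.
\item For $H\ge 2$ it is infeasible. Take a target whose second head in layer $1$ has $W_V^{1,2}W_O^{1,2}=0$ while the first head is nontrivial. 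This forces $ER_V^{1,2}=0$, hence $E=0$ for an invertible random $R_V^{1,2}$, so $\UT$ outputs zero.
\end{itemize}
The same failure occurs in the residual version you wrote, which is in any case not the paper's model (the paper's model has no residual connections). As you yourself hint when you say each head's output must stay separately addressable, the universal stream must be a path-separated lift of $X^\ell$, not a linear image of it.

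Separately, you model universal attention with a merged $m\times m$ matrix and match $E_{\ell-1}B^{(\ell,h)}E_{\ell-1}^\top$. This creates a quadratic constraint that your sketch never resolves. A single $E$ sits on both sides, so there are no independent query and key factors to linearize. The constant-rank fallback gives at most local openness of the image, not surjectivity onto every target: already $x\mapsto x^2$ is homogeneous with generically maximal Jacobian rank, yet is not onto.

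The missing idea is path-wise unrolling with queries and keys matched separately. In the paper's model the universal query and key maps are factored, $R_Q^{l,h},R_K^{l,h}\in\R^{m\times d}$, and the random unembedding $U$ is part of the system. Both networks expand as sums over head paths (\Cref{lem:multiunroll}). One then imposes, for every prefix $(h_1,\dots,h_l)$:
\begin{itemize}
\item $ER_V^{1,h_1}\cdots R_V^{l-1,h_{l-1}}R_Q^{l,h_l}=W_V^{1,h_1}W_O^{1,h_1}\cdots W_V^{l-1,h_{l-1}}W_O^{l-1,h_{l-1}}W_Q^{l,h_l}$;
\item the same with $K$ in place of $Q$;
\item at the leaves, $ER_V^{1,h_1}\cdots R_V^{L,h_L}U=W_V^{1,h_1}W_O^{1,h_1}\cdots W_V^{L,h_L}W_O^{L,h_L}$.
\end{itemize}
Induction over layers shows these force $\tilde A^{l,h}=A^{l,h}$ and then $\tilde X^LU=X^L$ (\Cref{lem:multieqatt}).

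Each condition is linear in $E$ alone. Stacking them gives $ER_{\mathrm{all}}=W_{\mathrm{all}}$ with $R_{\mathrm{all}}\in\R^{m\times M}$, $M\le\mupper$, built only from products of universal parameters; the target lives entirely in $W_{\mathrm{all}}$. Full column rank of $R_{\mathrm{all}}$ is then the single generic condition. At the sparse witness, each column block of $R_{\mathrm{all}}$ is supported on its own tree node, so up to a row permutation it is an identity padded with zero rows. The step you flagged as the main obstacle is therefore immediate. The looped case is identical, with tied $R$'s and \Cref{thm:looped_sparse_universal_transformer} as the witness.
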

    \item \textbf{Lower bounds.} We establish that, when the number of heads is $H\ge 2$, the exponential dependence on depth $L$ is unavoidable, even when the embedding dimension is $1$, for any approach based on ``attention-matching'', where the universal transformer reproduces the attention matrices of the transformer being simulated.
    \begin{theorem}[Informal version of \Cref{thm:lower_bound_utf}]
        If an $H$-head, $L$-layer transformer $\UT$ with embedding dimension $m$ can simulate any $H$-head, $L$-layer transformer $T$ with embedding dimension $1$ via ``attention-matching'', then $m\geq H^{L/2}$.
    \end{theorem}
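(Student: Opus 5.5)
Write $\UT$ with query/key parameters $\{W^{(\ell,h)}\}$ and target $T$ with $\{A^{(\ell,h)}\in\R^{d\times d}\}$. Attention-matching means that for every input $X$ the attention score matrices of $\UT$ on $XE$ coincide with those of $T$ on $X$, layer by layer and head by head. The plan is a dimension (rank) count. I will show that attention-matching forces a linear map from the space of target parameters into a fixed space of dimension about $m^2$, and that this map must be injective on a family of targets of dimension about $H^L$. That would give $m^2\ge H^L$, i.e.\ $m\ge H^{L/2}$.

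\textbf{Step 1: unrolling the residual stream.} Take $d=1$ and value matrices equal to $1$, so that $X\in\R^{n\times 1}$. The layer-$\ell$ hidden state of $T$ is then a sum over \emph{paths} $(h_1,\dots,h_{\ell-1})\in(\{0\}\cup[H])^{\ell-1}$, where $0$ denotes the skip connection. Each path term is a composition of attention matrices applied to $X$. In $\UT$ the hidden state is $XE$ pushed through the corresponding products of fixed value matrices. Every hidden state of $\UT$ therefore lies in $\mathrm{span}$ of $\{P_\pi XE V_\pi\}_\pi$ with $V_\pi\in\R^{m\times m}$ fixed. Its score is $(\sum_\pi P_\pi X E V_\pi) W (\sum_{\pi'} P_{\pi'} X E V_{\pi'})^\top$. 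The target's score at layer $\ell$, head $h$ is $a^{(\ell,h)}(\sum_\pi P_\pi X)(\sum_{\pi'}P_{\pi'}X)^\top$ with a scalar $a$.

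\textbf{Step 2: choosing target parameters to separate paths.} The products $P_\pi X X^\top P_{\pi'}^\top$, as functions of $X$ and the earlier attention patterns, are generically linearly independent across path pairs. Using this, I would vary the scalar weights $a^{(\ell',h')}$ (and scalar value weights) of earlier layers. The target score at the last layer is then a polynomial in these weights whose monomials index the $\sim H^{L}$ paths into the last attention layer. I would restrict to inputs and scales where softmax is effectively linear, or argue with the pre-softmax bilinear form directly. Matching must hold for every such target with a single fixed $\UT$, where only $E\in\R^{1\times m}$ varies. The matched bilinear form at the last layer is determined by the $m\times m$ matrix $\sum_{\pi,\pi'}V_\pi W V_{\pi'}^\top$ contracted against $E^\top E$, and the path-wise coefficients $E V_\pi W V_{\pi'}^\top E^\top$ must reproduce $\prod$ of target weights along $\pi,\pi'$. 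Equivalently, the vectors $\{E V_\pi\}_\pi\subset\R^m$ must realize a Gram-type pattern with $H^{L/2}$-ish independent entries. Pairing paths into halves of length $L/2$ gives a rank lower bound: the $H^{L/2}\times H^{L/2}$ coefficient matrix realized is $(EV_\pi)W(EV_{\pi'})^\top$. This has rank at most $m$, while the target requires a full-rank pattern such as a tensor product of generic $H\times H$ matrices across layers.

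\textbf{Main obstacle.} The hard step is Step 2: justifying that the path terms are linearly independent as functions of $X$, so that matching the total scores forces matching path-by-path coefficients. The nonlinearity of softmax also has to be handled. I would first try the regime of small inputs, where $\mathrm{softmax}(S)\approx \frac1n\mathbf 1\mathbf 1^\top+\frac1n(S-\text{row mean})$, and compare first-order terms. Alternatively I would use the fact that attention-matching is an exact identity in $X$, so Taylor coefficients of every order must agree. Then I would pick the target's per-layer head weights as generic $H$-dimensional vectors. The rank of the induced coefficient matrix becomes $H^{L/2}$, which contradicts the rank bound $m$ unless $m\ge H^{L/2}$.
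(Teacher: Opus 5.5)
\textbf{There is a genuine gap, and it sits at the step that is supposed to produce the contradiction.} With $\din=1$ every target weight is a scalar. For a single target, the path-pair coefficients of the last-layer score are therefore $c_{\pi\pi'}=s\,\mu_\pi\mu_{\pi'}$, where $s$ is the scalar query--key product and $\mu_\pi$ is the product of scalar value weights along $\pi$. This matrix has rank one for every $T$, so no ``tensor product of generic $H\times H$ matrices'' can arise from scalar weights. Comparing it with the rank (at most $m$) of the realized matrix $(EV_\pi)W(EV_{\pi'})^\top$ gives no contradiction. The ``pairing into halves of length $L/2$'' also has no counterpart in the score form, since $\pi$ and $\pi'$ both run through all earlier layers.

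The exponent $L/2$ does not come from a rank comparison for one target. It comes from a dimension count over the whole family of targets. Some quantity that is linear in a fixed $m\times m$ matrix built from $E$ and $U$, and hence confined to a subspace of dimension at most $m^2$, must reproduce target vectors that span $\R^{H^L}$. Your first paragraph gestures at this, but Step 2 never carries it out. Two minor slips: it is $\din$, not the head dimension $d$, that equals $1$; and setting all value weights to $1$ in Step 1 erases the path dependence of $T$. The model also has no skip connections.

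\textbf{How the paper gets the count.} It uses the output identity $\tilde X^LU=X^L$, which you never use, and it takes attention matching as given rather than deriving anything from query--key scores. In the formal \Cref{thm:lower_bound_utf}, the attention matrices are supplied as arbitrary row-stochastic inputs and the shared-attention model has no $R_Q,R_K$ at all. An argument that lives on the score side therefore does not address that statement.

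Path separation is then elementary:
\begin{itemize}
\item Take $n=2$ and $X^0=(1,0)^\top$.
\item Set $A^{l,h}=I$ if $h=s_l$, and $A^{l,h}=B=\begin{pmatrix}0&1\\0&1\end{pmatrix}$ otherwise.
\item Since $IB=BI=BB=B$, the $(1,1)$ entry of $A^{L,h_L}\cdots A^{1,h_1}$ is $\mathbf 1[\mathbf h=s]$.
\item This forces $ER_V^{1,h_1}\cdots R_V^{L,h_L}U=\prod_l W_V^{l,h_l}W_O^{l,h_l}$ for each of the $H^L$ paths.
\item Each left side equals $\tr(R_V^{1,h_1}\cdots R_V^{L,h_L}\,UE)$, a linear functional of the $m\times m$ matrix $UE$.
\item Choosing the scalars $W_V^{l,h}W_O^{l,h}=\mathbf 1[h=h^*_l]$ makes the right-hand side the basis vector $e_{\mathbf h^*}$, so $m^2\ge H^L$.
\end{itemize}

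\textbf{What your route would still need.} You would have to separate path pairs for score matrices that are determined only up to row-wise additive constants. Their attention factors are softmax outputs, coupled to the very weights you vary. The claims of an ``effectively linear softmax'' and ``generic independence'' are asserted, not proved. You rightly flag this as the main obstacle, but it is left open. A score-side argument is not hopeless in principle, since the coefficients are linear in $E^\top E$. It would, however, need both missing pieces: a valid family-level dimension count and a rigorous path separation under softmax.
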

    \item \textbf{Empirical evaluation.} We evaluate sparse and random universal transformers on parenthesis balancing and $k$-hop induction head tasks, achieving perfect or near-perfect accuracy. These findings complement our representational results and demonstrate that standard gradient-based optimization can effectively learn the required embeddings.
\end{itemize}

\subsection{Related work}
\paragraph{Universality in computation.}

%

Our notion of a universal transformer takes inspiration from classical notions of universality in the theory of computation~\citep{turing1936computable,valiant1976universal}, but adapts it to the deep learning context.
Specifically, the description of the target transformer is provided as an input embedding layer for the universal transformer.

As mentioned in the introduction, universal transformers can also be regarded as a form of model reprogramming~\citep{chen2024model}.
Theoretical analyses on model reprogramming rely on distributional assumptions and offer average case error guarantees~\citep{yang2021voice2series}, much like other works on transfer learning.
Our results offer a complementary type of theoretical guarantee for model reprogramming when the ``pre-trained model'' is a universal transformer, and the target tasks are solvable using task-specific transformers.

\paragraph{Universal approximation and Turing-completeness.}

The notion of universality we study in this work is distinct from the universal approximation property of various types of neural networks (including transformers)~\citep[e.g.,][]{cybenko1989approximation,hornik1989multilayer,funahashi1989approximate,yun2020transformers,wei2022statistically}, which is concerned with the ability to arbitrarily-well approximate broad classes of functions by entire families of neural networks.
In particular, for any given target function, the approximating network's size and internal parameters may all be specific to the target function.
Similarly, many network families have been shown to satisfy Turing-completeness, which (in this context) means that any Turing machine can be simulated by an appropriate member of the network family~\citep{siegelmann1992computational,dehghani2019universal,chung2021turing,wei2022statistically,giannou2023looped}.
(We have an unfortunate clash of terminology with \citet{dehghani2019universal}, who use the term ``Universal Transformer'' for a recurrent extension of the standard transformer architecture that the authors argue is Turing-complete; our work does not change the transformer architecture.)
Again, the particular network that simulates a given Turing machine may have size and internal parameters that depend on that Turing machine.

The Turing-completeness of transformers offers one avenue to construct universal transformers: use the fact that every transformer can be simulated by a Turing machine (e.g., one that implements a Python interpreter with PyTorch) when provided in a suitable encoding (e.g., model \texttt{.pt} file), and then obtain a transformer that simulates \emph{that} Turing machine.
However, such an approach would not (at least readily) come with any efficiency or succinctness guarantees about the constructed universal transformer, even for the classes of transformers we consider in our results.


\paragraph{Random networks.}

Random networks have long served as a lens for studying the capabilities of a wide range of machine learning models (especially at random initialization), including kernel machines~\citep{neal1996priors,rahimi2007random}, feedforward neural networks~\citep{jacot2018neural,frankle2019lottery}, recurrent neural networks~\citep{jaeger2001echo,maass2002real,chatziafratis2022scrambling}, and---more recently---transformers~\citep{chen2020lottery,zhong2024algorithmic,dong2025random,otsuka2026strong}.
As mentioned in the introduction, our work is, in part, inspired by the empirical study of \citet{zhong2024algorithmic} on random transformers, which tested and validated the hypothesis that many capabilities of language models are already present in randomly-initialized transformers, prior to any training.
%
%
That work also empirically observed that the token representations at different levels of a random transformers seem to lie close to low-dimensional subspaces corresponding to the current task at hand, but also that these representations could not be sparsified by selecting individual neurons in the multi-layer Perceptron (MLP) layers; this is taken as evidence against a ``lottery ticket hypothesis''~\citep{frankle2019lottery,otsuka2026strong} for the success of random transformers.

The work of \citet{dong2025random} continues the investigation of random transformers by further empirically investigating the relative importance of the self-attention parameters and MLP parameters.
Theoretically, they attribute the expressivity of ``Frozen-QK'' transformers---where only the self-attention parameters are fixed at their initialization, but the MLP parameters are trained---to the universal approximation capabilities of MLPs and an embedding dimension that is sufficiently large for self-attention to gather the entire input sequence into a single token embedding.
This explanation for the capabilities of (a different type of) random transformers is complementary to that provided by universality.
Our theoretical work focuses on attention-only transformers (as opposed to transformers with MLP layers), considers exact representation (as opposed to approximation), and does not use any target-specific layers besides the embedding layers.

As already pointed out by \citep{zhong2024algorithmic}, the lottery ticket hypothesis concerns a rather different property of random networks that involves finding useful subnetworks through sparsification/pruning.
This hypothesis has been explored both empirically~\citep{chen2020lottery} and theoretically~\citep{otsuka2026strong} for transformers, but it is not a basis for the universality property that we study.

\section{Preliminaries}

\paragraph{Notation.}
For a positive integer $k$, we denote $[k] := \{1,2,\dots,k\}$. 
We use $\{0, 1\}^{m \times n}$ to denote the class of $m \times n$ matrices with entries in $\set{0,1}$.
For a matrix $X \in \R^{n \times d}$, we interpret the $n$ rows as tokens, where $n$ is the context length, and $d$ as the feature dimension. 
We use $\softmax$ to denote the row-wise softmax function. 

\subsection{Standard transformer}

We consider a standard multi-head self-attention transformer without MLP layers.

\paragraph{Transformer class.}
We use $\tf_{H, L, \din, d}$ to denote the class of $H$-head $L$-layer transformers, with
the input and output dimension $\din$, and each head uses an internal head dimension $d$.

A transformer $T\in \tf_{H, L, \din, d}$ is defined by parameters including query and key matrices $W_Q^{l,h}, W_K^{l,h} \in \R^{\din \times d}$, value matrix $W_V^{l,h} \in \R^{\din \times d}$, and output matrix $W_O^{l,h} \in \R^{d \times \din}$, for each layer $l \in [L]$ and head $h \in [H]$, and the following computation:

Given an input sequence $X^0 \in \R^{n \times \din}$, $T$ computes for each layer $l \in [L]$:
\begin{align*}
A^{l,h} 
&:= \softmax\!\left( (X^{l-1} W_Q^{l,h})(X^{l-1} W_K^{l,h})^\top \right) 
\in \R^{n \times n}, \\
X^l 
&:= \sum_{h=1}^H A^{l,h} X^{l-1} W_V^{l,h} W_O^{l,h} 
\in \R^{n \times \din}.
\end{align*}
The final output is $T(X) := X^L \in \R^{n \times \din}$.

\subsection{Universal transformer}

We now define the \emph{universal transformer} model. Like universal circuits, we define a universal transformer with respect to a class of transformers of certain set of hyper-parameters.

\paragraph{Universal transformer model.}
We define a parametric model $\utf_{H, L, \din, d}^m$ which is an $H$-head, $L$-layer transformer with embedding dimension $m$. It is parameterized by matrices $R_Q^{l,h}, R_K^{l,h} \in \R^{m \times d}$ and $R_V^{l,h} \in \R^{m \times m}$ for each $(l,h) \in [L] \times [H]$, together with an unembedding matrix $U \in \R^{m \times \din}$.

Given an input sequence $X^0 \in \R^{n \times \din}$ and an embedding matrix $E \in \R^{\din \times m}$, $\utf_{H, L, \din, d}^m$ first maps the input into the $m$-dimensional space:
\[
\tilde X^0 := X^0 E \in \R^{n \times m}.
\]
For each layer $l \in [L]$, it then computes
\begin{align*}
\tilde A^{l,h} 
&:= \softmax\!\left( (\tilde X^{l-1} R_Q^{l,h})(\tilde X^{l-1} R_K^{l,h})^\top \right) 
\in \R^{n \times n}, \\
\tilde X^l 
&:= \sum_{h=1}^H \tilde A^{l,h} \tilde X^{l-1} R_V^{l,h} 
\in \R^{n \times m}.
\end{align*}
Finally, the output is projected back to the input space:
\[
\utf_{H, L, \din, d}^m(X, E) := \tilde X^L U \in \R^{n \times \din}.
\]

The key property of the universal transformer is that it can simulate any transformer in $\tf_{H, L, \din, d}$ via a suitable choice of embedding. 

\begin{definition}[Universal transformer]
    We call a $\utf_{H, L, \din, d}^m$, parametrized by fixed $\{R_Q^{l,h}, R_K^{l,h}, R_V^{l,h}, U\}$,  a \emph{universal transformer} for the class $\tf_{H, L, \din, d}$ if, for any $T \in \tf_{H, L, \din, d}$, there exists an embedding matrix $E$ such that
    \[
    \utf_{H, L, \din, d}^m(X, E) = T(X)
    \quad \text{for all } X \in \R^{n \times \din} \text{ and all } n \in \mathbb{N}.
    \]
\end{definition}

Note that the universal transformer must be \emph{uniform over the context length} $n$: 
the parameters $\{R_Q^{l,h}, R_K^{l,h}, R_V^{l,h}, U\}$ are fixed and do not depend on $n$, 
and the same universal transformer can be applied to inputs of arbitrary length.

This concept is analogous to concepts of universal Turing machines~\citep{turing1936computable} and universal circuits of~\citep{valiant1976universal}, adapted to the deep learning context.
In our setting, the embedding matrix $E$ 
serves as a \emph{description} (or encoding) of the target transformer being simulated, 
while the parameters $\{R_Q^{l,h}, R_K^{l,h}, R_V^{l,h}, U\}$ define a fixed computational 
template that performs the simulation. The embedding dimension $m$ therefore quantifies the expressive capacity of the universal transformer, 
i.e., the amount of information needed to encode a target transformer within this fixed architecture. 
In \Cref{sec:contructing_utf,sec:optimality}, we study the minimal $m$ required for universality over the class $\tf_{H, L, \din, d}$.

We also consider a \emph{looped} (or weight-tied) variant of the universal transformer, in which the same attention parameters are reused across iterations.

\paragraph{Looped universal transformer model.}
We define a looped parametric model $\looped\utf_{H, L, \din, d}^m$ which is an $H$-head transformer unrolled for $L$ iterations, with embedding dimension $m$. It is parameterized by shared matrices $R_Q^{h}, R_K^{h} \in \R^{m \times d}$ and $R_V^{h} \in \R^{m \times m}$ for each $h \in [H]$, together with an unembedding matrix $U \in \R^{m \times \din}$. The $\looped\utf_{H, L, \din, d}^m$ follows the same computation path as $\utf_{H, L, \din, d}^m$ but replacing $R_Q^{l,h}, R_K^{l,h}, R_V^{l,h}$ with $R_Q^{h}, R_K^{h}, R_V^{h}$.


\section{Constructing universal transformers}
\label{sec:contructing_utf}

In this section we characterize when a transformer can serve as a \emph{universal transformer} for the class $\tf_{H, L, \din, d}$, and to determine the minimal embedding dimension $m$ required for universality.

\subsection{Fixed sparse universal transformer}
In the following theorem, we give an explicit construction of a \emph{sparse} universal transformer, which establishes an upper bound on the required embedding dimension and serves as the basis for our experiments. The upper bound is given in terms of the following quantity $\mupper$:
\[
\mupper := \begin{cases}
    (L+1)\max\{2d, \din\}, & \text{if } H = 1, \\
    \frac{2H(H^{L}-1)}{H-1} d + H^L\din , & \text{if } H > 1.
\end{cases}
\]
\begin{theorem}[Sparse universal transformer]
\label{thm:sparse_universal_transformer}
There exist fixed parameters $\{R_Q^{l,h}, R_K^{l,h}, R_V^{l,h}\}_{l \in [L],\, h \in [H]}$ and an unembedding matrix $U$, with at most $m$ nonzero entries in each parameter matrix, defining a $\utf_{H, L, \din, d}^m$ such that it is a universal transformer for the class $\tf_{H, L, \din, d}$, and 
\[
R_Q^{l,h}, R_K^{l,h} \in \{0,1\}^{m \times d}, 
\quad
R_V^{l,h} \in \{0,1\}^{m \times m}, 
\quad
U \in \{0,1\}^{m \times \din},
\]
if the embedding dimension satisfies $m \geq \mupper$.
\end{theorem}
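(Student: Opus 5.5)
The plan is to exploit the fact that, apart from the softmax, every layer of a target $T\in\tf_{H,L,\din,d}$ acts linearly on the right of the token matrix. Every intermediate state $X^{l}$ is therefore a sum of terms of the form (product of attention matrices)$\cdot X^0\cdot$(product of parameter matrices). The embedding $E$ will precompute all the right-hand parameter products, one block of columns per term. The fixed universal parameters will then be $\{0,1\}$ selection and projection matrices that route these blocks through the heads, so that the left-hand products of attention matrices get attached to each block automatically. The proof is an induction on $l$ showing that the attention matrices $\tilde A^{l,h}$ coincide with the target's $A^{l,h}$ for every $X$ and $n$. Nothing in this depends on $n$, which gives uniformity in the context length.

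\emph{Case $H=1$.} Write $M_l := W_V^1W_O^1\cdots W_V^lW_O^l$, with $M_0=I$. By induction, $X^l = A^l\cdots A^1 X^0 M_l$. I will split the $m$ coordinates into $L+1$ blocks of width $\max\{2d,\din\}$ and set
\[
E=[\,M_0W_Q^1,\ M_0W_K^1 \mid M_1W_Q^2,\ M_1W_K^2 \mid \cdots \mid M_L\,],
\]
padded with zeros. The universal parameters are: $R_V^{l}=I_m$; $R_Q^{l}$ and $R_K^{l}$ are the $0/1$ matrices selecting the query and key sub-blocks of block $l$; and $U$ selects the first $\din$ columns of block $L+1$. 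Inductively, $\tilde X^{l-1} = A^{l-1}\cdots A^1X^0E$. Hence $\tilde X^{l-1}R_Q^l = X^{l-1}W_Q^l$, and similarly for keys, so $\tilde A^l = A^l$. At the end, $\tilde X^L U = A^L\cdots A^1X^0M_L = T(X)$. Each matrix has at most $m$ nonzeros.

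\emph{Case $H\ge 2$.} Expanding the head sums gives
\[
X^{l}=\sum_{p=(h_1,\dots,h_l)\in[H]^l} A_p X^0 M_p, \qquad A_p := A^{l,h_l}\cdots A^{1,h_1},\quad M_p := W_V^{1,h_1}W_O^{1,h_1}\cdots W_V^{l,h_l}W_O^{l,h_l}.
\]
I index coordinate blocks by paths. For each layer $l$, each head $h$ and each prefix $q\in[H]^{l-1}$ there is a block of width $2d$ holding $[M_qW_Q^{l,h},\,M_qW_K^{l,h}]$. For each full path $p\in[H]^L$ there is a block of width $\din$ holding $M_p$. The total width is $\sum_{l=1}^L H^{l}\,2d+H^L\din=\mupper$.

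Each $R_V^{l,h}$ will be a diagonal $0/1$ projection. It keeps a block exactly when the path label of that block has $l$-th coordinate $h$. Blocks whose label has length $<l$ are already consumed, so I assign them to head $1$ arbitrarily. This makes the projections for fixed $l$ have disjoint supports summing to $I_m$. The invariant to prove by induction is: after layer $l$, every block labeled by a path (or prefix) $r$ still in use equals $A_{r_{\le l}}X^0(\text{its stored matrix})$. This holds because head $h$ at layer $l$ multiplies exactly the blocks with $r_l=h$ by $\tilde A^{l,h}$, and $\tilde A^{l,h}=A^{l,h}$ by the inductive hypothesis.

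Given the invariant, take $R_Q^{l,h}$ to be the stacked identities that sum the query sub-blocks over all $q\in[H]^{l-1}$ for the pair $(l,h)$. This gives
\[
\tilde X^{l-1}R_Q^{l,h}=\sum_q A_qX^0M_qW_Q^{l,h}=X^{l-1}W_Q^{l,h},
\]
and the same holds for keys, so $\tilde A^{l,h}=A^{l,h}$. Finally, $U$ sums the $H^L$ output blocks, giving $\tilde X^LU=X^L$. All matrices are $0/1$ with at most $m$ nonzeros.

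The main obstacle is this bookkeeping step. One must verify that the disjoint-support projections route each block only through the heads along its own path. One must also check that consumed query/key blocks, even though they become garbage after their layer, never contaminate later reads, since $R_Q$, $R_K$ and $U$ only read live blocks.
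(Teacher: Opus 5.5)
Your proof is correct and follows the paper's construction. That construction stores $M_qW_Q^{l,h}$, $M_qW_K^{l,h}$ and the leaf products $M_p$ in $E$ on a tree of coordinate blocks indexed by head-prefixes. In it, $R_Q^{l,h}$ and $R_K^{l,h}$ are $0/1$ selectors summing the relevant sub-blocks over all prefixes $q\in[H]^{l-1}$, $R_V^{l,h}$ keeps exactly the sub-trees whose $l$-th head is $h$, and $U$ sums the leaf blocks.

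The only difference is how you verify it. The paper reduces everything to input-independent identities $E R_V^{1,h_1}\cdots R_V^{l-1,h_{l-1}}R_Q^{l,h_l}=M_{(h_1,\dots,h_{l-1})}W_Q^{l,h_l}$ (plus the analogous $K$ and leaf identities) and then applies its multi-head unrolling lemma. You instead track the hidden state block by block. Your diagonal projections satisfy those identities as well, and they give a fully explicit version of the paper's more loosely described routing maps.
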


\paragraph{Proof sketch.}
The proof proceeds by showing that the universal transformer can simultaneously match (i) the attention patterns and (ii) the value transformations of any transformer.
The outputs of any target transformer $T$ in the class $\tf_{H, L, \din, d}$ and universal transformer can be written as
\begin{align*}
T(X^0) & = \sum_{(h_1,\dotsc,h_L) \in \sbr{H}^L} A^{L,h_L} \dotsm A^{1,h_1} X^0 W_V^{1,h_1} W_O^{1,h_1} \dotsm W_V^{L,h_L} W_O^{L,h_L} \\
\utf_{H, L, \din, d}^m(X^0, E) & = \sum_{(h_1,\dotsc,h_L) \in \sbr{H}^L} \tilde A^{L,h_L} \dotsm \tilde A^{1,h_1} X^0 E R_V^{1,h_1} \dotsm R_V^{L,h_L} U
\end{align*}
First, we ensure that for every layer $l \in [L]$ and head $h \in [H]$, the attention matrices match:
$
\tilde A^{l,h} = A^{l,h}.
$
This guarantees that the universal transformer follows the same attention pattern over tokens as the target transformer.
Second, conditioned on matching the attention matrices, we construct the value transformations so that 
$ E R_V^{1,h_1} \dotsm R_V^{L,h_L} U= W_V^{1,h_1} W_O^{1,h_1} \dotsm W_V^{L,h_L} W_O^{L,h_L}
$.
Therefore, the overall input-output map is identical.
This reduces the problem to implementing the correct linear transformations along each attention path.

\paragraph{Construction.}
We explicitly construct the parameters to realize this behavior. The key idea is to encode the weights of the target transformer into the embedding matrix $E$, and each parameter matrix is either routing or reading out from certain block of $E$. The sparsity follows from the fact that each matrix only interacts with a small number of coordinates. 

We interpret the embedding space as an $H$-ary tree, indexed by $(h_1, \dots, h_l) \in [H]^l$ for $l\leq L$, each corresponding to a head sequence. These blocks represent all possible computation paths through the transformer.
For each layer $l$ and head $h$, the matrix $R_V^{l,h}$ selects the child block corresponding to $(h_1, \dots, h_{l-1}, h)$ from each parent block corresponding to a prefix $(h_1, \dots, h_{l-1})$, and is zero elsewhere. Thus, $R_V^{l,h}$ deterministically routes representations along a tree of depth $L$. 
$R_Q^{l,h}$ and $R_K^{l,h}$ are sub-block selectors: they extract the $h$th sub-block for all the internal blocks indexed by the prefix $(h_1, \dots, h_{l-1}) \in [H]^{l-1}$, and all other coordinates are mapped to zero. 
Finally, the unembedding matrix $U$ reads out the appropriate coordinates from the leaf blocks indexed by $(h_1, \dots, h_L) \in [H]^L$.

We further provide a sparse construction for looped (weight-tied) transformers, where the same parameters are reused across layers. This leads to a more succinct universal architectural description where the number of parameter matrices is $O(H)$ instead of $O(HL)$ in the standard setting. Let $\loopmupper$ denote the embedding dimension that we show is sufficient for the looped transformer case:
\[
\loopmupper:=\begin{cases}
    (L+1)\max\{2d, \din\}, & \text{if } H = 1, \\
    \frac{H^{L+1}-1}{H-1} \max \{2d, \din\}, & \text{if } H > 1.
\end{cases}
\]

\begin{theorem}[Sparse looped universal transformer]
\label{thm:looped_sparse_universal_transformer}
There exist fixed parameters $\{R_Q^{h}, R_K^{h}, R_V^{h}\}_{h \in [H]}$ and an unembedding matrix $U$, with at most $m$ nonzero entries in each parameter matrix, defining a $\looped\utf_{H, L, \din, d}^m$ such that it is a looped universal transformer for the class $\tf_{H, L, \din, d}$, and
\[
R_Q^{h}, R_K^{h} \in \{0,1\}^{m \times d}, 
\quad
R_V^{h} \in \{0,1\}^{m \times m}, 
\quad
U \in \{0,1\}^{m \times \din},
\]
when the embedding dimension satisfies $m \geq \loopmupper$.
\end{theorem}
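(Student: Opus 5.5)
The plan is to follow the two-step strategy of the proof sketch of \Cref{thm:sparse_universal_transformer}: (i) match every attention matrix, $\tilde A^{l,h}=A^{l,h}$, and (ii) conditioned on (i), match every path product, $E R_V^{h_1}\dotsm R_V^{h_L}U = W_V^{1,h_1}W_O^{1,h_1}\dotsm W_V^{L,h_L}W_O^{L,h_L}$. The difference is that the matrices $R_V^h,R_Q^h,R_K^h$ are now shared across layers. So "which layer we are in" cannot be read off the parameters. It has to be encoded in \emph{where} in the embedding space the relevant signal sits at iteration $l$.

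\textbf{Coordinate layout.} I would split the $m$ coordinates into $(H^{L+1}-1)/(H-1)$ blocks of size $b:=\max\{2d,\din\}$. The blocks are indexed by the nodes $p=(g_1,\dots,g_k)$, $0\le k\le L$, of a complete $H$-ary tree of depth $L$; this accounts for $\loopmupper$.

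\textbf{Value routing.} Take $R_V^h$ to be the $\{0,1\}$ block-shift sending the block of node $(p,h)$ identically onto the block of its parent $p$, and killing every block whose last index differs from $h$ (and the root block). Then $R_V^h$ has at most $m$ ones. Because the routing is a pure copy, unrolling the recursion gives the following. After $l$ iterations, the block at a node $q$ of depth $k$ equals
\[
\sum_{(g_{k+1},\dots,g_{k+l})} \tilde A^{l,g_{k+l}}\dotsm \tilde A^{1,g_{k+1}}\, X^0 E_{(q,g_{k+1},\dots,g_{k+l})}.
\]
Here $E_r$ denotes the block of $E$ at node $r$. Content thus climbs one level per iteration, and the head used at step $s$ is recorded by the edge it climbs. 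After exactly $L$ iterations only leaf content has reached the root.

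\textbf{Readout and query/key.} Let $U$ read the first $\din$ coordinates of the root block. If leaf $(g_1,\dots,g_L)$ stores $W_V^{1,g_L}W_O^{1,g_L}\dotsm W_V^{L,g_1}W_O^{L,g_1}$ (reversed path order) in those coordinates, step (ii) follows as in \Cref{thm:sparse_universal_transformer}. Let $R_Q^h$ and $R_K^h$ sum, respectively, the first $d$ and the next $d$ coordinates of every block whose node's last index is $h$. At iteration $l$, the content sitting at nodes of depth $L-l+1$ came from leaves via the head prefix $(h_1,\dots,h_{l-1})$. So I would store, in those coordinates of a leaf, the products $W_V^{1,h_1}W_O^{1,h_1}\dotsm W_V^{l-1,h_{l-1}}W_O^{l-1,h_{l-1}}W_Q^{l,h_l}$ and the analogous $W_K$ products. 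These must be scaled by $H^{-(L-l)}$ to cancel the multiplicity of leaves sharing a prefix, and internal nodes must carry whatever correction terms are needed.

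\textbf{Main obstacle.} The hard step is (i) under weight tying. $R_Q^h$ reads the same coordinates at every iteration, so a single $d$-coordinate slot is seen at all $L$ iterations, once at each ancestor depth. A naive assignment therefore mixes queries intended for different layers. I would first try to resolve this by using the internal-node blocks as the storage for layer-$l$ query/key data, placed at depth $L-l+1$ only in coordinates disjoint from the $\din$ readout coordinates. I would then use the fact that content at internal nodes of depth $<L$ is annihilated before iteration $L$, so it never reaches $U$. What remains is to verify, by induction on $l$, that at iteration $l$ the only nonzero contributions to $\tilde X^{l-1}R_Q^h$ and $\tilde X^{l-1}R_K^h$ are exactly $X^{l-1}W_Q^{l,h}$ and $X^{l-1}W_K^{l,h}$. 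This uses the inductive hypothesis $\tilde A^{s,\cdot}=A^{s,\cdot}$ for $s<l$. If the interference cannot be made to cancel with this layout, the fallback is to enlarge each block to $2d$ query/key slots per depth offset. That would cost only a constant factor but would exceed $\loopmupper$, so I would try hard to make the layout above work first.

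The case $H=1$ is the same construction on a path of $L+1$ blocks, giving $(L+1)\max\{2d,\din\}$.
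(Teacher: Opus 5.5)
There is a genuine gap, and it is the step you flag as the main obstacle. Step (i) is never carried out, and the layout you propose for it fails. Since $R_V^h$ moves content exactly one level up per iteration, data stored in $E$ at depth $k$ sits at depth $k-l+1$ in $\tilde X^{l-1}$. Layer-$l$ query/key data stored at depth $L-l+1$ is therefore at depth $L-2l+2$ when it is needed. For every $l>(L+1)/2$ it has already reached the root or been annihilated, so $R_Q^h$ cannot see it. For example, with $L=3$, layer-$3$ data stored at depth $1$ is gone before iteration $3$. Moreover, your $R_Q^h$ sums over all nodes with last index $h$, so at iteration $l$ it also picks up content originally stored at depths $l+1,\dots,L$. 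That is the interference you leave unresolved.

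The missing idea is to let $R_Q^h,R_K^h$ read a \emph{single} block. The paper takes $R_Q^h=R_V^hU_Q$ and $R_K^h=R_V^hU_K$, with $U_Q,U_K$ selecting the first $d$ and next $d$ root coordinates. Thus they read only the query/key slots of the depth-one node $(h)$. After $l-1$ iterations that block holds only content originally stored at depth $l$. So one simply stores $W_V^{1,h_1}W_O^{1,h_1}\dotsm W_V^{l-1,h_{l-1}}W_O^{l-1,h_{l-1}}W_Q^{l,h}$, and its $W_K$ analogue, in the depth-$l$ node that climbs to $(h)$ along the head sequence $(h_1,\dots,h_{l-1})$. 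No $H^{-(L-l)}$ scaling and no correction terms are needed. Each identity in \eqref{eq:multi-head-QK} then reads a distinct set of columns of $E$. \Cref{lem:multieqatt} is pure unrolling and holds verbatim with tied weights, so it gives $\tilde A^{l,h}=A^{l,h}$ with no iteration-by-iteration interference analysis. Your summing $R_Q^h$ can also be rescued, because the resulting equations are triangular in depth and solvable by back-substitution from the leaves. But that again puts layer-$l$ data at depth $l$, not $L-l+1$.

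Once layer-$l$ data lives at depth $l$, your worry about the leaves becomes concrete. The layer-$L$ query/key data ($2d$ coordinates) and the OV products ($\din$ coordinates) must both sit in leaf blocks, in disjoint coordinates, and width $\max\{2d,\din\}$ cannot hold both. The paper's own write-up has exactly this collision. It sets $E_{B_{(p,h)}}=[M_pW_Q^h\ \ M_pW_K^h]$ for $|p|=L-1$ and also $E_{B_p}=M_p$ for $|p|=L$ on the same leaf blocks, while $U$ and $U_Q,U_K$ read overlapping root coordinates. The disjoint-support claim in \Cref{app:random_looped} fails for the same reason. So $\loopmupper$ is not reached by this approach.

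Indeed, take $\din=1$ and untied targets. The right-hand sides of \eqref{eq:multi-head-QK}--\eqref{eq:multi-head-OV} are then $M=\frac{2H(H^L-1)}{H-1}d+H^L$ distinct monomials in the target weights. Requiring the identities for every target therefore forces the stacked matrix $R_{\mathrm{all}}$ to have rank $M$, hence $m\ge M$. This exceeds $\loopmupper=2d\frac{H^{L+1}-1}{H-1}$ whenever $H^L>2d$.

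The repair is a simpler form of your fallback: give every block separate lanes of $2d$ query/key and $\din$ OV coordinates. That is width $2d+\din$, at most twice $\loopmupper$ in total. Extra per-depth-offset slots are unnecessary, because depth already encodes the iteration, and they would cost a factor of order $L$ rather than a constant.
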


Note that all the above constructions work even if there is attention masking, e.g., causal masking for decoder type of transformer, since we are matching all the attention matrices. 

\subsection{Random universal transformer}
The sparse constructions above demonstrate that universality is achievable and establish a sufficient embedding dimension. However, they rely on carefully designed, highly structured parameters. This raises a natural question: \emph{How special are universal transformers?} We now show that universality is in fact a generic property. In particular, a randomly initialized transformer is universal almost surely.

\begin{theorem}[Random universal transformer]
\label{thm:random_universal_transformer}
Let $\{R_Q^{l,h}, R_K^{l,h}, R_V^{l,h}\}_{l \in [L],\, h \in [H]}$ and $U$ be drawn independently from an absolutely continuous distribution over $\R^{m \times d}$, $\R^{m \times d}$, $\R^{m \times m}$, and $\R^{m \times \din}$, respectively (e.g., with i.i.d.\ Gaussian entries). Then, with probability $1$, the resulting transformer $\utf_{H, L, \din, d}^m$ is a universal transformer for the class $\tf_{H, L, \din, d}$, provided that $m \geq \mupper$.
\end{theorem}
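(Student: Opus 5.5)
The plan is to reduce universality to a single linear system in $E$ whose coefficient matrix is built only from the fixed parameters. We then show that this matrix has full column rank almost surely. For that step we use the sparse construction of \Cref{thm:sparse_universal_transformer} as a witness that a certain polynomial is not identically zero.

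\emph{Step 1: sufficient linear conditions.} Expanding over head paths as in the proof sketch of \Cref{thm:sparse_universal_transformer}, for every $l$ we have
\[
\tilde X^{l-1}=\sum_{(h_1,\dots,h_{l-1})}\tilde A^{l-1,h_{l-1}}\cdots\tilde A^{1,h_1}X^0 E R_V^{1,h_1}\cdots R_V^{l-1,h_{l-1}}.
\]
The analogous formula holds for $X^{l-1}$ with $E R_V^{1,h_1}\cdots R_V^{l-1,h_{l-1}}$ replaced by $W_V^{1,h_1}W_O^{1,h_1}\cdots W_V^{l-1,h_{l-1}}W_O^{l-1,h_{l-1}}$. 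Suppose $E$ satisfies, for every prefix $p=(h_1,\dots,h_{l-1})$ with $l\in[L]$, every head $h$, and every $\star\in\{Q,K\}$,
\[
E\,R_V^{1,h_1}\cdots R_V^{l-1,h_{l-1}}R_\star^{l,h}= W_V^{1,h_1}W_O^{1,h_1}\cdots W_V^{l-1,h_{l-1}}W_O^{l-1,h_{l-1}}W_\star^{l,h}.
\]
Suppose also that for every leaf path $(h_1,\dots,h_L)$,
\[
E R_V^{1,h_1}\cdots R_V^{L,h_L}U=W_V^{1,h_1}W_O^{1,h_1}\cdots W_V^{L,h_L}W_O^{L,h_L}.
\]
Then induction on $l$ gives $\tilde X^{l-1}R_Q^{l,h}=X^{l-1}W_Q^{l,h}$ and likewise for $K$, provided the attention matrices of earlier layers already match. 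Hence $\tilde A^{l,h}=A^{l,h}$ for all $(l,h)$, and the leaf conditions then give $\utf(X,E)=T(X)$ for every $X$ and every $n$.

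\emph{Step 2: count and stack.} Concatenate all the products $R_V^{1,h_1}\cdots R_\star^{l,h}$ and $R_V^{1,h_1}\cdots R_V^{L,h_L}U$ column-wise into one matrix $M\in\R^{m\times N}$. The conditions of Step 1 then read $EM=B$, where $B\in\R^{\din\times N}$ is determined by $T$. The column count is
\[
N=2d\sum_{l=1}^L H^{l}+H^L\din=\frac{2H(H^L-1)}{H-1}d+H^L\din
\]
for $H>1$, and $N=2dL+\din\le (L+1)\max\{2d,\din\}$ for $H=1$. In both cases $N\le\mupper\le m$. If $M$ has full column rank $N$, then $E=B(M^\top M)^{-1}M^\top$ solves $EM=B$ for every target $T$ simultaneously, so the same fixed parameters are universal.

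\emph{Step 3: full rank almost surely.} The entries of $M$ are polynomials in the entries of all parameter matrices. Full column rank is equivalent to nonvanishing of $\sum_S\det(M_S)^2$, where $S$ ranges over $N\times N$ row subsets; this sum is a polynomial $P$ in the parameters. The joint distribution is a product of absolutely continuous laws, hence absolutely continuous on the full parameter space. The zero set of a polynomial that is not identically zero has Lebesgue measure zero, so it suffices to exhibit one parameter choice with $P\neq0$.

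\emph{Main obstacle: the witness.} I would take the $\{0,1\}$ parameters of \Cref{thm:sparse_universal_transformer}, with $m\ge\mupper$, as the witness. The claim to verify is that in the tree construction each product $R_V^{1,h_1}\cdots R_V^{l-1,h_{l-1}}$ routes the root block to the block indexed by $(h_1,\dots,h_{l-1})$, and that $R_\star^{l,h}$ then selects a distinct $d$-coordinate sub-block of that block. If so, each column group of $M$ is a coordinate selector onto pairwise disjoint coordinate sets, so $M$ is, up to a row permutation, a $0/1$ matrix containing $I_N$, and it has rank $N$.

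This requires checking against the explicit construction that the query and key sub-blocks for different $(p,h,\star)$, and the leaf read-out coordinates of $U$, are genuinely disjoint and injective. For $H=1$ the construction reuses coordinates, so there I would instead argue directly from the block layout that the $2dL+\din$ columns remain independent. If the sparse construction's coordinates turned out to overlap, the fallback is to build a fresh $0/1$ witness by hand with disjoint blocks. Step 1 shows only this column independence is needed, which is easy to arrange whenever $m\ge N$.
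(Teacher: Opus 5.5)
Your proposal is correct and follows the paper's proof essentially step for step. The paper likewise stacks the $QK$ and $OV$ path constraints into one system $E R_{\mathrm{all}} = W_{\mathrm{all}}$ with $\mupper$ columns, and reduces full column rank to the nonvanishing of a determinant polynomial (one fixed minor there; your sum of squared minors is the variant it uses in the looped case). It then uses the sparse tree construction as the witness, in which every column group is a selector onto disjoint coordinates. Your $H=1$ worry is unnecessary: there too, each product $R_V^1\cdots R_V^{l-1}R_\star^l$, and likewise $R_V^1\cdots R_V^L U$, selects a distinct block of $E$ (the one holding the layer-$l$ data, resp.\ the final $OV$ block), so the same witness works, and your fresh disjoint-block fallback is also valid.
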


The universality condition can be reduced to a collection of algebraic non-degeneracy conditions on the parameters $\{R_Q, R_K, R_V, U\}$. These conditions fail only when certain matrices are rank-deficient or satisfy polynomial constraints. The theorem follows from the fact that such degeneracies form a measure-zero set. This also applies to the looped setting.

\begin{theorem}[Random looped universal transformer]
\label{thm:random_looped_universal_transformer}
Let $\{R_Q^{h}, R_K^{h}, R_V^{h}\}_{h \in [H]}$ and $U$ be drawn independently from an absolutely continuous distribution over $\R^{m \times d}$, $\R^{m \times d}$, $\R^{m \times m}$, and $\R^{m \times \din}$, respectively (e.g., with i.i.d.\ Gaussian entries). Then, with probability $1$, the resulting looped transformer $\looped\utf_{H, L, \din, d}^m$ is a looped universal transformer for the class $\tf_{H, L, \din, d}$, when the embedding dimension satisfies $m \geq \loopmupper$.
\end{theorem}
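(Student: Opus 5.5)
The plan is to reduce universality to a single full-rank condition on a matrix built from the fixed parameters. That condition is polynomial in the parameters, so it holds almost surely once it holds for one explicit choice.

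\textbf{Step 1 (reduction to a linear system).} For a head sequence $p=(h_1,\dots,h_k)\in[H]^k$, write $R_V^{p}:=R_V^{h_1}\cdots R_V^{h_k}$ and $W^{p}:=W_V^{1,h_1}W_O^{1,h_1}\cdots W_V^{k,h_k}W_O^{k,h_k}$, with $R_V^{\emptyset}=I_m$ and $W^{\emptyset}=I_{\din}$. As in the proof sketch of \Cref{thm:sparse_universal_transformer}, expand the target and the looped model as sums over head paths:
\[
X^{l-1}=\sum_{|p|=l-1}A^{p}X^0W^{p},\qquad \tilde X^{l-1}=\sum_{|p|=l-1}\tilde A^{p}X^0ER_V^{p}.
\]
Here $A^{p}$ is the corresponding product of attention matrices. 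Suppose $E$ satisfies, for every $l\in[L]$, $h\in[H]$ and $|p|=l-1$,
\[
ER_V^{p}R_Q^{h}=W^{p}W_Q^{l,h},\qquad ER_V^{p}R_K^{h}=W^{p}W_K^{l,h},\qquad\text{and}\qquad ER_V^{p}U=W^{p}\ \text{ for }|p|=L.
\]
Then an induction on $l$ gives $\tilde A^{p}=A^{p}$ for every prefix: the queries and keys coincide, hence so do the attention matrices, hence so do the outputs. This holds for all $X$ and $n$.

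These conditions say $EM=B$, where $B$ depends on $T$. The fixed matrix is
\[
M:=\bigl[\,R_V^{p}R_Q^{h},\ R_V^{p}R_K^{h}\ (|p|\le L-1,\ h\in[H]);\ R_V^{p}U\ (|p|=L)\,\bigr]\in\R^{m\times N},
\]
with $N=\sum_{l=1}^{L}2dH^{l}+H^{L}\din\le\loopmupper$. If $M$ has full column rank $N$, then $EM=B$ is solvable for every $B$, so the looped model is universal.

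\textbf{Step 2 (genericity).} Full column rank of $M$ means some $N\times N$ minor of $M$ is nonzero. The sum of squares of these minors is a polynomial $P$ in the entries of $(R_Q^h,R_K^h,R_V^h,U)$. If $P$ is not identically zero, its zero set has Lebesgue measure zero. Since the parameters are drawn from an absolutely continuous distribution, $P\neq0$ then holds with probability $1$. It remains to exhibit one parameter choice with $P\neq0$.

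\textbf{Step 3 (explicit witness; the main step).} I would not rely on the sparse construction of \Cref{thm:looped_sparse_universal_transformer} verbatim, since universality there might use $E$ in ways that do not force full rank of $M$. Instead I build the witness directly.

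Index blocks of size $s=\max\{2d,\din\}$ by the nodes $q$ of the $H$-ary tree of depth $L$. There are $(H^{L+1}-1)/(H-1)$ nodes, so $m\ge\loopmupper$ suffices, and any surplus coordinates are left unused. The parameters are chosen as follows.
\begin{itemize}[leftmargin=*]
\item Let $R_V^h$ be the $0/1$ matrix mapping the coordinates of block $q$ to those of block $(q,h)$ for $|q|<L$, and zero elsewhere. Then $R_V^{p}$ carries block $q$ to block $(q,p)$, and the root block to block $p$.
\item Let $S_Q,S_K$ read, respectively, the first $d$ and the next $d$ coordinates of every block, summed over blocks. Set $R_Q^h:=R_V^hS_Q$ and $R_K^h:=R_V^hS_K$. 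These are still $0/1$ matrices.
\item Let $U$ read the first $\din$ coordinates of the leaf blocks.
\end{itemize}
Then the column $R_V^{p}R_Q^{h}=R_V^{(p,h)}S_Q$ has row support consisting of the first $d$ coordinates of the blocks $q$ with $(q,p,h)$ a valid node, and its restriction to the root block reads exactly the first $d$ coordinates of node $(p,h)$. The same holds for keys, and $R_V^{p}U$ reads the leaf $p$.

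Hence, if we restrict $M$ to the rows indexed by coordinates one could pair with a root-supported $E$, the column groups are distinct coordinate selectors on pairwise disjoint coordinate sets. Concretely, the submatrix of $M$ on rows $\{$first $2d$ coordinates of non-root nodes$\}\cup\{$first $\din$ coordinates of leaves$\}$ is a permutation matrix, giving a nonzero $N\times N$ minor.

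The delicate point is this bookkeeping: each pair (node $(p,h)$, query or key) and each leaf must own a disjoint set of coordinates. When $L\ge1$ the leaves are non-root nodes, so queries/keys ($2d$ coordinates) and unembedding ($\din$ coordinates) both want the leaf blocks. This is the reason for taking $s=\max\{2d,\din\}$ and for checking that the leaf queries/keys are never needed: queries exist only at nodes of depth $\le L$ whose parent has depth $\le L-1$, which includes the leaves. So I would actually verify that the query/key coordinates and the readout coordinates of a leaf are disjoint, placing the readout in the last $\din$ coordinates of leaf blocks if $2d+\din>s$. If they cannot be made disjoint, I would fall back to the sparse construction's layout, which the count in $\loopmupper$ was designed for.

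For $H=1$ the same argument runs on a path of $L+1$ blocks.
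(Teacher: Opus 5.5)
Your plan (stack the query, key and readout constraints into $EM=B$, take the sum of squared maximal minors, exhibit one full-rank witness) is the paper's plan, and your column count $N=2d\sum_{l=1}^{L}H^l+\din H^L$ is correct. The gap is the claim $N\le\loopmupper$, which is false for $H\ge2$ outside lopsided regimes. For $H=2$, $L=1$, $\din=2d$ one has $N=4\din$ but $\loopmupper=3\din$. In general, for $H\ge 2$, $N\le\loopmupper$ holds only when $\din H^L\le 2d$ or $2dH\le\din$. Whenever $N>m$, $M\in\R^{m\times N}$ cannot have full column rank, your $P$ is identically zero, and no parameter choice can be a witness.

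The leaf collision you flag in Step~3 is exactly this obstruction, and it cannot be repaired inside blocks of width $\max\{2d,\din\}$. A leaf must host both the last-layer query/key selectors and the readout, and $2d+\din>\max\{2d,\din\}$, so moving the readout to the last $\din$ coordinates does not separate them. Falling back on the paper's sparse looped layout does not help. For $|p|=L-1$ that layout writes both $[M_pW_Q^h\;\,M_pW_K^h]$ and $M_{(p,h)}$ into the same leaf block $B_{(p,h)}$. The paper's proof of this theorem uses that layout as its witness and asserts pairwise disjoint supports without addressing the leaves; its $M_{\mathrm{loop}}$ also drops a factor $H$ from the $2d$ term. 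So neither argument reaches $m\ge\loopmupper$ for $H\ge2$. That would need a mechanism other than solving $EM=B$ exactly, and the stated bound may simply have to be enlarged.

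Two further repairs are needed even for what the argument can prove.

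First, your witness routes in the wrong direction. Suppose $ER_V^p$ moves the root block of $E$ to block $p$ and $U$ reads the leaf blocks. Then $ER_V^pU$ is the same matrix (the first $\din$ columns of $E$'s root block) for every leaf $p$. Likewise, with $S_Q$ summing over all blocks, $ER_V^{(p,h)}S_Q$ depends only on $|p|$. So many columns of $M$ coincide. You need $ER_V^p$ to pull the content of node $p$ into one fixed readout block, as in the paper's $R_V^hB_p=B_{(h,p)}$, $U=B_\phi$, $R_Q^h=R_V^hU_Q$, with all readouts acting on that block only.

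Second, even for $H=1$, the choice $R_Q=R_VS_Q$ puts layer $L$'s query/key in the last block together with the readout. If instead $R_Q,R_K$ read the readout block directly (as in the paper's single-head lemma), layer $l$ sits at depth $l-1$. Then $N=2dL+\din\le(L+1)\max\{2d,\din\}$ and the witness works.

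For $H\ge2$, a valid witness comes from blocks of width $2d+\din$, with the query/key readouts and the value readout on disjoint coordinates of the root block. This gives the theorem under $m\ge(2d+\din)\frac{H^{L+1}-1}{H-1}$, which is at most $2\loopmupper$.
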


These results show that the embedding dimension $m = \Theta(H^L \max\{d, \din\})$ is sufficient for universality, and that almost all transformers of this size are universal.

\section{Optimality}
\label{sec:optimality}

In this section, we establish lower bounds on the embedding dimension $m$ required for universality. Our upper bounds in \Cref{sec:contructing_utf} are achieved by explicitly matching the attention matrices of the target transformer, i.e., enforcing $\tilde A^{l,h} = A^{l,h}$ for all $(l,h)$. 
We prove a lower bound on $m$ for such ``attention-matching'' universal transformers.
To do this, we prove a lower bound on $m$ for a strictly stronger class of models that is additionally \emph{given} the desired attention matrices as input.
This, in turn, implies an $H^{\Omega(L)}$ lower bound on the embedding dimension for attention-matching approaches.

\paragraph{Shared-attention universal computation.}
We formalize this stronger model as follows. The shared-attention universal transformer, denoted $\sutf_{H, L, \din, d}^m$, is parameterized only by value matrices $\{R_V^{l,h}\}_{l \in [L], h \in [H]}$. It takes as input a sequence $X^0 \in \R^{n \times \din}$, an embedding matrix $E \in \R^{\din \times m}$, an unembedding matrix $U \in \R^{m \times \din}$, and attention matrices $\{A^{l,h}\}_{l \in [L], h \in [H]}$. The computation proceeds identically to $\utf_{H, L, \din, d}^m$, except that the attention matrices are externally provided and fixed, i.e., $\tilde A^{l,h}$ is replaced by $A^{l,h}$ for all $(l,h)$.

\begin{theorem}[Lower bound for universal transformers]
\label{thm:lower_bound_utf}
Let $\din = 1$. Consider any transformer class $\tf_{H, L, 1, d}$. If there exists a $\sutf_{H, L, \din, d}^m$ (or $\looped\sutf_{H, L, \din, d}^m$) such that for any $T \in \tf_{H, L, 1, d}$ (or $\looped\tf_{H, L, 1, d}$) with attention matrices $\{A^{l,h}\}_{l \in [L], h \in [H]}$,  
\[
\sutf_{H, L, \din, d}^m (X, E, U, \{A^{l,h}\}_{l \in [L], h \in [H]}) = T(X)
\]
for all $X \in \R^{n \times \din}, n \in \mathbb{N}$, and $n \times n$ row-stochastic matrices $A^{l,h}$, then $m \ge H^{L/2}$.
\end{theorem}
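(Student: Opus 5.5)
The plan is a linear-independence (dimension-counting) argument. Since $\din=1$, the universal model's $E$ and $U$ are just a row vector and a column vector. Because the attention matrices are supplied externally and may be arbitrary, we can choose them to separate the $H^L$ attention paths. Then, for each fixed path, we choose a target whose value products vanish on every other path. This forces the $H^L$ path products of the fixed value matrices $R_V^{l,h}$ to be linearly independent in $\R^{m\times m}$, giving $H^L\le m^2$, i.e.\ $m\ge H^{L/2}$.

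\emph{Step 1 (path expansion).} With $\din=1$, each $W_V^{l,h}W_O^{l,h}$ is a scalar $a_{l,h}$, so
\[
T(X)=\sum_{w=(h_1,\dots,h_L)\in[H]^L}\Big(\prod_{l=1}^L a_{l,h_l}\Big)\,A^{L,h_L}\cdots A^{1,h_1}X .
\]
Write $M_w:=R_V^{1,h_1}\cdots R_V^{L,h_L}\in\R^{m\times m}$, $e:=E\in\R^{1\times m}$ and $u:=U\in\R^{m\times 1}$. Then
\[
\sutf_{H,L,1,d}^m(X,E,U,\{A^{l,h}\})=\sum_{w} (e M_w u)\,A^{L,h_L}\cdots A^{1,h_1}X .
\]

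\emph{Step 2 (separating paths via attention).} I will exhibit some $n$ and row-stochastic matrices $A^{l,h}$ such that the $H^L$ products $P_w:=A^{L,h_L}\cdots A^{1,h_1}$ are linearly independent as $n\times n$ matrices. Applying both sides to all $X\in\R^{n}$ then gives the coefficient identity $eM_wu=\prod_l a_{l,h_l}$ for every $w$. A concrete choice takes $n=1+H+\dots+H^L$ nodes of the complete $H$-ary tree of depth $L$. Let $A^{l,h}$ be the $0/1$ matrix sending each depth-$(L-l)$ node (as a row) to its $h$-th child at depth $L-l+1$, and every other node to itself. Then $P_w$ maps the root to the leaf indexed by $w$, so the $P_w$ differ in the root row and are independent. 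Some care is needed to ensure these matrices are realizable as attention matrices of an actual target in $\tf_{H,L,1,d}$. If the theorem is read as quantifying over all row-stochastic $A^{l,h}$, as the final clause suggests, this is immediate. Otherwise, I would use hard-attention limits together with a continuity or polynomial-identity argument, since both sides are polynomial in the entries of the $A$'s.

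\emph{Step 3 (indicator targets).} For a fixed $w_0=(h_1^0,\dots,h_L^0)$, choose the target with $a_{l,h}=\mathbf 1[h=h^0_l]$. Universality supplies $e=e_{w_0}$ and $u=u_{w_0}$ with $e_{w_0}M_wu_{w_0}=\mathbf 1[w=w_0]$. Now suppose $\sum_w c_wM_w=0$. Multiplying on the left by $e_{w_0}$ and on the right by $u_{w_0}$ gives $c_{w_0}=0$. Hence $\{M_w\}_{w\in[H]^L}$ is linearly independent in an $m^2$-dimensional space, so $H^L\le m^2$.

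\emph{Looped case and main obstacle.} The main obstacle is the looped variant. There the scalars $a_h$ are shared across layers, so the indicator trick of Step 3 only isolates paths up to their multiset of heads, and the naive count drops to $\binom{L+H-1}{H-1}$. My first attempt would use the freedom in the externally supplied per-layer attention matrices, which the theorem quantifies over. I would compose the tree construction with layer-dependent relabelings of children, so that each path is tagged in a way the coefficient identity can distinguish. Failing that, I would argue independence of the $M_w$ directly from the requirement that, for generic $a$, the $H^L$ coefficient functions $w\mapsto eM_wu$ can be realized.
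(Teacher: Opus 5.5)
Your argument for the untied case is correct and follows the paper's skeleton. You use the freedom in the supplied attention to force the termwise identities $eM_wu=\prod_l a_{l,h_l}$, then apply indicator targets and a dimension count in $\R^{m\times m}$. Two differences are worth recording.

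First, the paper takes $n=2$ and $X^0=(1,0)^\top$. For each path $s$ it uses a separate assignment built from $I$ and the $2\times 2$ matrix $B$ with both rows equal to $(0,1)$, which isolates $s$ in the $(1,1)$ entry. It therefore varies the attention over $H^L$ configurations while holding $E,U$ fixed. Your single tree configuration makes all $P_w$ linearly independent at once, so it only needs $E,U$ to be independent of $X$. It thus also covers the appendix formulation (Theorem~\ref{thm:lower_bound_matching_attention}), where $E,U$ are chosen after the target's $A^{l,h}$. There, the paper's step of varying $A$ with $\alpha$ held fixed is not available.

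Second, your conclusion that the $M_w$ are linearly independent is the dual of the paper's left-null-vector argument. There, a nonzero $y$ orthogonal to the image of $X\mapsto(\tr(M_wX))_w$, with $y_{\mathbf{h}^*}\neq 0$, makes the indicator target of $\mathbf{h}^*$ unreachable.

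The looped part is a genuine gap, and neither of your fallbacks can close it: for looped targets the claimed bound is false. A looped target has path coefficients $a^{c(w)}=\prod_h a_h^{c_h(w)}$, where $c(w)$ counts the occurrences of each head in $w$. Once $eM_wu=a^{c(w)}$ holds for all $w$, the two outputs agree for every choice of the $A^{l,h}$. So no tagging via attention can force more than these termwise identities. Those identities only require realizing a $\binom{L+H-1}{H-1}$-dimensional family of coefficient vectors, not independence of all $H^L$ products.

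Concretely, take $H=2$, $m=L+1$, $R_V^1=\diag(1,2,\dots,L+1)$, $R_V^2=I_m$, and $U=\mathbf{1}$. Then $EM_wU=\sum_i E_i\,i^{k(w)}$, where $k(w)$ is the number of occurrences of head $1$. The Vandermonde matrix $(i^k)_{0\le k\le L,\,1\le i\le L+1}$ is invertible, so there is an $E$ with $\sum_i E_i\,i^k=a_1^k a_2^{L-k}$ for every $k$. This looped shared-attention model simulates every looped target, yet $L+1<2^{L/2}$ for $L\ge 6$. For general $H$, generic diagonal $R_V^h$ give $m=\binom{L+H-1}{H-1}$.

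The paper's one-line extension (take $W^h=1$ iff $h=h_l^*$) does not repair this, because a tied $W^h$ cannot depend on $l$. A Step-3-style argument yields only $m^2\ge\binom{L+H-1}{H-1}$ for looped targets. Suppose instead the looped clause is read as a looped model that must simulate the untied class $\tf_{H,L,1,d}$. Then $m\ge H^{L/2}$ follows immediately from your untied argument, since Step 3 never uses that the $R_V^{l,h}$ differ across layers.
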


Theorem~\ref{thm:lower_bound_utf} shows that the embedding dimension must grow exponentially in the depth $L$, even in the simplest setting $\din = 1$. Notably, this lower bound holds in the shared-attention model, where the desired attention matrices are given as part of the input. Thus, the complexity arises from representing the exponentially-many computation paths induced by the multi-head attention structure.

In particular, across $L$ layers and $H$ heads, there are $H^L$ possible sequences of attention heads, each corresponding to a distinct computation path. The universal transformer must be able to simultaneously represent and combine these paths. The lower bound shows that compressing this exponential-size family of behaviors into a low-dimensional embedding is impossible.

\section{Experimental studies}
In this section, we experimentally evaluate the universal transformers constructed in \Cref{sec:contructing_utf} on two canonical reasoning tasks—parenthesis balancing and $k$-hop induction heads—which are widely used as toy benchmarks for studying algorithmic reasoning in language models \citep{yao-etal-2021-self, sanford2024}.
These experiments complement our theoretical results on representational power by demonstrating empirically that standard gradient-based optimization can successfully learn the required embeddings for universal transformers to solve these tasks.

\subsection{Tasks}

\paragraph{Parenthesis Balancing (Dyck-1).}
In this task, the model is given a sequence of left- and right- parentheses and must output whether the parentheses in the sequence are properly balanced.
Such sequences form the Dyck-1 language, a canonical example of a context-free language~\citep{chompsky1963algebraic}. In our setup, the vocabulary size is $4$: two parenthesis tokens and two labels. The maximum length for the input parentheses sequences is $60$.

\paragraph{$k$-hop induction heads.}
This task generalizes the standard induction head mechanism studied in language models~\citep{elhage2021mathematical,olsson2022context}. In the standard induction heads, the model outputs the token that follows the most recent previous occurrence of the current token in the context. The $k$-hop variant extends this mechanism by recursively composing $k$ such retrieval steps. For example, for an input $X=aa\textcolor{blue}{b} \textcolor{red}{c} \textcolor{red}{c}a \textcolor{blue}{b} \textcolor{red}{c} a$, the $2$-hop induction head output for the last token $a$ is $\textcolor{red}{c}$.
Under a widely-believed complexity assumption, $\Theta(\log k)$ layers are necessary and sufficient to solve this task~\citep{sanford2024}.
Therefore, we use this task to evaluate the performance of deeper universal transformers. For this task, the vocabulary size is $30$, including the number of hops tokens, $4$ character tokens and label tokens. The maximum sequence length is $100$. 

\paragraph{Architectures.}
We implement the universal transformer $\utf_{H, L, \din, d}^m$ using one-hot token representations, so that $\din$ corresponds to the vocabulary size. Following the construction in \Cref{thm:sparse_universal_transformer}, for head dimension $d$, the embedding dimension is given by
$m = \mupper$.
We compare three models: (i) the constructed sparse universal transformer from \Cref{thm:sparse_universal_transformer}, (ii) a random universal transformer with parameters initialized i.i.d.\ from $\text{Uniform}(-1/\sqrt{m},\, 1/\sqrt{m})$ (the default PyTorch initialization), and (iii) a fully trained transformer. Following \citet{zhong2024algorithmic}, for both the sparse and random universal transformers, we train only the embedding matrix $E$ and the unembedding matrix $U$, while keeping all other parameters fixed. For random universal transformers, all the results are averaged across $5$ seeds. 

\subsection{Results}

\Cref{tab:summary_results_combined} summarizes the main results. We first consider the
attention-only setting, which directly matches our theoretical model $\utf_{H, L, \din, d}^m$. On parenthesis balancing, the sparse universal transformer reaches $100.0\%$ accuracy,
matching the fully trained transformer and the random universal also reaches non-trivial accuracy $94.4\%$. This shows that the explicit construction from
\Cref{thm:sparse_universal_transformer} is not merely a worst-case
representation result: the embedding and unembedding matrices can be learned by
gradient descent to solve a nontrivial algorithmic task.

The induction-head tasks are more challenging in the attention-only setting. 
Both sparse and random universal transformers obtain nontrivial accuracies (random guessing gives $20\%$) but far from optimal.
This suggests that while the attention-only universal transformers have the representational capacity, optimization becomes increasingly difficult as the task requires
more depth.
Adding standard components that stabalize training can substantially improve performance. With residual connections and layer normalization, both sparse and random universal
transformers reach much higher accuracy. In particular, the sparse universal transformer reaches $100.0\%$ accuracy on all reported induction-head tasks, while the random universal transformer achieves between $85.7\%$ and $96.1\%$ accuracy.

\begin{table}[t]
\centering
\small
\setlength{\tabcolsep}{3pt}

\begin{subtable}[t]{0.49\linewidth}
\centering
\resizebox{\linewidth}{!}{%
\begin{tabular}{lccc}
\toprule
\textbf{Task} & \textbf{Sparse} & \textbf{Random} & \textbf{Fully-trained} \\
\midrule
Parenthesis Balancing    & 100.0\% & 94.4\% & 100.0\% \\
$1$-hop Induction Head   & 30.2\% & 29.4\% & 8.4\% \\
$2$-hop Induction Head   & 33.2\% & 30.5\% & 51.8\% \\
$3$-hop Induction Head   & 39.2\% & 32.6\% & 99.5\% \\
$4$-hop Induction Head   & 39.4\% & 35.2\% & 95.6\% \\
\bottomrule
\end{tabular}
}
\vspace{0.4em}
\caption{Attention-only.}
\label{tab:summary_results_attn_only}
\end{subtable}
\hfill
\begin{subtable}[t]{0.49\linewidth}
\centering
\resizebox{\linewidth}{!}{%
\begin{tabular}{lccc}
\toprule
\textbf{Task} & \textbf{Sparse} & \textbf{Random} & \textbf{Fully-trained} \\
\midrule
Parenthesis Balancing    & 100.0\% & 100.0\% & 93.0\% \\
$1$-hop Induction Head   & 100.0\% & 96.1\% & 100.0\% \\
$2$-hop Induction Head   & 100.0\% & 85.7\% & 100.0\% \\
$3$-hop Induction Head   & 100.0\% & 92.6\% & 100.0\% \\
$4$-hop Induction Head   & 100.0\% & 89.2\% & 100.0\% \\
\bottomrule
\end{tabular}
}
\vspace{0.4em}
\caption{Adding residual and layer normalization.}
\label{tab:summary_results_layernorm}
\end{subtable}

\caption{Accuracy on parenthesis balancing and $k$-hop induction-head tasks. Parenthesis balancing uses $4$-head, $2$-layer transformers with head dimension $24$ and embedding dimension $1024$. The $1$-, $2$-, $3$-, and $4$-hop induction-head tasks use $2$-head transformers with depths $2,3,4,4$, head dimensions $28,28,30,30$, and embedding dimensions $456,1024,2280,2280$, respectively.}
\label{tab:summary_results_combined}
\end{table}

\begin{table}[t]
\centering
\small
\setlength{\tabcolsep}{3pt}

\begin{subtable}[t]{0.49\linewidth}
\centering
\resizebox{\linewidth}{!}{%
\begin{tabular}{lccc}
\toprule
Model & Sparse & Random & Fully-trained \\
\midrule
Attention-only & 100\% & 94.4\% & 100\% \\
+ residual & 100\% & 95.4\% & 100\% \\
+ residual and LN & 100\% & 100\% & 93.0\% \\
+ MLP & 100\% & 100\% & 100\% \\
+ MLP and LN & 100\% & 100\% & 94.4\% \\
\bottomrule
\end{tabular}
}
\vspace{0.4em}
\caption{Parenthesis Balancing.}
\label{tab:dyck_parenthesis_balancing}
\end{subtable}
\hfill
\begin{subtable}[t]{0.49\linewidth}
\centering
\resizebox{\linewidth}{!}{%
\begin{tabular}{lccc}
\toprule
Model & Sparse & Random & Fully-trained \\
\midrule
Attention-only & 33.2\% & 30.5\% & 51.8\% \\
+ residual & 53.8\% & 54.3\% & 58.4\% \\
+ residual and LN & 100\% & 85.7\% & 100\% \\
+ MLP & 54.6\% & 54.1\% & 100\% \\
+ MLP and LN & 100\% & 66.2\% & 100\% \\
\bottomrule
\end{tabular}
}
\vspace{0.4em}
\caption{2-hop Induction Heads.}
\label{tab:induction_head_2hop}
\end{subtable}
\caption{Ablation study on parenthesis balancing and $2$-hop induction-head tasks. Accuracies are reported for $4$-head, $2$-layer transformers with head dimension $24$ and embedding dimension $1024$ on parenthesis balancing, and for $2$-head, $3$-layer transformers with head dimension $28$ and embedding dimension $1024$ on $2$-hop induction heads.}
\label{tab:ablation_results}
\end{table}

\subsubsection{Ablation studies}

We further study how different components of standard transformers can affect the training optimization performance. 
\Cref{tab:ablation_results} studies the effect of adding residual connections,
layer normalization, and MLP layers. 
The MLP layers are randomly initialized using the PyTorch default initialization scheme. For the sparse and random universal transformers, the MLP layers are also kept fixed and untrained.

On parenthesis balancing, the sparse universal transformer is robust across all variants and achieves $100.0\%$ accuracy in every setting. 
The random universal transformer also improves from $94.4\%$ in the attention-only setting to $100.0\%$ once residual connections and layer normalization or MLP layers are included. 
The $2$-hop induction-head task reveals a sharper distinction between architectural components. Adding layer normalization together with residual connections can largely improve the performance, and this is also robust when adding frozen random MLP layers. 

Overall, the sparse construction gives a useful inductive bias for optimization: it consistently matches or exceeds the random universal transformer when only the embedding and unembedding are trained.

\subsubsection{Embedding dimension scaling}
We next study how performance empirically depends on the embedding dimension $m$. Our theory shows that the universal transformer exists when the embedding dimension grows with the number of attention paths, on the order of $H^L$ up to factors depending on $d$ and $\din$, but this does not account for the optimization difficulty of finding the desired embedding. Therefore, we experimentally investigate how large $m$ must be for standard gradient-based training to find a good solution. \Cref{fig:scaling_law} compares two parameterizations on the $2$-hop induction-head task. In the constructed parameterization, we set $m=\mupper$ according to Theorem~\ref{thm:sparse_universal_transformer}; in the standard parameterization, we instead use $m=Hd$, matching the usual transformer hidden dimension. This gives the empirical evidence that universal transformer can be found when the embedding dimension $m$ is sufficiently large.

\begin{figure}
    \centering
    \begin{subfigure}[t]{0.4\linewidth}
        \centering
        \includegraphics[width=\linewidth]{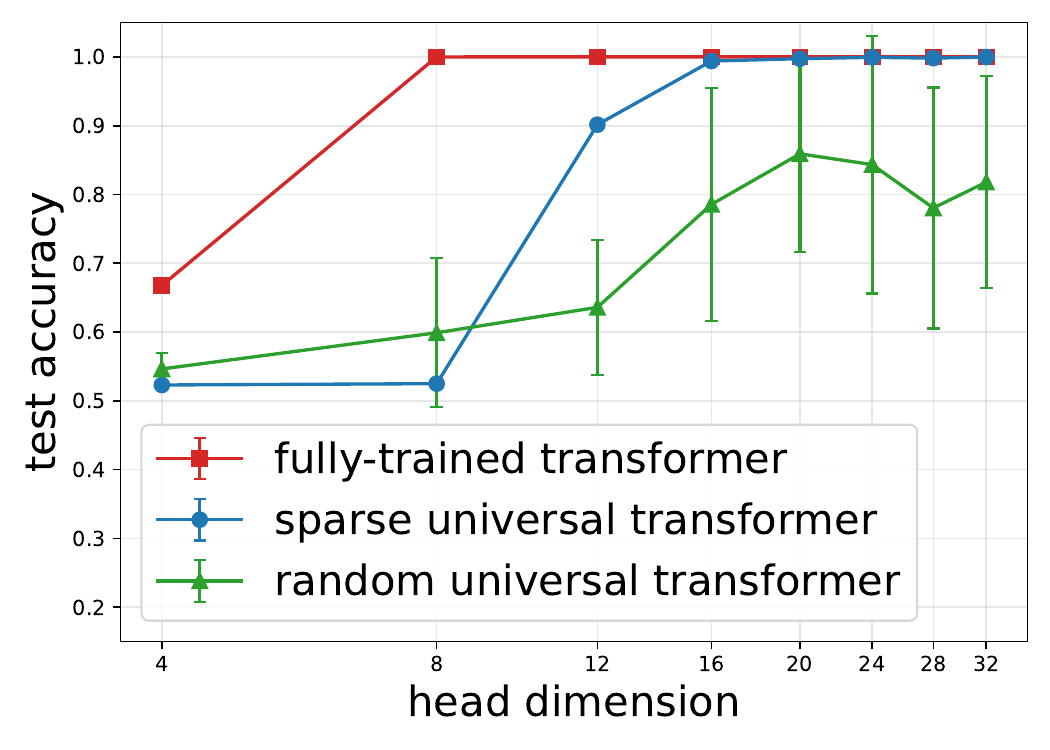}
        \caption{Constructed parameterization $m=\mupper$}
        \label{fig:scaling_law_2_hop}
    \end{subfigure}
    \quad
    \begin{subfigure}[t]{0.4\linewidth}
        \centering
        \includegraphics[width=\linewidth]{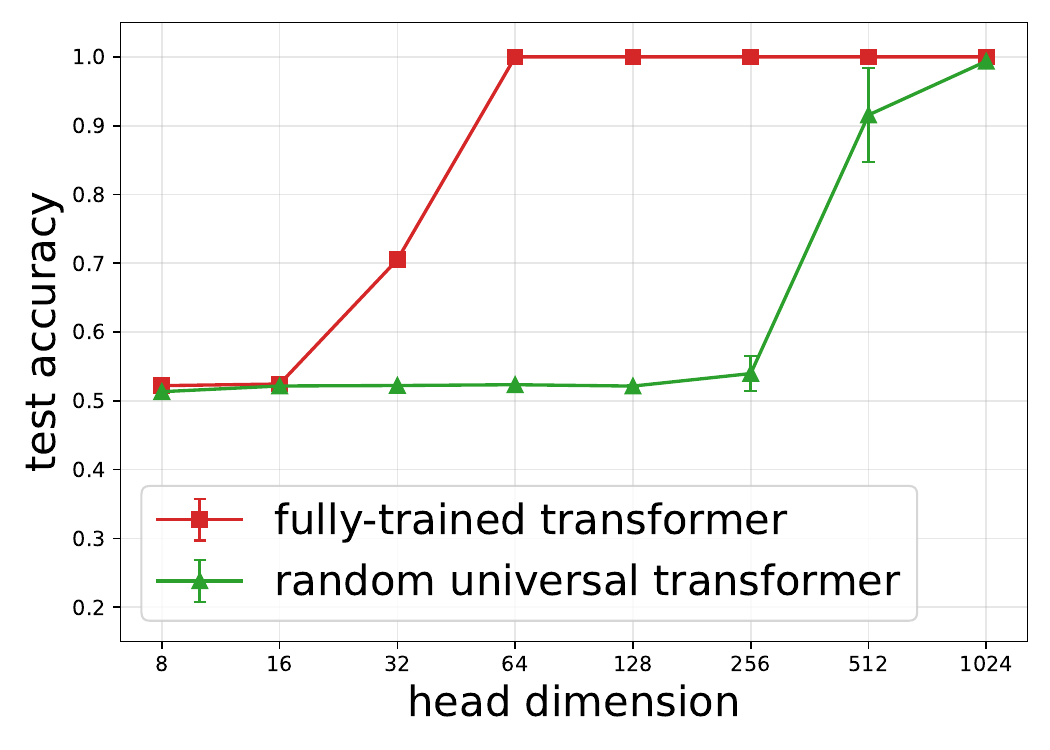}
        \caption{Standard parametrization $m=Hd$}
        \label{fig:scaling_law_2_hop_max_accuracy}
    \end{subfigure}
    \caption{\textbf{Embedding dimension scaling for $2$-hop induction heads.}
    Results are shown for $2$-head, $3$-layer transformers.
    \textit{Left:} Performance under constructed
    parameterization from \Cref{thm:sparse_universal_transformer}, where $m = \mupper$.
    \textit{Right:} Performance under standard parameterization, where $m = Hd$.
    Error bars indicate $\pm 1$ standard deviation across $5$ random seeds. 
    }
    \label{fig:scaling_law}
\end{figure}

\section{Conclusion and future work}
In this paper, we study the universal transformer: a fixed-parameter transformer that can simulate any transformer in a prescribed class by changing only the input embedding. In this view, the embedding matrix acts as a description of the target
model, while the internal attention and value parameters define a fixed computational template. We gave explicit sparse constructions of universal transformers, extended the construction to the looped weight-tied setting, and
showed that universality is ubiquitous: randomly initialized transformers are universal almost surely. 

Our theory focuses on attention-only transformers. Extending these results to more realistic architectures with MLP layers, residual connections, and normalization is an important direction for future work. Our empirical results suggest that this extension is especially promising, since these components substantially improve
performance in our experiments. We also note that our lower bound is proved only for a restricted matching-attention model, in which the universal transformer must match the target transformer's attention patterns. Establishing lower bounds for more general forms of simulation remains an important open problem.

\begin{ack}
  This work was supported in part by the National Science Foundation under grant DMS-2502259, the Office of Naval Research under grant N00014-24-1-2700, a JP Morgan Faculty Award, and a Research Award from the Columbia Center of AI Technology in collaboration with Amazon.
  We are grateful to Ezra Edelman, Surbhi Goel and Enric Boix-Adsera for their helpful feedback on the construction of universal transformers.
  We are also grateful to Sanjoy Dasgupta for telling us about \citep{zhong2024algorithmic}.
\end{ack}

\bibliographystyle{plainnat}
\bibliography{reference} 
\newpage
\appendix

\input{neurips/appendix}



\end{document}

%% file: neurips/macro.tex
\Crefformat{equation}{#2(#1)#3}
\crefformat{equation}{#2(#1)#3}
\Crefrangeformat{equation}{#3(#1)#4--#5(#2)#6}
\crefrangeformat{equation}{#3(#1)#4--#5(#2)#6}
\Crefmultiformat{equation}{#2(#1)#3}{ and~#2(#1)#3}{, #2(#1)#3}{ and~#2(#1)#3}
\crefmultiformat{equation}{#2(#1)#3}{ and~#2(#1)#3}{, #2(#1)#3}{ and~#2(#1)#3}

\setlist[description]{font=\normalfont}
\mathtoolsset{centercolon}

\DeclarePairedDelimiterXPP\ind[1]{\mathds{1}}{\lbrace}{\rbrace}{}{#1}
\DeclarePairedDelimiterX\eval[1]{\lbrace}{\rvert}{#1 \delimsize\rbrace}
\DeclarePairedDelimiter\ip{\langle}{\rangle}

\DeclarePairedDelimiter\del{\lparen}{\rparen}
\DeclarePairedDelimiter\sbr{\lbrack}{\rbrack}

\DeclarePairedDelimiter\set{\lbrace}{\rbrace}

\DeclarePairedDelimiterX\Set[1]\lbrace\rbrace{%
  
  #1
}
\DeclarePairedDelimiterX\Sbr[1]\lbrack\rbrack{%
  
  #1
}
\DeclarePairedDelimiterX\Del[1]\lparen\rparen{%
  
  #1
}

\newtheorem{theorem}{Theorem}[section]
\newtheorem*{theorem*}{Theorem}

\newtheorem{lemma}[theorem]{Lemma}
\newtheorem{definition}[theorem]{Definition}
\newtheorem{assumption}[theorem]{Assumption}

\DeclareMathOperator{\rank}{rank}
\DeclareMathOperator\softmax{softmax}
\DeclareMathOperator\tr{tr}
\DeclareMathOperator\mat{mat}
\DeclareMathOperator\diag{diag}

\newcommand\T{{\scriptscriptstyle{\mathsf{T}}}}
\newcommand\R{\mathbb{R}}

\newcommand\din{d_{\operatorname{in}}}

\makeatletter
\def\ddefloop#1{\ifx\ddefloop#1\else\ddef{#1}\expandafter\ddefloop\fi}

\def\ddef#1{\expandafter\def\csname bf#1\endcsname{\ensuremath{\mathbf{#1}}}}
\ddefloop abcdefghijklmnopqrstuvwxyzABCDEFGHIJKLMNOPQRSTUVWXYZ\ddefloop

\def\ddef#1{\expandafter\def\csname bf#1\endcsname{\ensuremath{\boldsymbol{\csname #1\endcsname}}}}
\ddefloop {alpha}{beta}{gamma}{delta}{epsilon}{varepsilon}{zeta}{eta}{theta}{vartheta}{iota}{kappa}{lambda}{mu}{nu}{xi}{pi}{varpi}{rho}{varrho}{sigma}{varsigma}{tau}{upsilon}{phi}{varphi}{chi}{psi}{omega}{Gamma}{Delta}{Theta}{Lambda}{Xi}{Pi}{Sigma}{varSigma}{Upsilon}{Phi}{Psi}{Omega}{ell}\ddefloop

\def\ddef#1{\expandafter\def\csname cal#1\endcsname{\ensuremath{\mathcal{#1}}}}
\ddefloop ABCDEFGHIJKLMNOPQRSTUVWXYZ\ddefloop
\makeatother

\usepackage{xspace}
\usepackage{subcaption}
\usepackage{booktabs}
\usepackage[normalem]{ulem}
\newcommand{\tf}{\textup{TF}\xspace}
\newcommand{\utf}{\textup{UTF}\xspace}

\newcommand{\looped}{\textup{looped-}\xspace}
\newcommand{\sutf}{\textup{shared-attn-UTF}\xspace}
\newcommand{\UT}{\ensuremath{\operatorname{UT}}}
\newcommand{\mupper}{\overline{m}}
\newcommand{\loopmupper}{\overline{m}_{\textup{looped}}}

%% file: neurips/appendix.tex
\section{Proof of \Cref{thm:sparse_universal_transformer} (universal transformer)}
\label{app:universal_transformer}
Given an input $X^0$, let $Y:=X^L$ denote the output of the target transformer, and $Z:=\tilde X^L U$ denote the output of the universal transformer.
We prove that for any target transformer, there exists a construction of $E$ such that our constructed universal transformer outputs $Z=Y$ for any input $X^0$.

\subsection{Warmup: single head per layer}

In this section, we assume $H=1$ and drop the $h$ superscript in the notations.

\begin{lemma}[Unrolling]
  \label{lem:unroll}
  Assume $H=1$.
  For each $l \in \sbr{L}$,
  \begin{align*}
    X^l & = A^l \dotsm A^1 X^0 W_V^1 W_O^1 \dotsm W_V^l W_O^l \\
    \tilde X^l & = \tilde A^l \dotsm \tilde A^1 X^0 E R_V^1 \dotsm R_V^l .
  \end{align*}
\end{lemma}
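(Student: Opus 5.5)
This is a direct induction on $l$, using only the layer recursions of the two models with $H=1$, together with associativity of matrix multiplication. Both identities are handled by the same argument, so I would do them in parallel.

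\emph{Target transformer.} The base case $l=1$ is the definition with a single head, $X^1 = A^1 X^0 W_V^1 W_O^1$. For the inductive step, suppose $X^{l-1} = A^{l-1}\dotsm A^1 X^0 W_V^1 W_O^1 \dotsm W_V^{l-1} W_O^{l-1}$. Substituting this into $X^l = A^l X^{l-1} W_V^l W_O^l$ and regrouping by associativity gives the claimed form:
\[
X^l = A^l \dotsm A^1 X^0 W_V^1 W_O^1 \dotsm W_V^l W_O^l .
\]
The $n\times n$ attention factors accumulate on the left and the $\din\times\din$ value/output factors accumulate on the right.

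\emph{Universal transformer.} The argument is the same. The base case is $\tilde X^1 = \tilde A^1 \tilde X^0 R_V^1$ with $\tilde X^0 = X^0 E$. The inductive step substitutes the hypothesis into $\tilde X^l = \tilde A^l \tilde X^{l-1} R_V^l$ and regroups.

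The one point that needs care is that $A^l$ and $\tilde A^l$ are themselves nonlinear functions of $X^{l-1}$ and $\tilde X^{l-1}$. The lemma does not claim that the output is linear in $X^0$. It only claims a factorization in which each $A^l$ (resp.\ $\tilde A^l$) is the particular row-stochastic matrix computed at that layer, and that matrix is treated as fixed once it has been computed. With that reading, each step is exact matrix algebra and there is no real obstacle.

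If one wants to state the result for general $H$, as in the proof sketch, the same induction works after first distributing the sum over heads at each layer. This yields the sum over head sequences $(h_1,\dots,h_L)\in[H]^L$, again with attention factors on the left and value factors on the right.
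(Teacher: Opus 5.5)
Your proof is correct and matches the paper's argument. The paper proves the multi-head unrolling lemma, of which this is the $H=1$ case, by the same induction on $l$: the base case comes from the layer update rule, and the inductive step substitutes the hypothesis and regroups by associativity, with the attention matrices treated as the fixed matrices computed at each layer.
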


\begin{lemma}[Equal attention]
  \label{lem:eqatt}
  Assume $H=1$, and also that
  \begin{equation}
    \left\{
      \begin{aligned}
        W_V^1 W_O^1 \dotsm W_V^{l-1} W_O^{l-1} W_Q^l & = E R_V^1 \dotsm R_V^{l-1} R_Q^l \\
        W_V^1 W_O^1 \dotsm W_V^{l-1} W_O^{l-1} W_K^l & = E R_V^1 \dotsm R_V^{l-1} R_K^l
      \end{aligned}
    \right.
    \quad \text{for each $l\in\sbr{L}$} .
    \label{eq:QK}
  \end{equation}
  Then for any $X^0 \in \R^{n \times \din}$, we have $A^l = \tilde A^l$ for each $l \in \sbr{L}$.
\end{lemma}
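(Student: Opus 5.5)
The plan is strong induction on $l$, with \Cref{lem:unroll} used to rewrite both attention arguments in terms of $X^0$. For the target transformer, \Cref{lem:unroll} (applied at index $l-1$, with the convention that the empty product is the identity when $l=1$) gives
\[
X^{l-1} = A^{l-1}\dotsm A^1 X^0 W_V^1W_O^1\dotsm W_V^{l-1}W_O^{l-1}.
\]
Likewise, for the universal transformer,
\[
\tilde X^{l-1} = \tilde A^{l-1}\dotsm\tilde A^1 X^0 E R_V^1\dotsm R_V^{l-1}.
\]
Let $P^{l-1} := A^{l-1}\dotsm A^1$ and $\tilde P^{l-1} := \tilde A^{l-1}\dotsm\tilde A^1$, both in $\R^{n\times n}$.

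Now assume as the induction hypothesis that $A^j=\tilde A^j$ for all $j<l$. Then $P^{l-1}=\tilde P^{l-1}$. The base case $l=1$ is vacuous, since both products are then the $n\times n$ identity.

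Under this hypothesis the query matrices agree. Indeed,
\[
X^{l-1}W_Q^l = P^{l-1}X^0\,\big(W_V^1W_O^1\dotsm W_V^{l-1}W_O^{l-1}W_Q^l\big).
\]
By \eqref{eq:QK}, the bracketed factor equals $E R_V^1\dotsm R_V^{l-1}R_Q^l$, so
\[
X^{l-1}W_Q^l = \tilde P^{l-1}X^0 E R_V^1\dotsm R_V^{l-1}R_Q^l = \tilde X^{l-1}R_Q^l.
\]
The same argument with the second line of \eqref{eq:QK} gives $X^{l-1}W_K^l=\tilde X^{l-1}R_K^l$.

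The two score matrices $(X^{l-1}W_Q^l)(X^{l-1}W_K^l)^\top$ and $(\tilde X^{l-1}R_Q^l)(\tilde X^{l-1}R_K^l)^\top$ are therefore identical. Since row-wise softmax is a deterministic function of its argument, applying it to both gives $A^l=\tilde A^l$. This closes the induction and proves the claim for every $l\in[L]$ and every $X^0$.

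I do not expect a genuine obstacle. The one point needing care is using \Cref{lem:unroll} to factor $X^{l-1}$ and $\tilde X^{l-1}$ as (token-mixing matrix)$\times X^0\times$(feature matrix). This factorization lets the hypothesis on the token-mixing parts and the assumed identity \eqref{eq:QK} on the feature parts be combined independently, which works because the left and right multiplications commute in the sense of associativity. The $l=1$ case should be checked with the empty-product convention. No property of $n$ or $X^0$ is used, so the conclusion holds uniformly over all context lengths.
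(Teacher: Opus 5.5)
Your proposal is correct and follows essentially the same route as the paper. Both arguments use induction on $l$: \Cref{lem:unroll} factors $X^{l-1}$ and $\tilde X^{l-1}$, the inductive hypothesis handles the attention products, \eqref{eq:QK} handles the feature products, and equal queries and keys then give $A^l=\tilde A^l$ under the row-wise softmax. The only cosmetic difference is that you fold the base case $l=1$ into the general step via the empty-product convention, whereas the paper checks it separately.
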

\begin{proof}
  The proof is by induction.
  The base case is $L=1$.
  We have
  \begin{align*}
    X^0 W_Q^1 & = X^0 ER_Q^1 = \tilde X^0 R_Q^1 \\
    X^0 W_K^1 & = X^0 ER_K = \tilde X^0 R_K^1
  \end{align*}
  by \Cref{eq:QK} for $l=1$ and the definition of $\tilde X^0$.
  Using these identities with the definitions of $A^1$ and $\tilde A^1$, we have
  \begin{equation*}
    A^1 = \softmax((X^0 W_Q^1)(X^0 W_K^1)^\T)
    = \softmax((\tilde X^0 R_Q^1)(\tilde X^0 R_K^1)^\T) = \tilde A^1 ,
  \end{equation*}
  This proves the claim for $L=1$.

  For the inductive step, consider $L>1$.
  We assume, as the inductive hypothesis, that for each $l \in \sbr{L-1}$, $A^l = \tilde A^l$.
  We need to show that $A^L = \tilde A^L$.
  We have
  \begin{align*}
    X^{L-1} W_Q^L
    & = A^{L-1} \dotsm A^1 X^0 W_V^1 W_O^1 \dotsm W_V^{L-1} W_O^{L-1} W_Q^L
    && \text{(by \Cref{lem:unroll})} \\
    & = \tilde A^{L-1} \dotsm \tilde A^1 X^0 W_V^1 W_O^1 \dotsm W_V^{L-1} W_O^{L-1} W_Q^L
    && \text{(by inductive hypothesis)} \\
    & = \tilde A^{L-1} \dotsm \tilde A^1 X^0 E R_V^1 \dotsm R_V^{L-1} R_Q^L
    && \text{(by \Cref{eq:QK} for $l=L$)} \\
    & = \tilde X^{L-1} R_Q^L
    && \text{(by \Cref{lem:unroll})} ,
    \\
    \intertext{and similarly,}
    X^{L-1} W_K^L & = \tilde X^{L-1} R_K^L .
  \end{align*}
  Therefore
  \begin{equation*}
    A^L = \softmax((X^{L-1} W_Q^L)(X^{L-1} W_K^L)^\T)
    = \softmax((\tilde X^{L-1} R_Q^L)(\tilde X^{L-1} R_K^L)^\T) = \tilde A^L .
  \end{equation*}
  This completes the inductive step.
\end{proof}

\begin{lemma}[Equal input-output behavior]
  Assume $H=1$, and also that both \Cref{eq:QK} and
  \begin{equation}
    \label{eq:OV}
    W_V^1 W_O^1 \dotsm W_V^L W_O^L
    = E R_V^1 \dotsm R_V^L U
  \end{equation}
  hold.
  Then for any $X^0 \in \R^{n \times \din}$, we have $Y=Z$.
\end{lemma}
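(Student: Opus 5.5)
The argument combines the two preceding lemmas. Since the hypotheses include \Cref{eq:QK}, \Cref{lem:eqatt} applies directly: for every input $X^0 \in \R^{n \times \din}$ and every $l \in \sbr{L}$ we have $A^l = \tilde A^l$. Note that this holds for every context length $n$, because \Cref{eq:QK} is a condition on the parameter matrices only and does not involve $n$.

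Next we unroll both networks at the final layer. By \Cref{lem:unroll} with $l = L$, the target output is
\[
Y = X^L = A^L \dotsm A^1 X^0 W_V^1 W_O^1 \dotsm W_V^L W_O^L .
\]
For the universal transformer, the same lemma gives $\tilde X^L = \tilde A^L \dotsm \tilde A^1 X^0 E R_V^1 \dotsm R_V^L$. Multiplying on the right by $U$ yields
\[
Z = \tilde X^L U = \tilde A^L \dotsm \tilde A^1 X^0 E R_V^1 \dotsm R_V^L U .
\]

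Now substitute $\tilde A^l = A^l$ for each $l$ into the expression for $Z$. This replaces the attention prefix of $Z$ by $A^L \dotsm A^1 X^0$, which is exactly the prefix of $Y$. It remains to compare the right-hand factors. By \Cref{eq:OV}, $E R_V^1 \dotsm R_V^L U = W_V^1 W_O^1 \dotsm W_V^L W_O^L$. Associativity of matrix multiplication then gives $Z = Y$ for every $X^0$.

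The only step with any content is the attention matching, and that is already handled by \Cref{lem:eqatt}. Everything else is a direct substitution, so no step should be a real obstacle.
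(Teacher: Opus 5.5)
Your proof is correct and follows the paper's argument: use \Cref{lem:eqatt} to match the attention matrices, unroll both networks at layer $L$ via \Cref{lem:unroll}, and substitute \Cref{eq:OV}. The only difference is presentation: the paper chains these steps in a single display from $Y$ to $Z$, rather than expanding both outputs separately and comparing them.
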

\begin{proof}
  We have
  \begin{align*}
    Y = X^L
    & = A^L \dotsm A^1 X^0 W_V^1 W_O^1 \dotsm W_V^L W_O^L
    && \text{(by \Cref{lem:unroll})} \\
    & = \tilde A^L \dotsm \tilde A^1 X^0 W_V^1 W_O^1 \dotsm W_V^L W_O^L
    && \text{(by \Cref{lem:eqatt})} \\
    & = \tilde A^L \dotsm \tilde A^1 X^0 E R_V^1 \dotsm R_V^L U
    && \text{(by \Cref{eq:OV})} \\
    & = \tilde X^L U = Z
    && \text{(by \Cref{lem:unroll})} .
    \qedhere
  \end{align*}
\end{proof}

\begin{lemma}[Construction for single head]
\label{lem:single_head}
Assume $H=1$ and $m \geq (L+1)\max\{2d, \din\}$. Then there exists fixed $\{R_Q^l, R_K^l, R_V^l\}_{l\in [L]}$ and $U$ such that for any transformer $T\in \tf_{1, L, \din, d}$, there exists an $E$ so that \Cref{eq:QK} and \Cref{eq:OV} hold.
\end{lemma}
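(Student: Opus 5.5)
The plan is to build the fixed matrices as block selectors and shifters, and to store every target-specific product inside $E$. Set $b := \max\{2d, \din\}$ and partition the first $(L+1)b$ coordinates of $\R^m$ into consecutive blocks $B_0, B_1, \dots, B_L$, each of width $b$; any remaining coordinates are never used. For $l \in \sbr{L}$, block $B_{l-1}$ is read at layer $l$. Its first $d$ coordinates carry the query part, its next $d$ coordinates carry the key part, and block $B_L$ carries the output map in its first $\din$ coordinates.

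The fixed parameters are as follows.
\begin{itemize}[leftmargin=*]
\item $R_V^l \in \{0,1\}^{m\times m}$ is the block shift that sends $B_j$ to $B_{j-1}$ for $j \ge l$ and annihilates all other blocks. Concretely, $(R_V^l)_{i,i-b}=1$ for coordinates $i$ lying in blocks $B_l,\dots,B_L$, and all other entries are $0$.
\item $R_Q^l \in \{0,1\}^{m\times d}$ has an identity $I_d$ in the rows indexed by the first $d$ coordinates of $B_{l-1}$, and zeros elsewhere.
\item $R_K^l$ has $I_d$ in the rows indexed by coordinates $d+1,\dots,2d$ of $B_{l-1}$, and zeros elsewhere.
\item $U$ has $I_{\din}$ in the rows indexed by the first $\din$ coordinates of $B_0$, and zeros elsewhere.
\end{itemize}
The choice of which block each matrix reads must be consistent with the shifts. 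After applying $R_V^1 \cdots R_V^{l-1}$, the content originally stored in block $B_{l-1}$ has moved to $B_0$. Hence $R_Q^l$ and $R_K^l$ should simply read $B_0$ at every layer, and likewise $U$ reads $B_0$.

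The key check is a composition identity, which I would verify by a short induction on $k$. For a row vector $v$ decomposed by blocks,
\[
v R_V^1 \cdots R_V^{k}
\]
has block $j$ equal to the original block $j+k$ of $v$ whenever $j+k \le L$, and zero otherwise. This works because each $R_V^l$ shifts down by one block and zeros only blocks below $l$. Those lower blocks are already zero or irrelevant, since the content that survives after $k$ shifts came from index at least $k$. To keep this bookkeeping trivial, the simplest consistent choice is to let every $R_V^l$ be the same full shift $(R_V)_{i,i-b}=1$ for all $i \ge b+1$.

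Given $T$, define $E \in \R^{\din \times m}$ block by block. Write $P_{l} := W_V^1W_O^1\cdots W_V^{l}W_O^{l}$, with $P_0 = I$. Put $P_{l-1}W_Q^l$ in the first $d$ columns of block $B_{l-1}$ and $P_{l-1}W_K^l$ in the next $d$ columns, for $l \in \sbr{L}$. Put $P_L$ in the first $\din$ columns of $B_L$, and set all other entries to zero. By the shift identity,
\[
E R_V^1\cdots R_V^{l-1}
\]
has the content of $B_{l-1}$ sitting in $B_0$. Multiplying by $R_Q^l$ or $R_K^l$, which extract the first $d$ or next $d$ columns of $B_0$, gives $P_{l-1}W_Q^l$ or $P_{l-1}W_K^l$; this is \Cref{eq:QK}. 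Similarly, $E R_V^1\cdots R_V^L U = P_L$, which is \Cref{eq:OV}.

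The sparsity constraint also holds. Each $R_V$ is a partial permutation and so has at most $m$ ones. Each of $R_Q^l, R_K^l$ has exactly $d$ ones, and $U$ has exactly $\din$ ones.

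The only delicate step is the indexing of the shift composition, namely that block $l-1$ arrives exactly at $B_0$ after $l-1$ shifts without being truncated. The bound $m \ge (L+1)b$ guarantees that all $L+1$ blocks fit. Because each block has width $b \ge 2d$ and $b \ge \din$, the query/key pair and the output map each fit inside a single block.
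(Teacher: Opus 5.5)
Your final construction is correct and is essentially the paper's argument. That construction is a single uniform block shift $(R_V)_{i,i-b}=1$ for $i\ge b+1$ used at every layer, with $R_Q,R_K,U$ always reading the first $d$, the next $d$, and the first $\din$ coordinates of $B_0$. As in the paper, the target-specific products $P_{l-1}W_Q^l$, $P_{l-1}W_K^l$, $P_L$ are stored in disjoint width-$b$ blocks of $E$. Fixed $\{0,1\}$ routing and selector matrices then extract the right block after $l-1$ (resp.\ $L$) applications of $R_V$. Your version is more explicit than the paper's loosely described shift. It is also literally weight-tied, which is the property the paper later invokes for the looped case.

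As written, however, the proposal is internally inconsistent, and its first version fails.
\begin{itemize}
\item Take the layer-dependent $R_V^l$ from your bullet list, which sends $B_j\to B_{j-1}$ for $j\ge l$ and annihilates $B_0,\dots,B_{l-1}$. The claimed composition identity is already false for $k=2$: the original $B_2$ sits in $B_1$ after $R_V^1$ and is then annihilated by $R_V^2$. For $L=2$ this gives $ER_V^1R_V^2U=0$, which is not $P_2$ for a generic target.
\item Your justification (``those lower blocks are already zero or irrelevant'') conflates a block's current position with its original index. By that point the needed content has been moved into exactly the low positions that $R_V^l$ kills.
\item The bulleted $R_Q^l,R_K^l$ reading $B_{l-1}$ would, after $l-1$ uniform shifts, read the original $B_{2l-2}$ rather than $B_{l-1}$. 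Those selectors, with $U$ reading $B_L$, would work if every $R_V^l=I$, which also suffices for $H=1$. That version, however, is not weight-tied.
\end{itemize}
So state the construction only with the uniform shift and $B_0$-selectors. Then the shift identity is immediate, and the rest of your verification goes through unchanged.
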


\begin{proof}
    Let $b := \max\{2d,\din\}$.
    Since $m \ge (L+1)b$, we partition the first $(L+1)b$ coordinates of the
    embedding dimension into $L+1$ consecutive blocks
    \[
        B_1,\ldots,B_{L+1},
        \qquad |B_i|=b.
    \]
    Any remaining coordinates are unused.

    We choose $R_Q^l,R_K^l$ and $U$ to be selector matrices. More precisely,
    $R_Q^l$ selects the first $d$ coordinates of the block $B_l$,
    $R_K^l$ selects the next $d$ coordinates of the block $B_l$, and $U$
    selects the first $\din$ coordinates of the block $B_{L+1}$.
    These matrices are the same for every target transformer.

    We choose $R_V^l$ to be the block-shift matrix satisfying $E R_V^1 \dotsm R_V^{l-1}$ has, in block $B_l$, the contents originally stored in block $B_1$ after
    the appropriate shifts. Equivalently, for each $l$, multiplying by $R_V^l$
    shifts the information one block to the left, so that the block used by
    $R_Q^{l+1}$ and $R_K^{l+1}$ contains the next stored matrices. All other
    unused coordinates may be set to zero.

    Now fix an arbitrary target transformer $T\in \tf_{1,L,\din,d}$.
    Let
    \[
        W_{VO}^{1:l}:=W_V^1W_O^1\dotsm W_V^lW_O^l,
        \qquad
        W_{VO}^{1:0}:=I_{\din}.
    \]
    We construct $E\in\R^{\din\times m}$ block by block as follows. For each
    $l\in[L]$, fill the first $d$ columns of block $B_l$ with $ W_{VO}^{1:l-1}W_Q^l$,
    and fill the next $d$ columns of block $B_l$ with $W_{VO}^{1:l-1}W_K^l$.
    Finally, fill the first $\din$ columns of block $B_{L+1}$ with $W_{VO}^{1:L}$.
    All remaining entries of $E$ are set arbitrarily, say to zero.

    By construction, for every $l\in[L]$, after applying
    $R_V^1\dotsm R_V^{l-1}$, the block selected by $R_Q^l$ contains
    $W_{VO}^{1:l-1}W_Q^l$, and the block selected by $R_K^l$ contains
    $W_{VO}^{1:l-1}W_K^l$. Hence
    \[
        E R_V^1\dotsm R_V^{l-1} R_Q^l
        =
        W_{VO}^{1:l-1}W_Q^l
    \]
    and
    \[
        E R_V^1\dotsm R_V^{l-1} R_K^l
        =
        W_{VO}^{1:l-1}W_K^l .
    \]
    Therefore \Cref{eq:QK} holds.

    Similarly, after applying $R_V^1\dotsm R_V^L$, the block selected by $U$
    contains $W_{VO}^{1:L}$. Therefore
    \[
        E R_V^1\dotsm R_V^L U
        =
        W_{VO}^{1:L}
        =
        W_V^1W_O^1\dotsm W_V^LW_O^L,
    \]
    so \Cref{eq:OV} also holds.

    Since the matrices $R_Q^l,R_K^l,R_V^l$ and $U$ depend only on
    $L,d,\din$ and not on the target transformer $T$, the construction is
    universal. This completes the proof.
\end{proof}

\subsection{Multiple heads per layer}

\begin{lemma}[Multi-headed Unrolling]
  \label{lem:multiunroll}
  For each $l \in \sbr{L}$,
  \begin{align*}
    X^l & = \sum_{(h_1,\dotsc,h_l) \in \sbr{H}^l} A^{l,h_l} \dotsm A^{1,h_1} X^0 W_V^{1,h_1} W_O^{1,h_1} \dotsm W_V^{l,h_l} W_O^{l,h_l} \\
    \tilde X^l & = \sum_{(h_1,\dotsc,h_l) \in \sbr{H}^l} \tilde A^{l,h_l} \dotsm \tilde A^{1,h_1} X^0 E R_V^{1,h_1} \dotsm R_V^{l,h_l} .
  \end{align*}
\end{lemma}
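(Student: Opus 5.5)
The plan is to prove both identities by induction on $l$, using only the layer recursions in the definitions and the linearity of matrix multiplication. No property of the attention matrices is needed beyond their being $n\times n$ matrices, so the two identities can be handled in parallel.

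\textbf{Base case $l=1$.} By definition,
\[
X^1=\sum_{h_1\in[H]} A^{1,h_1}X^0W_V^{1,h_1}W_O^{1,h_1}.
\]
This is exactly the claimed sum over $(h_1)\in[H]^1$. Likewise, $\tilde X^0=X^0E$ gives
\[
\tilde X^1=\sum_{h_1}\tilde A^{1,h_1}X^0ER_V^{1,h_1}.
\]

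\textbf{Inductive step.} Assume the formula holds for $l-1$. Substitute it into the recursion
\[
X^l=\sum_{h_l\in[H]}A^{l,h_l}X^{l-1}W_V^{l,h_l}W_O^{l,h_l}.
\]
Left multiplication by $A^{l,h_l}$ and right multiplication by $W_V^{l,h_l}W_O^{l,h_l}$ distribute over the finite sum indexed by $(h_1,\dots,h_{l-1})\in[H]^{l-1}$. This produces a double sum over $h_l$ and $(h_1,\dots,h_{l-1})$, which we reindex as a single sum over $(h_1,\dots,h_l)\in[H]^l$. Each summand is
\[
A^{l,h_l}A^{l-1,h_{l-1}}\dotsm A^{1,h_1}X^0W_V^{1,h_1}W_O^{1,h_1}\dotsm W_V^{l,h_l}W_O^{l,h_l},
\]
as claimed. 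The argument for $\tilde X^l$ is identical, with $\tilde A^{l,h}$ in place of $A^{l,h}$ and $R_V^{l,h}$ in place of $W_V^{l,h}W_O^{l,h}$, starting from $\tilde X^0=X^0E$.

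The argument is routine. The only point needing care is the bookkeeping: the new attention factor $A^{l,h_l}$ enters on the left and the new weight factor on the right, so the product order in the formula must match. The attention matrices are treated as fixed matrices, even though they themselves depend on $X^{l-1}$. The unrolled identity is purely algebraic given those matrices, so this dependence does not affect the argument.
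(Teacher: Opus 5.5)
Your proposal is correct and follows essentially the same route as the paper: induction on $l$, with the base case read off from the layer recursion, and the inductive step done by substituting the $(l-1)$-expansion, distributing $A^{l,h_l}$ on the left and $W_V^{l,h_l}W_O^{l,h_l}$ on the right, and reindexing. You also handle the $\tilde X$ case as the paper does, by the same argument. Your bookkeeping is in fact slightly cleaner than the paper's phrasing, since you start from $\tilde X^0 = X^0E$ and replace only $W_V^{t,h_t}W_O^{t,h_t}$ by $R_V^{t,h_t}$.
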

\begin{proof}
  We prove the first identity; the second is identical with $(W_V^{t,h_t}W_O^{t,h_t})$ replaced by $E R_V^{t,h_t}$ and $(A^{t,h_t})$ replaced by $(\tilde A^{t,h_t})$.
  For $l=1$, the update rule gives
  \[
    X^1=\sum_{h_1=1}^H A^{1,h_1}X^0W_V^{1,h_1}W_O^{1,h_1},
  \]
  which is exactly the stated formula.
  Assume the identity holds for $l-1\ge 1$. Then
  \begin{align*}
    X^l
    &= \sum_{h_l=1}^H A^{l,h_l} X^{l-1} W_V^{l,h_l}W_O^{l,h_l} \\
    &= \sum_{h_l=1}^H A^{l,h_l}
      \left(\sum_{(h_1,\dotsc,h_{l-1})\in\sbr{H}^{l-1}} A^{l-1,h_{l-1}}\dotsm A^{1,h_1} X^0
        \prod_{t=1}^{l-1} W_V^{t,h_t}W_O^{t,h_t}\right)
      W_V^{l,h_l}W_O^{l,h_l} \\
    &= \sum_{(h_1,\dotsc,h_l)\in\sbr{H}^l}
      A^{l,h_l}\dotsm A^{1,h_1} X^0
      \prod_{t=1}^{l} W_V^{t,h_t}W_O^{t,h_t},
  \end{align*}
  which is the desired expansion.
\end{proof}

\begin{lemma}[Equal attention for multi-head]
    \label{lem:multieqatt}
    For each $l \in [L]$, 
    \begin{equation}
        \left\{
          \begin{aligned}
            W_V^{1, h_1} W_O^{1, h_1} \dotsm W_V^{l-1, h_{l-1}} W_O^{l-1, h_{l-1}} W_Q^{l, h_l} & = E R_V^{1, h_1} \dotsm R_V^{l-1, h_{l-1}} R_Q^{l, h_l} \\
            W_V^{1, h_1} W_O^{1, h_1} \dotsm W_V^{l-1, h_{l-1}} W_O^{l-1, h_{l-1}} W_K^{l, h_l} & = E R_V^{1, h_1} \dotsm R_V^{l-1, h_{l-1}} R_K^{l, h_l}
          \end{aligned}
        \right.
        \quad \text{for each $(h_1, \ldots, h_l)\in\sbr{H}^l$} .
    \label{eq:multi-head-QK}
    \end{equation}
  Then for any $X^0 \in \R^{n \times \din}$, we have $A^{l, h_l} = \tilde A^{l, h_l}$ for each $h_l\in [H], l \in \sbr{L}$.
\end{lemma}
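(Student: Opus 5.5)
The plan is to mirror the single-head argument of \Cref{lem:eqatt}, now inducting on the layer index $l$ and using the multi-head expansion of \Cref{lem:multiunroll}. The statement to be proven at stage $l$ is that $A^{l,h} = \tilde A^{l,h}$ for every $h \in [H]$, assuming $A^{t,h'} = \tilde A^{t,h'}$ for all $t<l$ and all $h' \in [H]$.

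For the base case $l=1$, fix any head $h_1$. The hypothesis \Cref{eq:multi-head-QK} with $l=1$ reads $W_Q^{1,h_1} = E R_Q^{1,h_1}$ and $W_K^{1,h_1} = E R_K^{1,h_1}$. Hence $X^0 W_Q^{1,h_1} = \tilde X^0 R_Q^{1,h_1}$, and likewise for the keys. Equality of the softmax arguments then gives $A^{1,h_1} = \tilde A^{1,h_1}$.

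For the inductive step, fix $l \ge 2$ and a head $h_l$. First I expand $X^{l-1}$ via \Cref{lem:multiunroll} as a sum over prefixes $(h_1,\dots,h_{l-1}) \in [H]^{l-1}$ and right-multiply by $W_Q^{l,h_l}$:
\[
X^{l-1} W_Q^{l,h_l} = \sum_{(h_1,\dots,h_{l-1})} A^{l-1,h_{l-1}}\dotsm A^{1,h_1} X^0 \, W_V^{1,h_1}W_O^{1,h_1}\dotsm W_V^{l-1,h_{l-1}}W_O^{l-1,h_{l-1}} W_Q^{l,h_l}.
\]
Next, I replace each $A^{t,h_t}$ by $\tilde A^{t,h_t}$ using the inductive hypothesis, which holds for all heads at layers $t<l$. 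Then, summand by summand, I replace the weight product by $E R_V^{1,h_1}\dotsm R_V^{l-1,h_{l-1}} R_Q^{l,h_l}$ using \Cref{eq:multi-head-QK} for the tuple $(h_1,\dots,h_{l-1},h_l)$. Factoring the common right factor $R_Q^{l,h_l}$ out of the sum and applying the second identity of \Cref{lem:multiunroll} yields $\tilde X^{l-1} R_Q^{l,h_l}$. The identical computation with $K$ in place of $Q$ gives $X^{l-1}W_K^{l,h_l} = \tilde X^{l-1}R_K^{l,h_l}$, and therefore $A^{l,h_l} = \tilde A^{l,h_l}$.

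The only real point of care is the step where the hypothesis is used per path rather than on the aggregated sum. The identity is assumed for every full tuple, and in particular for every prefix combined with the fixed final head $h_l$. This is exactly what lets us pull $R_Q^{l,h_l}$ out as a common factor. The inductive hypothesis must therefore be stated for all heads simultaneously at each earlier layer, since every prefix path uses arbitrary heads. Note also that the statement implicitly assumes \Cref{eq:multi-head-QK} for all $l \in [L]$, not only the current one, and I will read it that way. Everything else is routine bookkeeping.
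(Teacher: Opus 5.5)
Your proposal is correct and follows the paper's proof essentially step for step. Both argue by induction on the layer, with the hypothesis taken over all heads at earlier layers, expand $X^{l-1}W_Q^{l,h_l}$ via the multi-head unrolling, substitute \eqref{eq:multi-head-QK} path by path, factor out $R_Q^{l,h_l}$, and recognize $\tilde X^{l-1}R_Q^{l,h_l}$; the only differences are the order in which you swap $A^{t,h_t}\to\tilde A^{t,h_t}$ and the weight products, and your explicit (correct) note that the hypothesis is needed at every layer up to $l$.
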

\begin{proof}
  Fix a layer $l\in\sbr{L}$ and a head $h_l\in\sbr{H}$.
  It suffices to show that
  \[
    X^{l-1}W_Q^{l,h_l}=\tilde X^{l-1}R_Q^{l,h_l}
    \quad\text{and}\quad
    X^{l-1}W_K^{l,h_l}=\tilde X^{l-1}R_K^{l,h_l},
  \]
  since then by the definitions of attention,
  \[
    A^{l,h_l}
    =\softmax\!\bigl((X^{l-1}W_Q^{l,h_l})(X^{l-1}W_K^{l,h_l})^\T\bigr)
    =\softmax\!\bigl((\tilde X^{l-1}R_Q^{l,h_l})(\tilde X^{l-1}R_K^{l,h_l})^\T\bigr)
    =\tilde A^{l,h_l}.
  \]

  By \Cref{lem:multiunroll}, we can write
  \begin{align*}
    X^{l-1}W_Q^{l,h_l}
    &= \sum_{(h_1,\dotsc,h_{l-1})\in\sbr{H}^{l-1}}
      A^{l-1,h_{l-1}}\dotsm A^{1,h_1}\,
      X^0\Bigl(W_V^{1,h_1}W_O^{1,h_1}\dotsm W_V^{l-1,h_{l-1}}W_O^{l-1,h_{l-1}}W_Q^{l,h_l}\Bigr).
  \end{align*}
  Using the assumption \Cref{eq:multi-head-QK}, each parenthesized product equals
  \[
    E\,R_V^{1,h_1}\dotsm R_V^{l-1,h_{l-1}}R_Q^{l,h_l}.
  \]
  Substituting and factoring out the rightmost $R_Q^{l,h_l}$ gives
  \begin{align*}
    X^{l-1}W_Q^{l,h_l}
    &= \sum_{(h_1,\dotsc,h_{l-1})\in\sbr{H}^{l-1}}
      A^{l-1,h_{l-1}}\dotsm A^{1,h_1}\,
      X^0E\,R_V^{1,h_1}\dotsm R_V^{l-1,h_{l-1}}R_Q^{l,h_l}.
  \end{align*}
  If we assume (inductively in $t$) that for all $t\le l-1$ and all heads $h_t$, we already have $A^{t,h_t}=\tilde A^{t,h_t}$, then the above becomes
  \begin{align*}
    X^{l-1}W_Q^{l,h_l}
    &= \sum_{(h_1,\dotsc,h_{l-1})\in\sbr{H}^{l-1}}
      \tilde A^{l-1,h_{l-1}}\dotsm \tilde A^{1,h_1}\,
      X^0E\,R_V^{1,h_1}\dotsm R_V^{l-1,h_{l-1}}R_Q^{l,h_l} \\
    &= \tilde X^{l-1}R_Q^{l,h_l},
  \end{align*}
  by the second identity in \Cref{lem:multiunroll}. The $W_K/R_K$ case is identical.

  Finally, we prove the claim layer-by-layer.
  For $l=1$, the computation of $A^{1,h_1}$ only depends on $X^0$ and the pair $(W_Q^{1,h_1},W_K^{1,h_1})$, and the assumption \Cref{eq:multi-head-QK} (with $l=1$) directly gives
  $X^0W_Q^{1,h_1}=\tilde X^0R_Q^{1,h_1}$ and $X^0W_K^{1,h_1}=\tilde X^0R_K^{1,h_1}$, hence $A^{1,h_1}=\tilde A^{1,h_1}$.
  Now assume for some $l\ge 2$ that $A^{t,h}=\tilde A^{t,h}$ holds for all $t\le l-1$ and all heads $h\in\sbr{H}$.
  Repeating the argument above with this inductive hypothesis yields
  $X^{l-1}W_Q^{l,h_l}=\tilde X^{l-1}R_Q^{l,h_l}$ and $X^{l-1}W_K^{l,h_l}=\tilde X^{l-1}R_K^{l,h_l}$ for each $h_l$,
  and therefore $A^{l,h_l}=\tilde A^{l,h_l}$ for each $h_l$.
  This completes the induction.
\end{proof}

\begin{lemma}[Equal input-output behavior for multi-head]
  Assume \Cref{eq:multi-head-QK} and
  \begin{equation}
    \label{eq:multi-head-OV}
    W_V^{1, h_1} W_O^{1, h_1} \dotsm W_V^{L, h_L} W_O^{L, h_L}
    = E R_V^{1, h_1} \dotsm R_V^{L, h_L} U \quad \text{for each $(h_1, \ldots, h_L)\in [H]^L$}
  \end{equation}
  hold.
  Then for any $X^0 \in \R^{n \times \din}$, we have $Y=Z$.
\end{lemma}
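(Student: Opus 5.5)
The plan mirrors the single-head argument: expand both outputs along head paths, match the attention matrices path by path, then match the value products path by path. First, by \Cref{lem:multieqatt}, the assumption \Cref{eq:multi-head-QK} gives $A^{l,h} = \tilde A^{l,h}$ for every $l \in \sbr{L}$ and $h \in \sbr{H}$, for any input $X^0 \in \R^{n \times \din}$. This step is already done; I only need to invoke it.

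Next, I apply the first identity of \Cref{lem:multiunroll} with $l = L$ to write
\[
Y = X^L = \sum_{(h_1,\dotsc,h_L) \in \sbr{H}^L} A^{L,h_L} \dotsm A^{1,h_1} X^0 W_V^{1,h_1} W_O^{1,h_1} \dotsm W_V^{L,h_L} W_O^{L,h_L}.
\]
In each summand I replace every $A^{t,h_t}$ by $\tilde A^{t,h_t}$ using the equal-attention conclusion. I then use \Cref{eq:multi-head-OV} for the path $(h_1,\dotsc,h_L)$ to replace the trailing product of value and output matrices by $E R_V^{1,h_1} \dotsm R_V^{L,h_L} U$.

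Finally, because $U$ appears as a common right factor in every term, I pull it out of the sum. The second identity of \Cref{lem:multiunroll} with $l = L$ then identifies the remaining sum as $\tilde X^L$, so that $Y = \tilde X^L U = Z$.

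There is no real obstacle here. The only point that needs care is that the substitution of \Cref{eq:multi-head-OV} has to be made separately for each head sequence, inside the sum, and not to the aggregated sum. This is legitimate because the hypothesis is stated for every $(h_1,\dotsc,h_L) \in \sbr{H}^L$, and linearity lets the common factor $U$ be pulled out afterward.
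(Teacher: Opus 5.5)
Your proposal is correct and follows the paper's proof essentially step for step. Like the paper, you invoke the multi-head equal-attention lemma, unroll $Y = X^L$ along head paths, substitute $\tilde A^{t,h_t}$ and the per-path value/output identity inside each summand, and then use the second unrolling identity to recognize the sum as $\tilde X^L U = Z$.
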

\begin{proof}
  Fix $X^0\in\R^{n\times\din}$. By \Cref{lem:multieqatt}, we have
  \[
    A^{l,h}=\tilde A^{l,h}\qquad\text{for all }l\in\sbr{L},\; h\in\sbr{H}.
  \]
  Using the multi-headed unrolling \Cref{lem:multiunroll} at $l=L$, we can write
  \begin{align*}
    Y = X^L
    &= \sum_{(h_1,\dotsc,h_L)\in\sbr{H}^L}
      A^{L,h_L}\dotsm A^{1,h_1}\,X^0
      \Bigl(W_V^{1,h_1}W_O^{1,h_1}\dotsm W_V^{L,h_L}W_O^{L,h_L}\Bigr) \\
    &= \sum_{(h_1,\dotsc,h_L)\in\sbr{H}^L}
      \tilde A^{L,h_L}\dotsm \tilde A^{1,h_1}\,X^0
      \Bigl(W_V^{1,h_1}W_O^{1,h_1}\dotsm W_V^{L,h_L}W_O^{L,h_L}\Bigr).
  \end{align*}
  By the assumption \Cref{eq:multi-head-OV}, each product in parentheses equals
  \[
    E\,R_V^{1,h_1}\dotsm R_V^{L,h_L}U.
  \]
  Substituting this in, and then applying \Cref{lem:multiunroll} to recognize the resulting sum as $\tilde X^L$, yields
  \begin{align*}
    Y
    &= \sum_{(h_1,\dotsc,h_L)\in\sbr{H}^L}
      \tilde A^{L,h_L}\dotsm \tilde A^{1,h_1}\,X^0E\,
      R_V^{1,h_1}\dotsm R_V^{L,h_L}U \\
    &= \tilde X^L U \\
    &= Z,
  \end{align*}
  completing the proof.
\end{proof}

\begin{theorem}[Fixed sparse $R_Q,R_K,R_V$]
\label{thm:designed-R}
Assume $H>1$ and
\[
m = \sum_{t=0}^{L-1} H^t\cdot (2Hd) \;+\; H^L\din
\;=\; O(H^Ld + H^L\din).
\]
There exist matrices $\{R_Q^{l,h},R_K^{l,h}\}_{l\in[L],h\in[H]}$ with $R_Q^{l,h},R_K^{l,h}\in\R^{m\times d}$,
routing maps $\{R_V^{l,h}\}_{l\in[L],h\in[H]}$ with $R_V^{l,h}\in\R^{m\times m}$ and $U\in\R^{m\times\din}$
 such that for any target transformer $T\in \tf_{H, L, \din, d}$, there exists $E\in\R^{\din\times m}$ so that \Cref{eq:multi-head-QK} and \Cref{eq:multi-head-OV} hold. 
\end{theorem}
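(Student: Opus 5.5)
We build an $H$-ary tree of coordinate blocks and store, in each block, exactly the product matrix needed by the attention path that reaches it. This generalizes the block-shift construction of \Cref{lem:single_head}: that construction is the special case $H=1$, where the tree is a path.

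\textbf{Coordinate layout.} For each depth $t\in\{0,\dots,L-1\}$ and each prefix $p=(h_1,\dots,h_t)\in[H]^t$ (the empty prefix when $t=0$), we reserve an internal block $B_p$ of size $2Hd$. We split $B_p$ into $H$ sub-blocks $B_{p,h}$, each of size $2d$. The first $d$ coordinates of $B_{p,h}$ are the query slot and the next $d$ are the key slot. For each leaf $p\in[H]^L$ we reserve a block $B_p$ of size $\din$. Counting these blocks gives exactly $m=\sum_{t=0}^{L-1}H^t\cdot 2Hd+H^L\din$.

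\textbf{Routing matrices.} For $l\in[L]$ and $h\in[H]$, let $R_V^{l,h}$ be the $\{0,1\}$ partial permutation that maps the whole subtree under each depth-$(l-1)$ prefix $p$ onto the subtree under $(p,h)$, with the roots of the two subtrees aligned. This requires the two subtrees to have the same shape. They do not have it literally: the subtree under $p$ is one level deeper than the subtree under $(p,h)$.

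The fix is to let $R_V^{l,h}$ act on the coordinates of the blocks $B_{(p,q)}$, where $q$ ranges over suffixes with $|p|+|q|\ge l$, and send $B_{(p,h,q')}\mapsto B_{(p,q')}$. This is a relabeling that deletes the $l$-th symbol of the index, and it applies only when that symbol equals $h$. The identification is valid because $(p,h,q')$ and $(p,q')$ lie at depths $t$ and $t-1$ with $t\ge l$, so both blocks are internal or both are leaves, provided the dimensions agree. That holds for any depth-$t$ block with $t\ge l$ except when $t=L$ and the target sits at depth $L-1$.

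I would rather avoid this clash with a cleaner scheme. Index the data path-wise: $E$ stores, for every full path $(h_1,\dots,h_l)$, its matrix in the block $B_{(h_1,\dots,h_{l-1})}$, sub-block $h_l$. Each $R_V^{l,h}$ then only needs to "select branch $h$ at level $l$". Concretely, $R_V^{l,h}$ maps the block $B_{(h_1,\dots,h_{l-1},h,s)}$ to the position that the next layer's readout expects. This forces the readout for layer $l+1$ to look at depth $l$. So we set $R_Q^{l,h}$ to read the query slot of sub-block $h$ from every depth-$(l-1)$ block and sum these, and define $R_K^{l,h}$ likewise with the key slot.

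\textbf{Correctness check.} With $R_V^{l,h}$ acting as the identity on coordinates of blocks whose index agrees with $h$ in position $l$, and zero elsewhere, the product $ER_V^{1,h_1}\cdots R_V^{l-1,h_{l-1}}$ kills every block not on the path $(h_1,\dots,h_{l-1})$. Only the blocks that extend this prefix survive. Applying $R_Q^{l,h_l}$, which sums the query slot of sub-block $h_l$ over the depth-$(l-1)$ blocks, then returns the single surviving entry $E|_{B_{(h_1..h_{l-1}),h_l}}$. So \Cref{eq:multi-head-QK} holds once we fill that slot with $W_V^{1,h_1}W_O^{1,h_1}\cdots W_V^{l-1,h_{l-1}}W_O^{l-1,h_{l-1}}W_Q^{l,h_l}$, and the key slot analogously.

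Similarly, $U$ sums over all leaves $B_p$. Storing $\prod_{t}W_V^{t,h_t}W_O^{t,h_t}$ in leaf $(h_1,\dots,h_L)$ gives \Cref{eq:multi-head-OV}. Each $R_V^{l,h}$ is diagonal $0/1$, hence has at most $m$ nonzeros, and each $R_Q$, $R_K$, $U$ has one nonzero per used coordinate row.

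\textbf{Main obstacle.} The delicate point is bookkeeping. The filter "index agrees at position $l$" must not also kill blocks at depth below $l$. Those blocks are already irrelevant, because readouts at layer $l$ use only depth $l-1$ blocks. Still, it must be checked that $R_V^{l,h}$ for $l\le t$ preserves the depth-$t$ blocks on the chosen path, which it does because each block on the chosen path extends the prefix. I would verify this claim, that $ER_V^{1,h_1}\cdots R_V^{l-1,h_{l-1}}$ equals $E$ restricted to the blocks extending $(h_1,\dots,h_{l-1})$, by induction on $l$.
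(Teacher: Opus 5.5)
Your final construction, the diagonal filter, is correct. It takes a genuinely different route from the paper in one place: the routing map $R_V^{l,h}$.

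\textbf{What is shared.} You use the paper's tree layout, the same contents of $E$, the same summed selectors $R_Q^{l,h},R_K^{l,h}$ over depth-$(l-1)$ blocks, and the same leaf readout $U$.

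\textbf{What differs.} The paper takes $R_V^{l,h}$ to be a block-copy operator. It sends each depth-$(l-1)$ block $B_{l-1,p}$ into its child $B_{l,(p,h)}$ and zeroes all other blocks. You instead take the diagonal $0/1$ projection onto the blocks of depth at least $l$ whose $l$-th index equals $h$.

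\textbf{What your choice buys.} The key invariant becomes immediate. Diagonal projections commute and multiply to the projection onto the intersection. So $ER_V^{1,h_1}\cdots R_V^{l-1,h_{l-1}}$ is $E$ restricted to the blocks extending $(h_1,\dots,h_{l-1})$. Then $R_Q^{l,h_l}$, $R_K^{l,h_l}$ or $U$ isolates exactly the stored slot.

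This is precisely the identity the paper asserts in its verification step. Copy maps taken literally do not deliver it. Because $R_V$ acts on the right of $E$, a product of such maps only ever transports the contents of the root block. So for $l\ge 2$, $ER_V^{1,h_1}\cdots R_V^{l-1,h_{l-1}}R_Q^{l,h_l}$ would read back the root entry $W_Q^{1,h_l}$ rather than $M^{1:l-1,\mathbf{h}_{1:l-1}}W_Q^{l,h_l}$.

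Your filter also keeps what is used downstream:
\begin{itemize}
\item $R_V^{l,h}\in\{0,1\}^{m\times m}$ with at most $m$ nonzeros;
\item disjoint selector column blocks in $\widehat R_{\mathrm{all}}$, as needed in the random-parameter proof.
\end{itemize}

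\textbf{What it gives up.} It cannot be weight-tied. The filter inspects position $l$ of the index, so it is inherently layer-dependent. Tied diagonal maps would also commute, forgetting the order of the heads. This is why the looped theorem needs a shift-type routing $R_V^hB_p=B_{(h,p)}$ instead.

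\textbf{For the write-up.}
\begin{itemize}
\item Delete the two abandoned subtree-shift attempts. The filter never moves content between blocks, so the width clash you worried about never arises.
\item Rephrase your last paragraph. It is harmless for $R_V^{l,h}$ to kill every block of depth below $l$, and your definition in fact does this. Any readout containing $R_V^{l,h}$ belongs to layer $l+1$ or later and reads only blocks of depth at least $l$.
\end{itemize}
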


\begin{proof}
 We first state the high level idea for the contruction of $E$.
    \paragraph{Construction for $E$} We divide $E$ into different chucks to construct an $H$-ary tree (See \Cref{fig:E-block})
    \begin{itemize}
      \item a chunk for storing the parameters of $W_Q$'s for all $H$ heads in the first layer
      \item a chunk for storing the parameters of $W_K$'s for all $H$ heads in the first layer
      \item chunks for others parameters of different heads: $E_{l=1, h_1=1}, \ldots, E_{l=1, h_1=H}$
    \end{itemize}
    Correspondingly, let $R_Q^{1, h_l}$ and $R_K^{1, h_l}$ be the linear transformation that selects the columns where $W_Q^{1, h_l}$ and $W_K^{1, h_l}$ are respectively. These satisfy the equations in \Cref{eq:multi-head-QK} for $l=1$. Let $R_V^{1, h_l}$ be the linear transformation that selects the columns where $E_{1, h_1}$ is. 

    \begin{figure}[ht]
    \centering
    \resizebox{\linewidth}{!}{%
    \begin{tikzpicture}[
      font=\small,
      box/.style={draw, rounded corners=1pt, minimum height=9mm, minimum width=12mm, align=center},
      sbox/.style={draw, rounded corners=1pt, minimum height=9mm, minimum width=18mm, align=center},
      bigfit/.style={draw, rounded corners=2pt, inner sep=2mm},
      arr/.style={-Latex, thick}
    ]
      \node at (-2.6,0) {$E=$};
      \node at (-0.9,0) {$\din$};

      \node[sbox] (wq11) at (1.2,0) {$W_Q^{(1,1)}$};
      \node[box, right=2mm of wq11] (dotsA) {$\cdots$};
      \node[sbox, right=2mm of dotsA] (wq1H) {$W_Q^{(1,H)}$};
      \node[sbox, right=2mm of wq1H] (wk11) {$W_K^{(1,1)}$};
      \node[box, right=2mm of wk11] (dotsB) {$\cdots$};
      \node[sbox, right=2mm of dotsB] (wk1H) {$W_K^{(1,H)}$};
      \node[sbox, right=4mm of wk1H] (e11) {$E_{1,1}$};
      \node[box, right=2mm of e11] (dotsC) {$\cdots$};
      \node[sbox, right=2mm of dotsC] (e1H) {$E_{1,H}$};

      \node[bigfit, fit=(wq11)(e1H), label=above:$m$] (etop) {};

      \node at (-2.6,-2.2) {$E_{\ell=1,h_1=1}=$};

      \node[sbox] (wvowq) at (2.2,-2.2) {$W_V^{(1,1)}W_O^{(1,1)}W_Q^{(2,1)}$};
      \node[box, right=2mm of wvowq] (dotsD) {$\cdots$};
      \node[sbox, right=2mm of dotsD] (wvowk) {$W_V^{(1,1)}W_O^{(1,1)}W_K^{(2,H)}$};
      \node[sbox, right=2mm of wvowk] (e211) {$E_{2,1,1}$};
      \node[box, right=2mm of e211] (dotsE) {$\cdots$};
      \node[sbox, right=2mm of dotsE] (e21H) {$E_{2,1,H}$};
      \node[bigfit, fit=(wvowq)(e21H)] (ebot) {};

    \end{tikzpicture}%
    }%
    \caption{Block structure of $E$ and a schematic recursive expansion.}
    \label{fig:E-block}
    \end{figure}

    Define the next layer similarly, for each $E_{l=1, h_1}$, 
\begin{itemize}
  \item the first chunk store $W_V^{(1,h_1)}W_O^{(1,h_1)}W_Q^{(2,1)}, \ldots, W_V^{(1,h_1)}W_O^{(1,h_1)}W_Q^{(2,H)}$
  \item the second chunk store $W_V^{(1,h_1)}W_O^{(1,h_1)}W_K^{(2,1)}, \ldots, W_V^{(1,h_1)}W_O^{(1,h_1)}W_K^{(2,H)}$
  \item The rest of the chunks are divided into $E_{l=2, h_1, h_2=1}, \ldots, E_{l=2, h_1, h_2=H}$
\end{itemize}
Likewise, $R_Q^{2, h_2}$ and $R_K^{2, h_2}$ selects the corresponding $Q, K$ part in each $E_{l=1, h_1}$, and $R_V^{2, h_2}$ selects $E_{l=2, h_1, h_2}$. 

Do this recursively until the $L$-th layer, where we set each $E_{L, h_1, \ldots, h_L}=W_V^{1, h_1} W_O^{1, h_1} \dotsm W_V^{L, h_L} W_O^{L, h_L}$. 

We formalize it below. For convenience define
\[
M^{l,h}\;=\;W_V^{l,h}W_O^{l,h}\in\R^{\din\times\din},
\qquad
M^{1:t,(h_1,\dots,h_t)}\;=\;\prod_{s=1}^t M^{s,h_s}\in\R^{\din\times\din},
\]
with the convention $M^{1:0,\emptyset}=I_{\din}$.

\textbf{Coordinate blocks (tree indexing).}
We partition the $m$ embedding coordinates into disjoint blocks indexed by prefixes.
For each depth $t\in\{0,1,\dots,L-1\}$ and each prefix $\mathbf{h}_{1:t}=(h_1,\dots,h_t)\in[H]^t$,
allocate an \emph{internal block} $B_{t,\mathbf{h}_{1:t}}\subset[m]$ of width $2Hd$.
For each leaf $\mathbf{h}_{1:L}\in[H]^L$, allocate a \emph{leaf block} $C_{\mathbf{h}_{1:L}}\subset[m]$ of width $\din$.
All these blocks are disjoint and cover $[m]$.

Inside each internal block $B_{t,\mathbf{h}_{1:t}}$, further partition into $2H$ subblocks:
\[
B_{t,\mathbf{h}_{1:t}}
=
\Bigl(\bigsqcup_{h=1}^H B^Q_{t,\mathbf{h}_{1:t};h}\Bigr)\;\sqcup\;
\Bigl(\bigsqcup_{h=1}^H B^K_{t,\mathbf{h}_{1:t};h}\Bigr),
\qquad
|B^Q_{t,\mathbf{h}_{1:t};h}|=|B^K_{t,\mathbf{h}_{1:t};h}|=d.
\]
Intuitively, $B^Q_{t,\mathbf{h}_{1:t};h}$ stores a $\din\times d$ matrix that will act like
$M^{1:t,\mathbf{h}_{1:t}}W_Q^{t+1,h}$, and similarly for $K$.

\textbf{Define routing maps $R_V^{l,h}$.}
For each layer $l\in[L]$ and head $h\in[H]$, define $R_V^{l,h}\in\R^{m\times m}$ as a sparse
block-routing operator that moves the representation from depth $l-1$ to its $h$-child at depth $l$:
\begin{itemize}
\item For each prefix $\mathbf{h}_{1:l-1}\in[H]^{l-1}$, $R_V^{l,h}$ maps the internal block
$B_{l-1,\mathbf{h}_{1:l-1}}$ into the child internal block $B_{l,(\mathbf{h}_{1:l-1},h)}$ if $l<L$,
and into the leaf block $C_{(\mathbf{h}_{1:l-1},h)}$ if $l=L$.
\item It sends all other blocks (all depths $\neq l-1$, and all leaves) to $0$.
\end{itemize}
Concretely, one may take $R_V^{l,h}$ to be a block-embedding map that copies the entire block
$B_{l-1,\mathbf{h}_{1:l-1}}$ into the designated child block (padding with zeros as needed).

\textbf{Define the unembedding $U$.}
Define $U\in\R^{m\times\din}$ to read the leaf blocks and ignore internal blocks:
on each leaf block $C_{\mathbf{h}_{1:L}}$ it acts as the $\din\times\din$ identity, and on each internal block $B_{t,\mathbf{h}_{1:t}}$ it is zero.
Equivalently, for  $E\in\R^{\din \times m}$,
\[
ER_V^{1,h_1}\dotsm R_V^{L,h_L}U \;=\;E_{C_{\mathbf{h}_{1:L}}}\in\R^{\din\times\din},
\]
where $E_S$ denotes the sub-matrix of $E$ consisting of columns indexed by $S$.

\textbf{Define selectors $R_Q^{l,h},R_K^{l,h}$.}
For each $(l,h)$ define $R_Q^{l,h}\in\R^{m\times d}$ and $R_K^{l,h}\in\R^{m\times d}$ to select (and sum across prefixes)
the appropriate $Q$-subblocks and $K$-subblocks at depth $l-1$:
\begin{align*}
    ER_V^{1, h_1} \dotsm R_V^{l-1, h_{l-1}} R_Q^{l,h}
\;&=\;
E_{B^Q_{l-1,\mathbf{h}_{1:l-1};h}}\in\R^{\din\times d} \\
    ER_V^{1, h_1} \dotsm R_V^{l-1, h_{l-1}} R_K^{l,h}
\;&=\;
 E_{B^K_{l-1,\mathbf{h}_{1:l-1};h}}
\in\R^{\din \times d}
\end{align*}

Equivalently, as matrices, $R_Q^{l,h}$ (resp.\ $R_K^{l,h}$) has an $I_d$ on each chosen subblock and zeros elsewhere.

\textbf{Define the embedding $E$ by filling each block.}
We now construct $E\in\R^{\din\times m}$ block-by-block.

\emph{(Internal blocks.)}
For each depth $t\in\{0,\dots,L-1\}$, prefix $\mathbf{h}_{1:t}\in[H]^t$, and head $h\in[H]$, set
\[
E_{B^Q_{t,\mathbf{h}_{1:t};h}}
\;=\;
M^{1:t,\mathbf{h}_{1:t}}\,W_Q^{t+1,h}\in\R^{\din\times d},
\qquad
E_{B^K_{t,\mathbf{h}_{1:t};h}}
\;=\;
M^{1:t,\mathbf{h}_{1:t}}\,W_K^{t+1,h}\in\R^{\din\times d}.
\]
(For $t=0$ this means the root block stores $\{W_Q^{1,h},W_K^{1,h}\}_{h=1}^H$.)

\emph{(Leaf blocks.)}
For each leaf $\mathbf{h}_{1:L}\in[H]^L$, set
\[
E_{C_{\mathbf{h}_{1:L}}}
\;=\;
M^{1:L,\mathbf{h}_{1:L}}
\;=\;
\prod_{s=1}^L W_V^{s,h_s}W_O^{s,h_s}
\in\R^{\din\times\din}.
\]

\textbf{Verify the multi-head $QK$ equalities.}
Fix a layer $l\in[L]$ and a head $h_l\in[H]$.
Let $\mathbf{h}_{1:l-1}\in[H]^{l-1}$ be an arbitrary prefix.
By construction of the routing operators, the product
$R_V^{1,h_1}\cdots R_V^{l-1,h_{l-1}}$ routes the representation into the internal block
$B_{l-1,\mathbf{h}_{1:l-1}}$ (and annihilates other internal blocks).
Then $R_Q^{l,h_l}$ selects exactly the subblock $B^Q_{l-1,\mathbf{h}_{1:l-1};h_l}$.
Hence
\[
E\,R_V^{1,h_1}\cdots R_V^{l-1,h_{l-1}}\,R_Q^{l,h_l}
\;=\;
E_{B^Q_{l-1,\mathbf{h}_{1:l-1};h_l}}
\;=\;
M^{1:l-1,\mathbf{h}_{1:l-1}}\,W_Q^{l,h_l}.
\]
The same argument yields
\[
E\,R_V^{1,h_1}\cdots R_V^{l-1,h_{l-1}}\,R_K^{l,h_l}
\;=\;
E_{B^K_{l-1,\mathbf{h}_{1:l-1};h_l}}
\;=\;
M^{1:l-1,\mathbf{h}_{1:l-1}}\,W_K^{l,h_l}.
\]
Therefore the assumptions of Lemma~\ref{lem:multieqatt} (equation~\eqref{eq:multi-head-QK}) hold, and for every input $X^0$,
\[
A^{l,h}=\tilde A^{l,h}\qquad\text{for all }l\in[L],\;h\in[H].
\]

\textbf{Verify the multi-head $OV$ equalities on leaves.}
Fix a full path $\mathbf{h}_{1:L}\in[H]^L$.
By the routing construction, $R_V^{1,h_1}\cdots R_V^{L,h_L}$ routes the representation into the leaf block
$C_{\mathbf{h}_{1:L}}$, and then $U$ reads that leaf block as identity.
Thus
\[
E\,R_V^{1,h_1}\cdots R_V^{L,h_L}\,U
\;=\;
E_{C_{\mathbf{h}_{1:L}}}
\;=\;
M^{1:L,\mathbf{h}_{1:L}}
\;=\;
W_V^{1,h_1}W_O^{1,h_1}\cdots W_V^{L,h_L}W_O^{L,h_L}.
\]
Hence equation~\eqref{eq:multi-head-OV} holds.

Since \eqref{eq:multi-head-QK} and \eqref{eq:multi-head-OV} hold, Lemma~\ref{lem:eqatt} (multi-head version) and
Lemma~\ref{lem:multiunroll} imply that for every input $X^0$, the target transformer output $Y=X^L$ equals the universal transformer output $Z=\tilde X^L U$.
This proves the theorem.
\end{proof}

\section{Proof of \Cref{thm:random_universal_transformer} (random universal transformer)}
\label{app:random_universal_transformer}

\begin{theorem}[Random parameters]
\label{thm:random-R}
    Assume $m \geq O (H^Ld+H^L\din)$. Draw $\{R_Q^{l,h}, R_K^{l,h}, R_V^{l,h}\}_{l \in [L],\, h \in [H]}$ and $U$ randomly with each entry i.i.d. from an absolute continuous distribution. Then with probability $1$, for any target transformer $T\in \tf_{H, L, \din, d}$, there exists $E$ such that \Cref{eq:multi-head-QK} and \Cref{eq:multi-head-OV} hold. 
\end{theorem}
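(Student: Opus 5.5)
The plan is to observe that, once the parameters $\{R_Q^{l,h},R_K^{l,h},R_V^{l,h}\}$ and $U$ are fixed, all the equations in \Cref{eq:multi-head-QK} and \Cref{eq:multi-head-OV} are \emph{linear in $E$}. They can therefore be stacked into one linear system $E\,\mathcal{M} = \mathcal{W}$. The proof then reduces to showing that a random $\mathcal{M}$ has full column rank almost surely.

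\textbf{Setting up the system.} Define the $m\times N$ matrix $\mathcal{M}=\mathcal{M}(R)$ by concatenating the following column groups:
\begin{itemize}
\item for every depth $l\in[L]$, prefix $(h_1,\dots,h_{l-1})\in[H]^{l-1}$ and head $h_l\in[H]$, the two blocks $R_V^{1,h_1}\dotsm R_V^{l-1,h_{l-1}}R_Q^{l,h_l}$ and $R_V^{1,h_1}\dotsm R_V^{l-1,h_{l-1}}R_K^{l,h_l}$, each of size $m\times d$;
\item for every leaf $(h_1,\dots,h_L)$, the block $R_V^{1,h_1}\dotsm R_V^{L,h_L}U$, of size $m\times\din$.
\end{itemize}
Let $\mathcal{W}$ be the $\din\times N$ matrix formed from the corresponding target products $M^{1:l-1,\mathbf h_{1:l-1}}W_Q^{l,h_l}$, $M^{1:l-1,\mathbf h_{1:l-1}}W_K^{l,h_l}$ and $M^{1:L,\mathbf h_{1:L}}$. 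Then \Cref{eq:multi-head-QK} and \Cref{eq:multi-head-OV} hold exactly when $E\mathcal{M}=\mathcal{W}$.

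Counting columns gives
\[
N=\sum_{l=1}^{L}H^{l}\cdot 2d + H^L\din=\tfrac{2H(H^L-1)}{H-1}d+H^L\din=\mupper \quad\text{for } H>1,
\]
and $N=2dL+\din\le(L+1)\max\{2d,\din\}$ for $H=1$. So $N\le m$. If $\mathcal{M}$ has full column rank $N$, then $\mathcal{M}^\T\mathcal{M}$ is invertible, and $E:=\mathcal{W}(\mathcal{M}^\T\mathcal{M})^{-1}\mathcal{M}^\T$ solves the system for \emph{every} target $T$ simultaneously. Universality then follows from \Cref{lem:multieqatt} and the multi-head input-output lemma.

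\textbf{Genericity.} Full column rank of $\mathcal{M}$ is equivalent to $p(R):=\det(\mathcal{M}^\T\mathcal{M})\neq 0$. Each entry of $\mathcal{M}$ is a polynomial in the entries of the $R$'s and $U$, so $p$ is a polynomial in all parameters jointly. The zero set of a polynomial that is not identically zero has Lebesgue measure zero. The joint law of independent absolutely continuous blocks is absolutely continuous with respect to Lebesgue measure on the product space. Hence $p(R)\ne0$ with probability $1$, provided $p\not\equiv 0$.

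\textbf{Nonvanishing via the sparse construction (the main step).} To show $p\not\equiv0$, it suffices to exhibit one parameter choice with $\mathcal{M}$ of full column rank. I would use the $\{0,1\}$ construction from \Cref{thm:designed-R}, or from \Cref{lem:single_head} when $H=1$, padding unused coordinates with zeros when $m>\mupper$.

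The point to check is that in that construction every column of $\mathcal{M}$ is a \emph{distinct standard basis vector}:
\begin{itemize}
\item $R_V^{1,h_1}\dotsm R_V^{l-1,h_{l-1}}R_Q^{l,h_l}e_k$ is the indicator of the $k$-th coordinate of the subblock $B^Q_{l-1,\mathbf h_{1:l-1};h_l}$, and similarly for $K$;
\item $R_V^{1,h_1}\dotsm R_V^{L,h_L}Ue_k$ is the indicator of the $k$-th coordinate of the leaf block $C_{\mathbf h_{1:L}}$.
\end{itemize}
Distinctness follows because the routing maps copy blocks injectively into disjoint child blocks. In particular, $R_Q^{l,h}$ sums across prefixes, but after routing only the single prefix block $B_{l-1,\mathbf h_{1:l-1}}$ survives. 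Since all these subblocks are pairwise disjoint, the columns are distinct unit vectors, so $\mathcal{M}$ has rank $N$ and $p\ne0$ at this point.

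The $H=1$ case is identical, using the block-shift construction: the columns are distinct unit vectors in blocks $B_1,\dots,B_{L+1}$. This bookkeeping, i.e.\ confirming that the routing products annihilate all other blocks so that no two columns coincide, is the only delicate part. Everything else is the standard measure-zero argument.
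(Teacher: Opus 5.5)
Your proposal is correct and follows essentially the same route as the paper. You stack the constraints into one linear system $E\mathcal{M}=\mathcal{W}$, solve it via a left pseudoinverse when $\mathcal{M}$ has full column rank, reduce full rank to the nonvanishing of a polynomial in the parameters, and certify nonvanishing by evaluating at the sparse $\{0,1\}$ construction, whose columns are distinct unit vectors. Using $\det(\mathcal{M}^\T\mathcal{M})$ instead of a fixed $M\times M$ minor, counting $2d$ columns per prefix--head pair so that $N=\mupper$, and treating $H=1$ and padding for $m>\mupper$ explicitly are minor, slightly cleaner variants of the paper's bookkeeping, not a different approach.
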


\begin{proof}
    \Cref{eq:multi-head-QK} and \Cref{eq:multi-head-OV} have $\frac{H(H^{L}-1)}{H-1}$ and $H^L$ linear equations for $E$. Let $M=\frac{H(H^{L}-1)}{H-1} d + H^L\din$. We can rewrite this system of equations into the following form
    \begin{align}
      \label{eq:multi-head-stacked}
      E R_{\mathrm{all}} = W_{\mathrm{all}},
    \end{align}
    where $R_{\mathrm{all}} \in \R^{m \times M}$ and $W_{\mathrm{all}} \in \R^{\din \times M}$ are formed by horizontal concatenation of the corresponding blocks:
    \begin{align*}
      R_{\mathrm{all}}
      &=
      \bigl[\;
        \{\, R_V^{1, h_1} \dotsm R_V^{l-1, h_{l-1}} R_Q^{l, h_l} \,\}_{(h_1,\ldots,h_l, l)}
        \;\bigr. \\
      &\qquad\bigl.\;
        \{\, R_V^{1, h_1} \dotsm R_V^{l-1, h_{l-1}} R_K^{l, h_l} \,\}_{(h_1,\ldots,h_l, l)}
        \;\bigr. \\
      &\qquad\bigl.\;
        \{\, R_V^{1, h_1} \dotsm R_V^{L, h_L} U \,\}_{(h_1,\ldots,h_L)}
      \;\bigr], \\
      W_{\mathrm{all}}
      &=
      \bigl[\;
        \{\, W_V^{1, h_1} W_O^{1, h_1} \dotsm W_V^{l-1, h_{l-1}} W_O^{l-1, h_{l-1}} W_Q^{l, h_l} \,\}_{(h_1,\ldots,h_l, l)}
        \;\bigr. \\
      &\qquad\bigl.\;
        \{\, W_V^{1, h_1} W_O^{1, h_1} \dotsm W_V^{l-1, h_{l-1}} W_O^{l-1, h_{l-1}} W_K^{l, h_l} \,\}_{(h_1,\ldots,h_l, l)}
      \;\bigr. \\
      &\qquad\bigl.\;
        \{\, W_V^{1, h_1} W_O^{1, h_1} \dotsm W_V^{L, h_L} W_O^{L, h_L} \,\}_{(h_1,\ldots,h_L)}
      \;\bigr],
    \end{align*}
    with $(h_1,\ldots,h_l)\in\sbr{H}^l$ for each $l\in\sbr{L}$ in the first two groups, and $(h_1,\ldots,h_L)\in\sbr{H}^L$ in the last group.

    If $R_{\mathrm{all}}$ has left pseudoinverse, denoted by $R_{\mathrm{all}}^{\dagger}$, then $E$ has a solution of the form $E=W_{\mathrm{all}} R_{\mathrm{all}}^{\dagger}$, which requires $R_{\mathrm{all}}$ to have linearly independent columns. 
    
    \textbf{Reduce to a nonvanishing determinant polynomial.}
    Since $m\ge M$, $R_{\mathrm{all}}$ has full column rank iff there exists an $M\times M$ minor with nonzero determinant.
    Fix any deterministic rule that selects $M$ distinct rows of $R_{\mathrm{all}}$ (for example the first $M$ rows), and let
    \[
    \bar R_{\mathrm{all}}\in\R^{M\times M}
    \]
    denote the resulting square submatrix.
    Define
    \[
    p(\omega)\;=\;\det(\bar R_{\mathrm{all}}),
    \]
    viewed as a function of the underlying random draw $\omega$ (i.e.\ the entries of all random matrices
    $\{R_Q^{l,h},R_K^{l,h},R_V^{l,h}\}$).
    Each entry of $R_{\mathrm{all}}$ is a polynomial (in fact, a multilinear polynomial) in the entries of these random matrices,
    because $R_{\mathrm{all}}$ is assembled from products of the form
    $R_V^{1,h_1}\cdots R_V^{t,h_t}R_Q^{t+1,h}$, $R_V^{1,h_1}\cdots R_V^{t,h_t}R_K^{t+1,h}$, and $R_V^{1,h_1}\cdots R_V^{L,h_L}U$.
    Hence $p(\omega)$ is a (real) polynomial in finitely many independent Gaussian variables (and any additional independent continuous
    variables used to sample the $R_V^{l,h}$).
    
    If $p$ is not the zero polynomial, then its zero set has Lebesgue measure $0$, and thus
    \[
    \Pr\bigl[p(\omega)=0\bigr]=0.
    \]
    In particular, with probability $1$ we have $p(\omega)\neq 0$, so $\bar R_{\mathrm{all}}$ is invertible and therefore
    $\rank(R_{\mathrm{all}})=M$.
    It remains to prove that $p$ is not the identically-zero polynomial.
    
    \textbf{One assignment with full rank using the explicit construction.}
    We show there exists a deterministic choice of matrices
    $\{R_Q^{l,h},R_K^{l,h},R_V^{l,h},U\}$ for which $R_{\mathrm{all}}$ has full column rank.
    This implies that $p$ cannot be identically zero, because $p$ evaluates to a nonzero number on that assignment.
    
    Consider the explicit \emph{designed} construction from \Cref{thm:designed-R} (the previous theorem):
    there we fixed
    \[
    m_0 \;=\; \sum_{t=0}^{L-1} H^t\cdot (2Hd) \;+\; H^L\din \;=\; M
    \]
    and built block-routing matrices $\{\widehat R_V^{l,h}\}$, block-selectors $\{\widehat R_Q^{l,h},\widehat R_K^{l,h}\}$,
    and an unembedding $\widehat U$, together with a block-structured embedding $\widehat E$,
    such that \Cref{eq:multi-head-QK} and \Cref{eq:multi-head-OV} hold for \emph{every} target transformer.
    In particular, in that construction, the block-columns appearing in
    \[
    \widehat R_{\mathrm{all}}
    \;=\;
    \bigl[\;
    \{\widehat R_V^{1,h_1}\cdots \widehat R_V^{l-1,h_{l-1}}\widehat R_Q^{l,h_l}\}_{(h_1,\dots,h_l,l)}
    \;\;\;
    \{\widehat R_V^{1,h_1}\cdots \widehat R_V^{l-1,h_{l-1}}\widehat R_K^{l,h_l}\}_{(h_1,\dots,h_l,l)}
    \;\;\;
    \{\widehat R_V^{1,h_1}\cdots \widehat R_V^{L,h_L}\widehat U\}_{(h_1,\dots,h_L)}
    \;\bigr]
    \]
    are supported on \emph{disjoint coordinate blocks} of $\R^{m}$:
    each column block corresponds to a unique node (or leaf) in the $H$-ary prefix tree and occupies a unique set of rows.
    Consequently, after a suitable permutation of the rows of $\widehat R_{\mathrm{all}}$
    (which does not change its rank), the matrix becomes block diagonal with identity blocks on the diagonal:
    \[
    P\,\widehat R_{\mathrm{all}} \;=\;
    \mathrm{diag}\bigl(I_d,\dots,I_d,\; I_d,\dots,I_d,\; I_{\din},\dots,I_{\din}\bigr)\in\R^{M\times M},
    \]
    where the first group of $I_d$'s corresponds to all $Q$-constraints, the second group to all $K$-constraints,
    and the final group of $I_{\din}$'s to all $OV$-constraints.
    In particular,
    \[
    \rank(\widehat R_{\mathrm{all}})=M,
    \qquad\text{and hence}\qquad
    \det(\widehat{\bar R}_{\mathrm{all}})\neq 0
    \]
    for the corresponding $M\times M$ minor $\widehat{\bar R}_{\mathrm{all}}$ (take $\widehat{\bar R}_{\mathrm{all}}$ to be the full matrix
    when $m_0=M$).
    
    Thus there exists at least one assignment of the underlying variables for which $p(\omega)\neq 0$.
    Therefore $p$ is not the zero polynomial.
    
    \textbf{Full column rank almost surely.}
    Since $p$ is a nonzero polynomial in continuously distributed random variables, its zero set has measure $0$.
    Hence $\Pr[p(\omega)=0]=0$, and with probability $1$ we have $p(\omega)\neq 0$.
    On this event, $\bar R_{\mathrm{all}}$ is invertible and thus $R_{\mathrm{all}}$ has full column rank $M$.
    Therefore \Cref{eq:multi-head-stacked} has a solution; taking any left pseudoinverse
    $R_{\mathrm{all}}^{\dagger}$ and setting $E=W_{\mathrm{all}}R_{\mathrm{all}}^{\dagger}$ yields
    \Cref{eq:multi-head-QK} and \Cref{eq:multi-head-OV}.
    This completes the proof.
\end{proof}

\section{Proof of  \Cref{thm:looped_sparse_universal_transformer} (looped universal transformer)}
\label{app:looped_designed}
We further improve our result to Looped Transformer (weight-tying for each layer). 

The multi-head unrolling and equal attention, equal input-output behavior works here by weight-tying. Since the construction in \Cref{lem:single_head} already satisfies the weight-tying property, we focus on multiple heads ($H>1$) in the proof. 

\begin{theorem}[Designed $R_Q, R_K, R_V$ for looped transformer]
\label{thm:designed-R-looped}
Assume $H>1$. 
Let 
$$
m = \sum_{t=1}^{L-1}H^t \cdot \max \{\din, 2d\} + (H^L+1)\max \{\din, 2d\}=O(H^L\max \{\din, 2d\})
$$
Then there exist matrices $\{R_V^{h}, R_Q^{h}, R_K^{h}\}_{h\in[H]}$ and $U$ such that for any target transformer $T\in \tf_{H, L, \din, d}$, there exists $E\in\R^{\din\times m}$ so that \Cref{eq:multi-head-QK} and \Cref{eq:multi-head-OV} hold. 
\end{theorem}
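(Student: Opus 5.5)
The plan is to reuse the tree-of-blocks idea from \Cref{thm:designed-R}, but to make the routing \emph{depth-independent}. Then one matrix $R_V^h$ per head works at every layer, and the query/key selectors and $U$ can be read off a single fixed location. Let $b:=\max\{\din,2d\}$, and index the nodes of the complete $H$-ary tree of depth $L$ by head sequences $\mathbf{h}_{1:t}\in[H]^t$, $0\le t\le L$. The count $\sum_{t=1}^{L-1}H^t+H^L+1$ equals the number $\frac{H^{L+1}-1}{H-1}$ of nodes in this tree. So I would give every node (the root included) one coordinate block $B_{\mathbf{h}_{1:t}}$ of width $b$.

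\textbf{Step 1: looped routing.} I define $R_V^h$ through its action on the columns of $E$: $(ER_V^h)_{B_v}:=E_{B_{v\cdot h}}$ for every node $v$ of depth $<L$, and the columns of depth-$L$ blocks go to $0$. Here $v\cdot h$ denotes a child of $v$, with children labelled so that iterated compositions address the correct node; I would use reversed-path addressing, i.e.\ prepend rather than append, and check the bookkeeping once. With this convention, $ER_V^{h_1}\cdots R_V^{h_t}$ has in the root block the content $E_{B_{\mathbf{h}_{1:t}}}$. Each row of $R_V^h$ has at most one $1$, so $R_V^h\in\{0,1\}^{m\times m}$ has at most $m$ nonzeros.

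\textbf{Step 2: selectors and $U$.} Let $S_Q,S_K\in\{0,1\}^{m\times d}$ select the first and second $d$ coordinates of the root block, and let $S_U\in\{0,1\}^{m\times\din}$ select its first $\din$ coordinates. I set $R_Q^h:=R_V^hS_Q$, $R_K^h:=R_V^hS_K$ (still $0/1$ and sparse, since they read the child-$h$ block of every node), and $U:=S_U$. Then for every path and layer,
\[
ER_V^{h_1}\cdots R_V^{h_{l-1}}R_Q^{h_l}=\bigl(E_{B_{\mathbf{h}_{1:l}}}\bigr)_{[1:d]},\qquad ER_V^{h_1}\cdots R_V^{h_L}U=\bigl(E_{B_{\mathbf{h}_{1:L}}}\bigr)_{[1:\din]} .
\]
I then fill $E_{B_{\mathbf{h}_{1:l}}}$ with $M^{1:l-1,\mathbf{h}_{1:l-1}}W_Q^{l,h_l}$ and $M^{1:l-1,\mathbf{h}_{1:l-1}}W_K^{l,h_l}$, and fill the leaves with $M^{1:L,\mathbf{h}_{1:L}}$. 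This gives \Cref{eq:multi-head-QK} and \Cref{eq:multi-head-OV}, and \Cref{lem:multieqatt} together with the multi-head input-output lemma finishes the argument exactly as in \Cref{thm:designed-R}.

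\textbf{Main obstacle.} Depth-$L$ nodes must hold both the layer-$L$ query/key data (width $2d$) and the output product (width $\din$), but a single block of width $b$ fits only one of them. I would first resolve this by shifting the output read by one level using the spare root block: take $U:=S_U'$, reading a dedicated slot, or store the final products so that a dedicated extra block absorbs them, exploiting the ``$+1$'' in the count. If that bookkeeping does not close exactly, the fallback is to enlarge the depth-$L$ blocks to width $2d+\din\le 2b$, which keeps $m=O(H^L\max\{\din,2d\})$ and still yields the theorem up to the constant.
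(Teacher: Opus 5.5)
There is a genuine gap, and it sits in the statement itself. The paper's proof has exactly the defect you call the main obstacle.

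\textbf{Your construction and the paper's defect.} Your construction is the paper's: prepend-addressed routing as in \Cref{lem:looped-routing}, $R_Q^h=R_V^hU_Q$, $R_K^h=R_V^hU_K$, and every readout taken from the root block. The collision you flag is real, and the paper's proof does not resolve it. It assigns to each depth-$L$ block $B_{(p,h)}$, $|p|=L-1$, both $[M_pW_Q^h,\ M_pW_K^h]$ and, via ``$E_{B_p}=M_p$ for $|p|=L$'', the product $M_{(p,h)}$. Since $U=[I_{\din};0]$ and $U_Q=[I_d;0]$ read overlapping root coordinates, the $OV$ readout and the layer-$L$ query readout hit the same columns of $E$. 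The root block, which stores $I_{\din}$, is never read.

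\textbf{The ``$+1$'' repair cannot work.} This is not a bookkeeping issue: the stated $m$ is below a necessary bound. Stack \Cref{eq:multi-head-QK} and \Cref{eq:multi-head-OV} as $ER_{\mathrm{all}}=W_{\mathrm{all}}$, with $R_{\mathrm{all}}\in\R^{m\times M}$ and $M=2d\sum_{l=1}^{L}H^l+H^L\din$. For every column index $c$, some target gives $W_{\mathrm{all}}=e_1e_c^\top$:
set $W_V^{s,h_s}W_O^{s,h_s}=e_1e_1^\top$ along the path indexing $c$; let the last factor on that path ($W_Q^{l,h}$, $W_K^{l,h}$, or $W_V^{L,h_L}W_O^{L,h_L}$) be $e_1e_j^\top$; set all other parameters to zero.
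Hence every $e_c^\top$ lies in the row space of $R_{\mathrm{all}}$, whatever the fixed matrices are, which forces $m\ge\rank R_{\mathrm{all}}=M$. For the stated $m$, $m-M=2d-H^L\din$ when $2d\ge\din$, and $(H^L-1)(\din-2dH)/(H-1)$ otherwise. So $m<M$ unless $2d\ge H^L\din$ or $\din\ge 2dH$. For example, $H=2$, $L=1$, $d=1$, $\din=2$ gives $m=6<8=M$. The statement is therefore false as written. One block per tree node reproduces the $L+1$ blocks of \Cref{lem:single_head} when $H=1$, but for $H>1$ the $H^L$ outputs need slots of their own.

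\textbf{Your fallback, and what it still needs.} Your fallback is the right resolution, but as described it does not yet close. With $U$ reading the first $\din$ root columns and coordinate-preserving copies, $R_V^{h_1}\cdots R_V^{h_L}U$ lands on the first $\din$ columns of the leaf however wide the leaf is. It still overlaps the columns read by $R_V^{h_1}\cdots R_V^{h_{L-1}}R_Q^{h_L}$. A dedicated root slot would instead need matching columns in every intermediate block. The output needs its own relay chain through every level, and there are two ways to provide it.
\begin{enumerate}
\item Widen every block, root included, to $2d+\din$, and let $U$ read root columns $2d+1,\dots,2d+\din$. This gives $m=\frac{H^{L+1}-1}{H-1}(2d+\din)\le 2\loopmupper$.
\item More economically, drop the root. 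Let $R_Q^h$ and $R_K^h$ select columns $1,\dots,d$ and $d+1,\dots,2d$ of $B_{(h)}$, and let $U$ select the first $\din$ columns of $B_{(1)}$. Let $R_V^h$ map the selector of $B_v$ to that of $B_{(h,v)}$ for $1\le|v|\le L-1$. It also maps the selector of the first $\din$ columns of each depth-$L$ block $B_{(w,1)}$ to that of a fresh width-$\din$ block $C_{(h,w)}$.
Then $ER_V^{h_1}\cdots R_V^{h_L}U=E_{C_{\mathbf{h}_{1:L}}}$, the query/key readouts are unchanged, and all matrices stay $\{0,1\}$-valued with at most $m$ nonzeros. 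The dimension is $m=\max\{2d,\din\}\sum_{t=1}^{L}H^t+H^L\din$, which equals the lower bound $M$ when $2d\ge\din$.
\end{enumerate}
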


\begin{proof}
Let $M^h := W_V^h W_O^h \in \R^{\din\times \din}$ and for any prefix 
$p=(h_1,\dots,h_t)\in[H]^t$, define
\[
M_p := M^{h_1}\cdots M^{h_t}, \qquad M_{\phi} := I_{\din}.
\]

\textbf{Coordinate blocks and selectors.}
Partition the $m$ coordinates into disjoint blocks indexed by prefixes
\[
[m] = \bigsqcup_{t=0}^{L} \bigsqcup_{p\in[H]^t} B_p,
\]
where each block has width $\din$. 
For each $p$, define $B_p \in \R^{m\times \din}$ to be the selector matrix extracting the coordinates of block $B_p$.

\textbf{Define embedding $E$.}
Define $E\in\R^{\din\times m}$ blockwise as follows.

\begin{itemize}
\item For the empty prefix $\phi$, set
\[
E_{B_{\phi}}=M_{\phi}
\]
This block has dimension $\din$.

\item For each prefix $p$ with $|p|<L$ and each $h\in[H]$, set
\[
E_{B_{(p,h)}} = \bigl[\, M_p W_Q^h \;\;\; M_p W_K^h \,\bigr] \in \R^{\din\times 2d}
\]
Each of these blocks takes dimension $2d$.

\item For each prefix $p$ with $|p|=L$, set
\[
E_{B_p} = M_p
\]
Each of these blocks takes dimension $\din$.
\end{itemize}
Assume $\din = 2d$ to have each block to be the same size. Otherwise, we can pad each block with 0 columns. 

Therefore, the total embedding dimension $m$ needed is 
$$\sum_{t=1}^{L-1}H^t \cdot \max \{\din, 2d\} + (H^L+1)\max \{\din, 2d\}=O(H^L\max \{\din, 2d\})
$$

Define the readout matrices
\[
U := B_{\phi}=\begin{bmatrix} I_{\din} \\ 0 \end{bmatrix} \in \R^{m\times \din}, 
\qquad
U_Q := \begin{bmatrix} I_d \\ 0 \end{bmatrix} \in \R^{m\times d},
\qquad
U_K := \begin{bmatrix} 0 \\ I_d \\ 0 \end{bmatrix} \in \R^{m\times d}.
\]

where $U$ is the final un-embedding layer that readout the value and output; $U_Q$ and $U_K$ are intermediate readout matrices for $W_Q$ and $W_K$

\textbf{Define routing operators $R_V^h$, $R_Q^h$ and $R_K^h$.}
By \Cref{lem:looped-routing}, for each $h\in[H]$ there exists a matrix $R_V^h\in\R^{m\times m}$ such that
\[
R_V^h B_p = B_{(h, p)} \quad \text{for all } |p|<L,
\qquad
R_V^h B_p = 0 \quad \text{for all } |p|=L.
\]
Therefore, by induction on $t$, for any prefix $p=(h_1,\dots,h_t)$,
\[
R_V^{h_1}\cdots R_V^{h_t} U = B_p.
\]

\begin{lemma}[Existence of tied routing operators]
\label{lem:looped-routing}
Let
\[
[m] = \bigsqcup_{t=0}^{L} \bigsqcup_{p\in[H]^t} B_p
\]
be a partition of the coordinates into disjoint blocks, where each block $B_p$ has width $\din$.
For each prefix $p$, let $B_p\in\R^{m\times \din}$ denote the selector matrix extracting the coordinates of block $B_p$.
Then for each $h\in[H]$, there exists a matrix $R_V^h\in\R^{m\times m}$ such that
\[
R_V^h B_p = B_{(h, p)} \qquad \text{for every } p \text{ with } |p|<L,
\]
and
\[
R_V^h B_p = 0 \qquad \text{for every } p \text{ with } |p|=L.
\]
\end{lemma}

\begin{proof}
Fix $h\in[H]$. Since the blocks $\{B_p\}$ are disjoint and each has width $\din$, for every prefix $p$ with $|p|<L$ there is a unique coordinate-wise identification between the coordinates of $B_p$ and those of its child block $B_{(h, p)}$.

Define $R_V^h$ on the standard basis of $\R^m$ as follows.
For each prefix $p$ with $|p|<L$, and each of the $\din$ coordinates inside block $B_p$, send the $j$-th coordinate of $B_p$ to the $j$-th coordinate of block $B_{(h, p)}$.
For coordinates belonging to leaf blocks $B_p$ with $|p|=L$, send them to $0$.
For all remaining coordinates (there are none, since the blocks partition $[m]$), define the action arbitrarily.

Equivalently, if we write
\[
B_p = [\, e_{j_1(p)}, \dots, e_{j_{\din}(p)} \,],
\qquad
B_{(h, p)} = [\, e_{j_1(h, p)}, \dots, e_{j_{\din}(h, p)} \,].
\]
where $e_i$ denotes the $i$-th standard basis vector of $\R^m$, then set
\[
R_V^h e_{i_j^{(p)}} := e_{i_j^{(h, p)}} \quad (|p|<L,\ j\in[\din]),
\]
and
\[
R_V^h e_{i_j^{(p)}} := 0 \quad (|p|=L,\ j\in[\din]).
\]
Extending linearly defines a matrix $R_V^h\in\R^{m\times m}$.

By construction, for every $p$ with $|p|<L$,
\[
R_V^h B_p
=
[\,R_V^h e_{j_1(p)}, \dots, R_V^h e_{j_{\din}(p)}\,]
=
[\,e_{j_1(h, p)}, \dots, e_{j_{\din}(h, p)}\,]
=
B_{(p,h)},
\]
and for every $p$ with $|p|=L$,
\[
R_V^h B_p = 0.
\]
This proves the claim.
\end{proof}

Define
\[
R_Q^h := R_V^h U_Q,
\qquad
R_K^h := R_V^h U_K.
\]
which route to the corresponding block first and read out the $Q$ and $K$ part respectively. 

\textbf{Verify QK equalities.}
Fix any prefix $p=(h_1,\dots,h_{l-1})$ and head $h_l$.
Then
\begin{align*}
E R_V^{h_1}\cdots R_V^{h_{l-1}} R_Q^{h_l}
&= E R_V^{h_1}\cdots R_V^{h_{l-1}} R_V^{h_l} U_Q \\
&= E R_V^{h_1}\cdots R_V^{h_{l-1}} B_{h_l}^{:d} \\
&= E_{B_{p, h_l}^{:d}} \\
&= M_p W_Q^{h_l}.
\end{align*}
where $B_{p}^{:d}$ denotes the first $d$ columns of $B_{p}$. 
Similarly,
\[
E R_V^{h_1}\cdots R_V^{h_{l-1}} R_K^{h_l}
= M_p W_K^{h_l}.
\]
Thus, the looped analog of \eqref{eq:multi-head-QK} holds, implying
\[
A^{l,h} = \tilde A^{l,h}
\quad \text{for all } l,h.
\]

\textbf{Verify OV equalities.}
For any full path $p=(h_1,\dots,h_L)$,
\begin{align*}
E R_V^{h_1}\cdots R_V^{h_L} U
&= E R_V^{h_1}\cdots R_V^{h_{l-1}}B_{h_L} \\
&= E_{B_p} \\
&= M_p = W_V^{h_1}W_O^{h_1}\cdots W_V^{h_L}W_O^{h_L}.
\end{align*}
Thus, the looped analog of \eqref{eq:multi-head-OV} holds.

By the multi-head unrolling and the equality of attention matrices,
\[
Z = \tilde X^L U = X^L = Y.
\]
\end{proof}

\section{Proof of  \Cref{thm:random_looped_universal_transformer} (random looped universal transformer)}
\label{app:random_looped}

\begin{theorem}[Random looped parameters]
\label{thm:looped_random-R}
Assume
\[
m \;\ge\; M_{\mathrm{loop}}
\;:=\;
2d \sum_{t=0}^{L-1} H^{t} \;+\; \din H^L.
\]
Let $\{R_Q^h,R_K^h\}_{h\in[H]} \in \R^{m\times d}$, $\{R_V^h\}_{h\in[H]} \in \R^{m\times m}$ and $U$ be independent random matrices drawn from absolutely continuous distributions (for example, Gaussian). Then with probability one, for any target transformer $T\in \tf_{H, L, \din, d}$, there exist $E\in\R^{\din\times m}$ such that
\begin{align}
M_p W_Q^h &= E\,R_p\,R_Q^h,
\label{eq:looped-random-Q}\\
M_p W_K^h &= E\,R_p\,R_K^h,
\label{eq:looped-random-K}\\
M_p &= E\,R_p\,U,
\label{eq:looped-random-OV}
\end{align}
for every prefix $p\in[H]^t$ with $0\le t\le L-1$ and every $h\in[H]$ in \eqref{eq:looped-random-Q}--\eqref{eq:looped-random-K}, and for every full path $p\in[H]^L$ in \eqref{eq:looped-random-OV}. Here
\[
R_p := R_V^{h_1}\cdots R_V^{h_t}
\quad\text{for }p=(h_1,\dots,h_t),\qquad
R_\phi := I_m.
\]
\end{theorem}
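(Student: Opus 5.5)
The plan is to follow the proof of \Cref{thm:random-R}: reduce universality to a full-column-rank condition on one fixed matrix, then show that rank condition holds generically by exhibiting a single witness.

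\textbf{Step 1: stack the constraints.} For a target transformer, the constraints \eqref{eq:looped-random-Q}--\eqref{eq:looped-random-OV} are linear in $E$. I would concatenate them as $E R_{\mathrm{all}}^{\mathrm{loop}} = W_{\mathrm{all}}^{\mathrm{loop}}$. The column blocks of $R_{\mathrm{all}}^{\mathrm{loop}}$ are $R_pR_Q^h$ and $R_pR_K^h$ for $|p|\le L-1$, $h\in[H]$, together with $R_pU$ for $|p|=L$. Crucially, $R_{\mathrm{all}}^{\mathrm{loop}}$ depends only on the random parameters and not on $T$. So if it has full column rank, then $E = W_{\mathrm{all}}^{\mathrm{loop}}(R_{\mathrm{all}}^{\mathrm{loop}})^{\dagger}$ works simultaneously for every $T$, and \Cref{lem:multieqatt} together with the multi-head input--output lemma (looped versions) give $Z=Y$.

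Let $N$ be the number of columns of $R_{\mathrm{all}}^{\mathrm{loop}}$. The first thing to do is recount $N$ carefully against the stated $M_{\mathrm{loop}}$. The $Q/K$ pairs are indexed by pairs $(p,h)$, which can be identified with nonempty words of length at most $L$. This bookkeeping must match the hypothesis $m\ge M_{\mathrm{loop}}$; if it does not, I would state the bound with the true count $N$.

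\textbf{Step 2: genericity.} Every entry of $R_{\mathrm{all}}^{\mathrm{loop}}$ is a polynomial in the entries of $\{R_Q^h,R_K^h,R_V^h\}_h$ and $U$. Because the $R_V^h$ are reused across layers, these polynomials are no longer multilinear, but they are still polynomials. Fix an $N\times N$ row selection and let $p(\omega)$ be the corresponding minor. If $p\not\equiv 0$, its zero set has Lebesgue measure zero, so under any absolutely continuous law $R_{\mathrm{all}}^{\mathrm{loop}}$ has rank $N$ almost surely. Everything therefore reduces to finding one tied assignment with full column rank.

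\textbf{Step 3: the witness (main obstacle).} The sparse looped construction of \Cref{thm:designed-R-looped} uses uniform blocks of width $\max\{\din,2d\}$. This wastes coordinates and so does not directly certify rank in dimension exactly $N$. The difficulty is that tied routing operators must act consistently at every depth even though the widths of the stored objects differ. I would sidestep this by building the witness from the columns outward rather than from blocks of $E$, using $0/1$ partial injections:
\begin{itemize}
\item Let $U$ select a fresh set $S_\phi$ of $\din$ coordinates.
\item Let $R_Q^h$ and $R_K^h$ select fresh sets $S^Q_{\phi;h}$ and $S^K_{\phi;h}$ of $d$ coordinates each.
\item Define each $R_V^h$ on coordinates. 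Whenever a set $S$ is attached to a word $p$ with $|p|<$ its allowed depth, $R_V^h$ maps $S$ bijectively onto a fresh set attached to the word $(h,p)$.
\end{itemize}
Then $R_pU$, $R_pR_Q^h$ and $R_pR_K^h$ are selectors of pairwise disjoint coordinate sets, so $R_{\mathrm{all}}^{\mathrm{loop}}$ is a permuted identity. It remains to check three things:
\begin{itemize}
\item Each $R_V^h$ is well defined, because distinct words receive disjoint source sets.
\item No fresh set is ever reused.
\item The total number of coordinates equals $N$.
\end{itemize}
Since $R_V^h$ may act arbitrarily (say, as zero) on sets that are never routed further, this is a counting exercise. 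I expect the main care to go into matching the depth cutoffs, since $Q/K$ chains stop one level earlier than $U$ chains. Once the witness is verified, $p\not\equiv 0$ and the theorem follows.
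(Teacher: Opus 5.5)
Your skeleton (stack the constraints as $E R_{\mathrm{all}}=W_{\mathrm{all}}$, reduce full column rank to a nonvanishing polynomial, exhibit one witness) is the paper's. Your suspicion in Step~1 is justified, so commit to it. The pairs $(p,h)$ with $|p|\le L-1$ are the $\sum_{t=1}^{L}H^t$ nonempty words of length at most $L$. So $R_{\mathrm{all}}$ has $N=2d\sum_{t=1}^{L}H^t+\din H^L$ columns, which exceeds the stated $M_{\mathrm{loop}}$ by $2d(H^L-1)$.

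For $H\ge 2$ the statement is false as written. Take $H=2$, $L=1$, $d=\din=1$, $m=4$. The vectors $E R_{\mathrm{all}}$ with $E\in\R^{1\times 4}$ form a subspace of $\R^{1\times 6}$ of dimension at most $4$. But the right-hand side $(W_Q^1,W_Q^2,W_K^1,W_K^2,W_V^1W_O^1,W_V^2W_O^2)$ ranges over all of $\R^{1\times 6}$.

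The paper's proof hides this by declaring $R_{\mathrm{all}}(U)\in\R^{m\times M_{\mathrm{loop}}}$. Its witness from \Cref{thm:designed-R-looped} also lacks disjoint supports: for $|p|=L-1$, the $Q$- and $K$-columns of $(p,h)$ sit inside the leaf block $B_{(p,h)}$, which is also the support of the $OV$ column for the path $(p,h)$. So you are right not to borrow it.

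The genuine gap is your Step~3 count, which does not come out to $N$. Your rule makes every set on $U$'s chain fresh, so the sets carrying $R_qU$ for $|q|<L$ consume $\din\sum_{t=0}^{L-1}H^t$ coordinates. Yet they are not columns of $R_{\mathrm{all}}$; only $R_pU$ with $|p|=L$ are. Your witness therefore lives in dimension $N+\din\sum_{t=0}^{L-1}H^t$. For $N\le m<N+\din\sum_{t=0}^{L-1}H^t$ the polynomial argument has no witness, so as written you only prove the result for the larger $m$.

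The missing observation is that routing waypoints need not be fresh; they only have to keep each $R_V^h$ well defined. For $H\ge 2$:
\begin{itemize}
\item fix an injection $\sigma$ from the words of length less than $L$ into $[H]^L$ (possible since $\sum_{t<L}H^t<H^L$);
\item let $U$ select the leaf set $C_{\sigma(\phi)}$;
\item let $R_V^h$ map $C_{\sigma(q)}$ onto $C_{\sigma((h,q))}$ if $|q|<L-1$, and onto $C_{(h,q)}$ if $|q|=L-1$;
\item let $R_V^h$ be zero on all other leaf coordinates.
\end{itemize}
For $H=1$, let $R_V$ act as the identity on the single leaf set. Keep your $Q/K$ chains unchanged. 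Then only the $N$ constraint supports are used and $R_{\mathrm{all}}$ is a permuted identity. Zero-padding handles $m>N$, which proves the corrected statement for all $m\ge N$.
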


\begin{proof}
We will show that, for almost every draw of the tied random backbone $\{R_V^h,R_Q^h,R_K^h\}_{h\in[H]}$, there exists at least one $U$ for which the stacked linear system has full column rank. We may then solve for $E$ by pseudoinverse.

\textbf{Stack all constraints into one linear system.}
Introduce an auxiliary matrix $U\in\R^{m\times\din}$, to be chosen later. Define
\[
R_{\mathrm{all}}(U)
:=
\Bigl[\;
\{\, R_p R_Q^h \,\}_{\substack{p\in[H]^t,\ 0\le t\le L-1\\ h\in[H]}}
\;\;\;
\{\, R_p R_K^h \,\}_{\substack{p\in[H]^t,\ 0\le t\le L-1\\ h\in[H]}}
\;\;\;
\{\, R_p U \,\}_{p\in[H]^L}
\;\Bigr]
\in \R^{m\times M_{\mathrm{loop}}},
\]
where the blocks are concatenated horizontally in any fixed order.

Likewise define
\[
W_{\mathrm{all}}
:=
\Bigl[\;
\{\, M_p W_Q^h \,\}_{\substack{p\in[H]^t,\ 0\le t\le L-1\\ h\in[H]}}
\;\;\;
\{\, M_p W_K^h \,\}_{\substack{p\in[H]^t,\ 0\le t\le L-1\\ h\in[H]}}
\;\;\;
\{\, M_p \,\}_{p\in[H]^L}
\;\Bigr]
\in \R^{\din\times M_{\mathrm{loop}}}.
\]
Then \eqref{eq:looped-random-Q}--\eqref{eq:looped-random-OV} are equivalent to
\begin{equation}
E\,R_{\mathrm{all}}(U) = W_{\mathrm{all}}.
\label{eq:looped-random-stacked}
\end{equation}
Therefore, if $R_{\mathrm{all}}(U)$ has full column rank $M_{\mathrm{loop}}$, then
\[
E = W_{\mathrm{all}}\,R_{\mathrm{all}}(U)^\dagger
\]
is a solution of \eqref{eq:looped-random-stacked}.

\textbf{Reduce full column rank to a nonvanishing polynomial.}
Let $x$ denote the collection of entries of all tied random matrices
\[
\{R_V^h,R_Q^h,R_K^h\}_{h\in[H]}
\]
together with the entries of $U$. Every entry of $R_{\mathrm{all}}(U)$ is a polynomial in $x$, since each block is a product of matrices of the form $R_p R_Q^h$, $R_p R_K^h$, or $R_p U$.

Let $\mathcal I$ be the collection of all $M_{\mathrm{loop}}$-subsets of rows of $[m]$. For each $I\in\mathcal I$, let $R_{\mathrm{all}}(U)[I]$ denote the corresponding $M_{\mathrm{loop}}\times M_{\mathrm{loop}}$ submatrix. Define
\[
p(x)
:=
\sum_{I\in\mathcal I}
\det\!\bigl(R_{\mathrm{all}}(U)[I]\bigr)^2.
\]
Then $p(x)=0$ if and only if every $M_{\mathrm{loop}}\times M_{\mathrm{loop}}$ minor vanishes, i.e.\ if and only if
\[
\rank\bigl(R_{\mathrm{all}}(U)\bigr) < M_{\mathrm{loop}}.
\]
Thus it suffices to show that $p$ is not the zero polynomial.

\textbf{One tied assignment with full column rank.}
We use the explicit tied construction from \Cref{thm:designed-R-looped}. In that construction, the coordinates are partitioned into disjoint blocks $\{B_p\}_{p\in\cup_{t=0}^L[H]^t}$ of width $\din$, and the tied routing maps satisfy
\[
R_V^h B_p = B_{(p,h)}
\qquad\text{for every }|p|<L.
\]
Moreover,
\[
U = B_\phi,\qquad
R_Q^h = R_V^h U_Q,\qquad
R_K^h = R_V^h U_K,
\]
where $U_Q,U_K\in\R^{\din\times d}$ select the first and last $d$ coordinates of a block, respectively.

For this tied assignment, one has
\[
R_p U = B_p \quad\text{for every }p\in[H]^L,
\]
and, for every prefix $p$ with $|p|<L$,
\[
R_p R_Q^h = B_{(p,h)} U_Q,
\qquad
R_p R_K^h = B_{(p,h)} U_K.
\]
Hence each block-column of $R_{\mathrm{all}}(U)$ is supported on a unique coordinate block:
the $Q$-constraints occupy the first $d$ coordinates of the block $B_{(p,h)}$,
the $K$-constraints occupy the last $d$ coordinates of the block $B_{(p,h)}$,
and the $OV$-constraints occupy the full leaf block $B_p$.

These supports are pairwise disjoint. Therefore, after a suitable row permutation,
\[
P\,R_{\mathrm{all}}(U)
=
\diag\bigl(
I_d,\dots,I_d,\;
I_d,\dots,I_d,\;
I_{\din},\dots,I_{\din}
\bigr),
\]
where the first group of $I_d$'s corresponds to all $Q$-constraints, the second group to all $K$-constraints, and the final group to all $I_{\din}$'s to the $OV$-constraints. In particular,
\[
\rank\bigl(R_{\mathrm{all}}(U)\bigr) = M_{\mathrm{loop}}.
\]
Thus $p$ is not the zero polynomial.

\textbf{Full column rank almost surely.}
Now draw the tied random backbone
\[
\{R_V^h,R_Q^h,R_K^h\}_{h\in[H]}
\]
from the distributions in the statement, and independently draw an auxiliary matrix $U$ with an absolutely continuous distribution on $\R^{m\times\din}$ (for example, Gaussian). Since $p$ is a nonzero polynomial in jointly continuous random variables, its zero set has Lebesgue measure zero. Therefore,
\[
\rank\bigl(R_{\mathrm{all}}(U)\bigr)=M_{\mathrm{loop}}
\qquad\text{with probability one}
\]
under the joint draw of the tied random backbone and the auxiliary $U$.

By Fubini's theorem, for almost every realization of the tied random backbone alone, the set of $U$ for which $\rank(R_{\mathrm{all}}(U))=M_{\mathrm{loop}}$ has full measure in $\R^{m\times\din}$; in particular, it is nonempty. Hence, with probability one over the tied random backbone, there exists at least one $U$ such that $R_{\mathrm{all}}(U)$ has full column rank. Fix such a $U$, and define
\[
E = W_{\mathrm{all}}\,R_{\mathrm{all}}(U)^\dagger.
\]
Then \eqref{eq:looped-random-stacked} holds, and therefore so do \eqref{eq:looped-random-Q}--\eqref{eq:looped-random-OV}.
\end{proof}

\section{Lower bound on embedding dimension}
\label{app:lower_bound}

We prove a lower bound for our approach even when all the attention matrices are matched. 

\begin{assumption}
    For each attention head in the random/designed transformer, the attention matrix is equal to that of the target transformer, i.e. $\tilde A^{l,h}=A^{l,h}$, and $\forall l\in [L], h\in [H], A^{l,h}$ is arbitrary row-stochastic matrix, i.e. each row of $A^{l,h}$ sum up to one. 
\end{assumption}

\begin{theorem}[Lower bound assuming matching attention]
\label{thm:lower_bound_matching_attention}
    Let $\din = 1$. If there exist matrices $\{R_V^{l,h} \in\R^{m\times m}\}_{l\in[L],h\in[H]}$, for any target transformer with $\{A^{l, h}, W_V^{l, h}, W_O^{l, h}\}_{l\in[L],h\in[H]}$, there exist $E\in\R^{\din\times m}$ and $U\in\R^{m\times\din}$ such that for any $X^0\in\R^{n\times\din}$, the output
    $Z=\tilde X^L U$ of the constructed transformer equals the target output $Y=X^L$, then $m\geq H^{L/2}$.
\end{theorem}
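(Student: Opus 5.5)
The plan is to reduce the statement to a dimension count. We isolate the coefficient attached to each of the $H^L$ attention paths and observe that the universal side can only produce these coefficients through a bilinear form in $(E,U)$. The target side, by contrast, must realize every rank-one tensor of path weights.

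\textbf{Step 1: path expansion with scalar weights.} Since $\din=1$, each $c^{l,h}:=W_V^{l,h}W_O^{l,h}$ is a scalar, and $c^{l,h}$ ranges over all of $\R$ as the target varies (take $d\ge 1$, $W_V^{l,h}=c^{l,h}e_1^\T$, $W_O^{l,h}=e_1$). By \Cref{lem:multiunroll}, with the attention matrices shared, the target output is
\[
Y=\sum_{\mathbf{h}\in[H]^L} A^{L,h_L}\cdots A^{1,h_1}X^0\prod_{l=1}^L c^{l,h_l}.
\]
The universal output is
\[
Z=\sum_{\mathbf{h}\in[H]^L} A^{L,h_L}\cdots A^{1,h_1}X^0\,\bigl(E R_V^{1,h_1}\cdots R_V^{L,h_L}U\bigr),
\]
where $E\in\R^{1\times m}$ is a row vector $e^\T$ and $U\in\R^{m\times 1}$ is a column vector $u$.

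\textbf{Step 2: separating the paths.} This is the step I expect to need the most care. We must choose $n$, row-stochastic $A^{l,h}$, and inputs $X^0$ so that $Y=Z$ forces equality of every individual path coefficient. I will use $0/1$ row-stochastic matrices realizing deterministic maps, since $(AX)_i=X_{\sigma(i)}$ when row $i$ of $A$ is $e_{\sigma(i)}^\T$.
\begin{itemize}
\item Index tokens by the nodes of the depth-$L$ $H$-ary tree, so $n=\sum_{t=0}^L H^t$.
\item Let $A^{l,h}$ send each node at depth $L-l$ to its $h$-th child, and send every other node to itself.
\item With this choice, $A^{L,h_L}\cdots A^{1,h_1}X^0$ evaluated at the root equals $X^0_{\mathrm{leaf}(\mathbf{h})}$, and distinct paths $\mathbf{h}$ give distinct leaves.
\end{itemize}
Taking $X^0$ to be the indicator of one leaf at a time, the root coordinate of $Y=Z$ yields
\[
e^\T R_V^{1,h_1}\cdots R_V^{L,h_L}u=\prod_{l} c^{l,h_l}\qquad\text{for every }\mathbf{h}\in[H]^L.
\]
Thus, for every choice of $c^1,\dots,c^L\in\R^H$, the rank-one tensor $c^1\otimes\cdots\otimes c^L$ must lie in the image of the map
\[
\Phi:(e,u)\mapsto\bigl(e^\T R_{\mathbf{h}}u\bigr)_{\mathbf{h}\in[H]^L},\qquad R_{\mathbf{h}}:=R_V^{1,h_1}\cdots R_V^{L,h_L}.
\]

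\textbf{Step 3: dimension count.} The map $\Phi$ factors as $(e,u)\mapsto eu^\T\mapsto(\langle R_{\mathbf{h}},eu^\T\rangle)_{\mathbf{h}}$, and the second arrow is linear on $\R^{m\times m}$. Hence the image of $\Phi$ lies in a linear subspace of $\R^{H^L}$ of dimension at most $m^2$. Rank-one tensors span $\R^{H^L}$ (standard basis tensors are rank one), so $m^2\ge H^L$, i.e.\ $m\ge H^{L/2}$.

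For the looped variant the same argument applies verbatim with $R_V^{l,h}$ replaced by $R_V^h$. Taking all layers equal in the looped target, the weights are $\prod_l c^{h_l}$ with a single vector $c$, so the target tensors are $c^{\otimes L}$ for arbitrary $c$. These symmetric tensors span only the symmetric subspace, which would give a weaker count. I therefore plan to check that the looped class in the statement still allows enough freedom, or else to settle for the bound obtained from the span of $\{c^{\otimes L}\}$.
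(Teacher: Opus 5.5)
Your proof is correct. Step 3 coincides with the paper's second lemma: its left-nullspace vector $y$ and the choice $W^{l,h}=\mathbf{1}[h=h_l^*]$ amount to your observation that the standard basis tensors cannot all lie in a subspace of dimension at most $m^2$. Step 2, however, separates the paths differently.

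The paper fixes $n=2$ and $X^0=(1,0)^\top$ and varies the attention matrices. For each path $s$ it takes $A^{l,h}=I$ if $h=s_l$ and the absorbing matrix $B=\begin{pmatrix}0&1\\0&1\end{pmatrix}$ otherwise, so that $(A^{L,h_L}\cdots A^{1,h_1})_{11}=\mathbf{1}[\mathbf{h}=s]$. From the spanning of these vectors it concludes that every $\alpha_{\mathbf{h}}=\prod_l W_V^{l,h_l}W_O^{l,h_l}-ER_V^{1,h_1}\cdots R_V^{L,h_L}U$ vanishes. You instead fix one tree-routing configuration on $\sum_{t=0}^{L}H^t$ tokens and vary only $X^0$ over leaf indicators.

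The paper's route keeps the context length at $2$. But it holds $\alpha$ fixed while the attention matrices change, so it implicitly needs $E,U$ not to depend on $\{A^{l,h}\}$. That matches the main-text formulation, where attention is an input. In the statement as written here, $E$ and $U$ are chosen after the entire target, attention included. Your argument uses a single attention configuration per target and varies only $X^0$, which is quantified after $E,U$. So all $H^L$ path identities hold for the same $(e,u)$, and you prove the statement with its literal quantifiers, at the cost of contexts of length $\sum_{t=0}^{L}H^t$ instead of $2$.

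On the looped aside, which is not part of this statement: your doubt is justified, and you should not try to recover $H^{L/2}$ there. Suppose the targets are themselves weight-tied. Identify $\R^m$ with the space of polynomials of degree at most $L$ in $x_1,\dots,x_H$, so $m=\binom{H+L}{L}$. Let $R_V^h$ be multiplication by $x_h$ (truncated above degree $L$), $U$ the constant polynomial $1$, and $E$ evaluation at $c=(c^1,\dots,c^H)$. Then $ER_V^{h_1}\cdots R_V^{h_L}U=\prod_l c^{h_l}$ for every path. Hence the looped shared-attention model is universal for tied targets with $m$ polynomial in $L$ for fixed $H$, so no $H^{L/2}$ bound can hold there.
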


\begin{proof}
    We prove by contradiction. Recall that 
    \begin{align*}
    X^L & = \sum_{(h_1,\dotsc,h_L) \in \sbr{H}^L} A^{L,h_L} \dotsm A^{1,h_1} X^0 W_V^{1,h_1} W_O^{1,h_1} \dotsm W_V^{L,h_L} W_O^{L,h_L} \\
    \tilde X^L & = \sum_{(h_1,\dotsc,h_L) \in \sbr{H}^L} \tilde A^{L,h_L} \dotsm \tilde A^{1,h_1} X^0 E R_V^{1,h_1} \dotsm R_V^{L,h_L} 
  \end{align*}
  Assuming the attention matrices are the same, for $\tilde X^LU=X^L$, 
  \begin{equation} \label{eq:lower_bound_matching_term}
      \sum_{(h_1,\dotsc,h_L) \in \sbr{H}^L} A^{L,h_L} \dotsm A^{1,h_1} X^0 \left(\prod_{l=1}^L W_V^{l, h_l}W_O^{l, h_l} - E \prod_{l=1}^L R_V^{l, h_l}U\right)=0
  \end{equation}
  Let $\textbf{h}:=(h_1, \ldots, h_L)$ and 
  \begin{align*}
    \alpha_{\textbf{h}} &:=\left(\prod_{l=1}^L W_V^{l, h_l}W_O^{l, h_l} - E \prod_{l=1}^L R_V^{l, h_l}U\right) \\
    A_{\textbf{h}} &:= A^{L,h_L} \dotsm A^{1,h_1} = \begin{pmatrix}
        (A_{\textbf{h}})_{11} & (A_{\textbf{h}})_{12} \\
        (A_{\textbf{h}})_{21} & (A_{\textbf{h}})_{22}
    \end{pmatrix}
  \end{align*}
  Consider the context length $n=2$ and $X^0=\begin{pmatrix}
      1 \\
      0
  \end{pmatrix}$. Then, $A_{\textbf{h}} X^0 = \begin{pmatrix}
      (A_{\textbf{h}})_{11} \\
      (A_{\textbf{h}})_{21}
  \end{pmatrix}$ and \Cref{eq:lower_bound_matching_term} is a linear combination of $\{A_{\textbf{h}} X^0\}_{\textbf{h}}$. Specifically, let $\alpha:=(\dotsm, \alpha_{\textbf{h}}, \dotsm)_{\textbf{h}\in [H]^L} \in \R^{H^L}$ and $v(\{A_{i,h}\}):=(\dotsm, (A_{\textbf{h}})_{11}, \dotsm)_{\textbf{h}\in [H]^L}\in \R^{H^L}$, be the vector indexed by $(h_1,\dots,h_L)\in \mathbb{R}^{H^L}$. The first row of \Cref{eq:lower_bound_matching_term} is $v(\{A_{i,h}\}) \cdot \alpha$. We will show that if $v(\{A_{i,h}\}) \cdot \alpha=0$ for all sets of attention matrices $\{A^{l. h}\}_{l\in [L], h\in [H]}$, then $\alpha$ must be the $0$ vector, by showing $v(\{A_{i,h}\})$ can span the entire $\R^{H^L}$ space. 

  \begin{lemma}
    Let $v(\{A_{i,h}\}) \in \mathbb{R}^{H^L}$ be the vector indexed by \(h=(h_1,\dots,h_L)\in[H]^L\), whose \(h\)-th coordinate is
    \[
    v(\{A_{i,h}\})_{h_1,\dots,h_L}
    := \bigl(A_{1,h_1}A_{2,h_2}\cdots A_{L,h_L}\bigr)_{1,1},
    \]
    where each \(A_{i,h}\) is a row-stochastic matrix. Then, the span of all such vectors is all of \(\mathbb{R}^{H^L}\).
  \end{lemma}

  \begin{proof}
    It suffices to show that every standard basis vector in \(\mathbb{R}^{H^L}\) can be realized.
    Fix any sequence
    \[
    s=(s_1,\dots,s_L)\in[H]^L.
    \]
    We construct \(2\times 2\) row-stochastic matrices \(\{A_{i,h}^{(s)}\}_{i\in[L],\,h\in[H]}\) such that the resulting vector \(v^{(s)}\) satisfies
    \[
    v^{(s)}_{h_1,\dots,h_L}
    =
    \mathbf{1}\bigl[(h_1,\dots,h_L)=s\bigr].
    \]
    This will imply that \(v^{(s)}\) is exactly the standard basis vector \(e_s\).
    
    Consider the two \(2\times 2\) row-stochastic matrices
    \[
    I=
    \begin{pmatrix}
    1 & 0\\
    0 & 1
    \end{pmatrix},
    \qquad
    B=
    \begin{pmatrix}
    0 & 1\\
    0 & 1
    \end{pmatrix}.
    \]
    For the fixed sequence \(s\), define
    \[
    A_{i,h}^{(s)}=
    \begin{cases}
    I, & h=s_i,\\[4pt]
    B, & h\neq s_i.
    \end{cases}
    \]
    Now let \(h=(h_1,\dots,h_L)\in[H]^L\). If \(h=s\), then every factor equals \(I\), and hence
    \[
    A_{1,h_1}^{(s)}A_{2,h_2}^{(s)}\cdots A_{L,h_L}^{(s)} = I,
    \]
    so
    \[
    v^{(s)}_{h_1,\dots,h_L}
    =
    \bigl(A_{1,h_1}^{(s)}A_{2,h_2}^{(s)}\cdots A_{L,h_L}^{(s)}\bigr)_{1,1}
    =
    I_{1,1}
    =
    1.
    \]
    
    On the other hand, if \(h\neq s\), then there exists some \(i\in[L]\) such that \(h_i\neq s_i\), so the product contains at least one factor equal to \(B\). Observe that
    \[
    IB=B,\qquad BI=B,\qquad BB=B.
    \]
    Therefore, once a factor \(B\) appears in the product, the entire product is equal to \(B\). Consequently,
    \[
    v^{(s)}_{h_1,\dots,h_L}
    =
    \bigl(A_{1,h_1}^{(s)}A_{2,h_2}^{(s)}\cdots A_{L,h_L}^{(s)}\bigr)_{1,1}
    =
    B_{1,1}
    =
    0.
    \]
    Thus,
    \[
    v^{(s)}_{h_1,\dots,h_L}
    =
    \mathbf{1}\bigl[(h_1,\dots,h_L)=s\bigr],
    \]
    so indeed \(v^{(s)}=e_s\).
    
    Since \(s\in[H]^L\) was arbitrary, every standard basis vector \(e_s\) belongs to the set of realizable vectors. Hence these vectors span all of \(\mathbb{R}^{H^L}\), and the span has full dimension \(H^L\).
  \end{proof}

If there exist $\alpha_\textbf{h} \neq 0$, then picking the $A_{i,h}^{(\textbf{h})}$ results in $v(\{A_{i,h}\}) \cdot \alpha \neq 0$.
Therefore, each term $\alpha_{\textbf{h}}=0$ for all $\textbf{h}\in [H]^L$. Next, we show that if $m<H^{L/2}$, then $\alpha$ can not be $0$.

\begin{lemma}
  If $m < H^{L/2}$, then for each collection of $m \times m$ matrices $\set{R^{l,h}}_{(l,h) \in [L] \times [H]}$, there exists a collection of $\din \times \din$ matrices $\set{W^{l,h}}_{(l,h) \in [L] \times [H]}$ such that, for each pair of $\din \times m$ matrices $E, U$, there exists $(h_1,\dotsc,h_L) \in [H]^L$ such that
  \begin{equation*}
    E \prod_{l=1}^L R^{l,h_l} U^\T \neq \prod_{l=1}^L W^{l,h_l}
    .
  \end{equation*}
\end{lemma}

\begin{proof}
  Fix a collection of $m \times m$ matrices $\set{R^{l,h}}_{(l,h) \in [L] \times [H]}$.
  Consider the system of $H^L$ linear equations $Ax = b$ in $m^2$ unknowns $x$, where the equation corresponding to $(h_1,\dotsc,h_L) \in [H]^L$ is
  \begin{equation*}
    \tr\del*{\prod_{l=1}^L R^{l,h_l} \mat(x)} = \prod_{l=1}^L W^{l,h_l}
    .
  \end{equation*}
  Here, $\mat(x)$ reshapes $x$ to a $m \times m$ matrix.
  The matrix $A$ is determined by $\set{R^{l,h}}_{(l,h) \in [L] \times [H]}$; below, we show how to choose $\set{W^{l,h}}_{(l,h) \in [L] \times [H]}$, which in turn determines the right-hand side vector $b$.

  The rank of $A$ is at most the number of columns, which is $m^2$; that number is at most $H^L - 1$ by the assumption $m < H^{L/2}$.
  So by the rank-nullity theorem, the left nullspace of $A$ has dimension at least $1$.
  Let $y$ be a nonzero vector in the left nullspace of $A$, and let $y_{h_1^*,\dotsc,h_L^*}$ be a nonzero component of $y$.
  For each $(l,h) \in [L] \times [H]$, choose
  \begin{equation*}
    W^{l,h} :=
    \begin{cases}
      1 & \text{if $h = h_l^*$} ; \\
      0 & \text{otherwise} ;
    \end{cases}
  \end{equation*}
  and let $b$ be the corresponding right-hand side vector.
  We have
  \begin{equation*}
    \ip{y,b}
    = \sum_{(h_1,\dotsc,h_L) \in [H]^L} y_{h_1,\dotsc,h_L} \times \prod_{l=1}^L W^{l,h_l}
    = y_{h_1^*,\dotsc,h_L^*} \times 1
    \neq 0 .
  \end{equation*}
  Therefore, $b$ has a nonzero component in this left nullspace, which implies that $b$ is not in the column space of $A$, i.e., $Ax = b$ has no solution.
  Since $U^\T E$ is a $m \times m$ matrix, it follows that there exists some $(h_1,\dotsc,h_L) \in [H]^L$ such that
  \begin{equation*}
    E \prod_{l=1}^L R^{l,h_l} U^\T = 
    \tr\del*{\prod_{l=1}^L R^{l,h_l} U^\T E} \neq \prod_{l=1}^L W^{l,h_l} 
  \end{equation*}
\end{proof}
The proof of \Cref{thm:lower_bound_matching_attention} follows from this. 
\end{proof}

The lower bound also works for the weight-tying (looped) transformer, by choosing 
$$ W^{h} :=
\begin{cases}
  1 & \text{if $h = h_l^*$} ; \\
  0 & \text{otherwise} .
\end{cases}
$$

\section{Additional ablation studies}

Besides parenthesis balancing and $2$-hop induction heads, we also perform ablation studies on
$1$-hop, $3$-hop, and $4$-hop induction heads tasks. These experiments follow the
same protocol as in the parenthesis balancing and $2$-hop induction heads: for the sparse and random universal transformers,
only the embedding and unembedding matrices are trained, while all internal attention parameters
are kept fixed. When MLP layers are added, they are randomly initialized and also kept fixed. The results are shown in \Cref{tab:induction_head_1hop,tab:induction_head_3_4_hop}. These results are consistent with \Cref{tab:induction_head_2hop}, that while it's hard for the attention-only universal transformers to find the optimal solution, adding residual connections and layer normalization can make the optimization much easier. 

\begin{table}[h]
\centering
\caption{Induction head (1-hop) results for $H=2$, $L=2$, head\_dim $=28$, embedding\_dim $=456$.}
\label{tab:induction_head_1hop}
\begin{tabular}{lccc}
\toprule
Model & Sparse & Random & Fully-trained \\
\midrule
Attention-only & 30.2\% & 29.4\% & 8.4\% \\
+ residual & 42.1\% & 5.5\% & 67.5\% \\
+ residual and LN & 100\% & 96.1\% & 100\% \\
+ MLP & 61.0\% & 51.3\% & 100\% \\
+ MLP and LN & 100\% & 96.0\% & 100\% \\
\bottomrule
\end{tabular}
\end{table}

\begin{table}[h]
\centering
\caption{3-hop and 4-hop results for $H=2$, $L=4$, head\_dim $=30$, embedding\_dim $=2280$.}
\label{tab:induction_head_3_4_hop}

\textbf{3-hop}

\begin{tabular}{lccc}
\toprule
Model & Sparse & Random & Fully-trained \\
\midrule
Attention-only & 39.2\% & 32.6\% & 99.5\% \\
+ residual & 53.6\% & 43.9\% & 99.9\% \\
+ residual and LN & 100\% & 92.6\% & 100\% \\
+ MLP & 56.9\% & 57.5\% & 100\% \\
+ MLP and LN & 100\% & 98.7\% & 100\% \\
\bottomrule
\end{tabular}

\vspace{1em}

\textbf{4-hop}

\begin{tabular}{lccc}
\toprule
Model & Sparse & Random & Fully-trained \\
\midrule
Attention-only & 39.4\% & 35.2\% & 95.6\% \\
+ residual & 43.9\% & 41.6\% & 97.9\% \\
+ residual and LN & 100\% & 89.2\% & 100\% \\
+ MLP & 44.7\% & 48.8\% & 97.4\% \\
+ MLP and LN & 82.8\% & 70.9\% & 100\% \\
\bottomrule
\end{tabular}

\end{table}

\section{Experimental details}

\subsection{Data generation}

\paragraph{Parenthesis balancing.}
The data is generated following the same way as \citep{zhong2024algorithmic}.
We generate sequences over a four-symbol vocabulary consisting of padding/EOF, left parenthesis, right parenthesis, and a query token, encoded as $0,1,2,3$, respectively. Each example is constructed by first sampling a parenthesis string of length at most $2\,\texttt{max\_len}$. With probability $1/3$, we sample a completely random parenthesis string of uniformly random length; otherwise, we recursively generate a balanced parenthesis string with a uniformly sampled number of matching pairs. To obtain both positive and negative instances, the sampled string is optionally corrupted by local mutations. These mutations include swapping two positions in the string and flipping parentheses from left to right or right to left, with a geometrically distributed number of repetitions. After the parenthesis string $s$ is generated, we compute whether it is balanced using the standard stack-depth criterion: the running depth must never become negative and must end at zero. We then append a query token ``?'' followed by the target answer token. The answer is ``)'' if $s$ is balanced and ``('' otherwise. Finally, the sequence is padded with zeros to a fixed length $2\,\texttt{max\_len}+3$. The model is trained in a next-token prediction format: for each padded sequence, the input is the sequence excluding the final token, and the target is the same sequence shifted one position to the left.

\paragraph{$k$-hop induction heads.} For this task, we generate the data following \citep{sanford2024}. We generate sequence-modeling examples over a vocabulary consisting of character tokens, induction-query tokens, a blank token, and a special ``does-not-exist'' token. The character vocabulary contains \texttt{char\_tokens} symbols, denoted by consecutive lowercase letters. For each example, we first sample a random character string $x_1,\ldots,x_{\texttt{seq\_len}}$, where the first character is sampled uniformly and each subsequent character is sampled uniformly subject to being different from the previous character. This prevents trivial adjacent repetitions. We then construct a sequence of induction-hop targets. The $0$-hop string is the original string itself. Given the current hop string and its associated source indices, the next-hop string is obtained positionwise as follows: for position $i$, we look backward in the original random string for the most recent previous occurrence of the current symbol before its current source index. If such an occurrence exists, the next-hop target is the character immediately following that occurrence; otherwise, the target is the special ``does-not-exist'' symbol. Repeating this procedure produces targets for all hop depths up to \texttt{max\_hops}. For each training example, the queried hop depth is either fixed or sampled uniformly from $\{\texttt{min\_hops},\ldots,\texttt{max\_hops}\}$ and encoded as a leading induction-query token. The model input consists of this query token followed by the random character string with the last two characters removed. The output is a blank token followed by the corresponding hop-target string, also truncated by two characters. Thus the task is trained in a full-sequence prediction format: after reading the hop query and the character context, the model must output, at every position, the result of applying the specified number of induction steps to the underlying sequence.

\subsection{Training details}

All the experiments are conducted on one single NVIDIA RTX PRO 6000 Blackwell 96GB. This project roughly takes 14 GPU days. 

The embedding and unembedding matrices are initialized from standard normal distribution with standard deviation $0.02$. We use AdamW as optimizer and LambdaLR as learning rate scheduler. We adopt online training, i.e. at each step we generate fresh data. 

The hyperparameters for parenthesis balancing and $k$-hop induction heads task can be found in \Cref{tab:hyperparameters_parenthesis_balancing,tab:hyperparameters_k-hop}. For $k=1, 2, 3, 4$, the training steps are $10000, 15000, 60000, 60000$ respectively.

\begin{table}[h]
\centering
\begin{tabular}{ll}
\toprule
\textbf{Hyperparameter} & \textbf{Value} \\
\midrule
Embedding dimension $m$ & $1024$ \\
Head dimension $d$ & $24$ \\
Depth $L$ & $2$ \\
Number of heads $H$ & $4$ \\
Vocabulary size & $4$ \\
Activation function of MLP & GeLU \\
Layer norm $\epsilon$ & $10^{-5}$ \\
Learning rate & $10^{-3}$ \\
Warmup steps & $50$ \\
Training steps & $10^4$ \\
Batch size & $1000$ \\
\bottomrule
\end{tabular}
\caption{Model and training hyperparameters for parenthesis balancing.}
\label{tab:hyperparameters_parenthesis_balancing}
\end{table}

\begin{table}[h]
\centering
\begin{tabular}{ll}
\toprule
\textbf{Hyperparameter} & \textbf{Value} \\
\midrule
Number of heads $H$ & $2$ \\
Number of character tokens & $4$ \\
Vocabulary size & $30$ \\
Activation function of MLP & GeLU \\
Layer norm $\epsilon$ & $10^{-5}$ \\
Learning rate & $10^{-4}$ \\
Warmup steps & $1000$ \\
Batch size & $128$ \\
\bottomrule
\end{tabular}
\caption{Model and training hyperparameters for $k$-hop induction heads.}
\label{tab:hyperparameters_k-hop}
\end{table}